\newcommand{\acdef}[1]{\textit{\acl{#1}} \textup{(\acs{#1})}\acused{#1}}		
\colorlet{MyRed}{Crimson!75!Black}
\colorlet{MyBlue}{MediumBlue}
\colorlet{MyGreen}{DarkGreen!80!Black}
\newcommand{\afterhead}{.}
\newcommand{\ackperiod}{}		
\newcommand{\para}[1]{\medskip\paragraph{\textbf{#1\afterhead}}}
\newcommand{\citef}[2][]{\citeauthor{#2} \cite[#1]{#2}}
\crefname{algorithm}{Alg.}{Algs.}
\theoremstyle{plain}
\newtheorem{corollary}{Corollary}		
\newtheorem{lemma}{Lemma}		
\newtheorem{proposition}{Proposition}		
\newtheorem*{corollary*}{Corollary}		
\theoremstyle{definition}
\newtheorem{definition}{Definition}		
\newtheorem{assumption}{Assumption}		
\newtheorem{example}{{\raisebox{1pt}{\small$\blacktriangleright$}} Example}		
\newtheorem*{definition*}{Definition}		
\newtheorem*{assumption*}{Assumptions}		
\newtheorem*{example*}{Example}		
\theoremstyle{remark}
\newtheorem{remark}{Remark}		
\newtheorem*{remark*}{Remark}		
\def\endenv{\hfill\raisebox{1pt}{\small$\blacktriangleleft$}}
\newcounter{proofpart}
\newcommand{\debug}[1]{#1}		
\newcommand{\newmacro}[2]{\newcommand{#1}{\debug{#2}}}		
\newcommand{\newop}[2]{\DeclareMathOperator{#1}{\debug{#2}}}		
\DeclarePairedDelimiter{\bracks}{[}{]}		
\DeclarePairedDelimiter{\parens}{(}{)}		
\DeclarePairedDelimiter{\abs}{\lvert}{\rvert}		
\DeclarePairedDelimiter{\ceil}{\lceil}{\rceil}		
\DeclarePairedDelimiterX{\inner}[2]{\langle}{\rangle}{#1,#2}		
\DeclarePairedDelimiter{\norm}{\lVert}{\rVert}		
\DeclarePairedDelimiterXPP{\dnorm}[1]{}{\lVert}{\rVert}{_{\ast}}{#1}		
\DeclarePairedDelimiterXPP{\tvnorm}[1]{}{\lVert}{\rVert}{_{\mathrm{TV}}}{#1}		
\DeclarePairedDelimiterXPP{\onenorm}[1]{}{\lVert}{\rVert}{_{1}}{#1}		
\DeclarePairedDelimiterXPP{\twonorm}[1]{}{\lVert}{\rVert}{_{2}}{#1}		
\DeclarePairedDelimiterXPP{\supnorm}[1]{}{\lVert}{\rVert}{_{\infty}}{#1}		
\DeclarePairedDelimiterX{\braket}[2]{\langle}{\rangle}{#1,#2}		
\DeclarePairedDelimiterX{\setdef}[2]{\{}{\}}{#1:#2}		
\DeclarePairedDelimiterX{\window}[2]{[}{]}{#1\,.\,.\,#2}		
\DeclarePairedDelimiterXPP{\exclude}[1]{\mathopen{}\setminus}{\{}{\}}{}{#1}
\newcommand{\alt}[1]{#1'}		
\newcommand{\R}{\mathbb{R}}		
\DeclareMathOperator*{\argmax}{arg\,max}		
\DeclareMathOperator*{\argmin}{arg\,min}		
\DeclareMathOperator{\bigoh}{\mathcal{O}}		
\DeclareMathOperator{\diam}{diam}		
\DeclareMathOperator{\dif}{D}		
\DeclareMathOperator{\dist}{dist}		
\newop{\dom}{dom}		
\DeclareMathOperator{\one}{\mathds{1}}		
\DeclareMathOperator{\supp}{supp}		
\DeclareMathOperator{\unif}{unif}		
\newcommand{\subs}{\leftarrow}      
\newmacro{\coef}{\alpha}		
\newmacro{\dd}{\:d}		
\newcommand{\eps}{\varepsilon}		
\newcommand{\insum}{\sum\nolimits}		
\newcommand{\cf}{cf.\xspace}		
\newcommand{\eg}{e.g.,\xspace}		
\newcommand{\ie}{i.e.,\xspace}		
\newcommand{\vs}{vs.\xspace}		
\newcommand{\textpar}[1]{\textup(#1\textup)}		
\newcommand{\txs}{\textstyle}		
\newcommand{\from}{\colon}		
\newcommand{\defeq}{\coloneqq}		
\newcommand{\eqdef}{\eqqcolon}		
\newmacro{\set}{\mathcal{S}}		
\newmacro{\points}{\mathcal{K}}		
\newmacro{\intpoints}{\points^{\circ}}		
\newmacro{\point}{x}		
\newmacro{\pointalt}{\alt\point}		
\newmacro{\primal}{p}		
\newmacro{\proxal}{q}		
\newmacro{\primalt}{q}		
\newmacro{\dpoints}{\mathcal{Y}}		
\newmacro{\dpoint}{y}		
\newmacro{\dpointalt}{\alt\dpoint}		
\newmacro{\base}{p}		
\newmacro{\basealt}{q}		
\newcommand{\bench}{\pi^{\ast}}		
\newcommand{\benchdist}{\pdist^{\ast}}		
\newmacro{\pointest}{\point}		
\newmacro{\stratest}{\strat}		
\newmacro{\ptest}{\pdist}		
\newmacro{\open}{\mathcal{U}}		
\newmacro{\closed}{\mathcal{C}}		
\newmacro{\cpt}{\mathcal{K}}		
\newmacro{\nhd}{\mathcal{U}}		
\newmacro{\start}{1}		
\newmacro{\runstart}{\tau}		
\newmacro{\running}{1,2,\dotsc}		
\newmacro{\run}{t}		
\newmacro{\runalt}{s}		
\newmacro{\nRuns}{T}		
\newmacro{\runtime}{\theta}		
\newmacro{\runs}{\mathcal{\nRuns}}		
\newmacro{\subruns}{\mathcal{S}}		
\newcommand{\new}[1]{#1^{+}}		
\newmacro{\choice}{\point}		
\newmacro{\policy}{\pi}		
\newmacro{\state}{p}		
\newmacro{\score}{y}		
\newmacro{\aux}{\tilde\state}		
\newmacro{\daux}{\tilde\score}		
\newmacro{\step}{\gamma}		
\newmacro{\temp}{\eta}		
\newmacro{\measures}{\mathcal{M}}		
\newmacro{\conts}{C(\points)}		
\newmacro{\linf}{\mathcal{L}^{\infty}(\points)}		
\newmacro{\squareints}{\mathcal{L}^{2}(\points)}		
\newmacro{\banach}{\mathcal{V}}		
\newmacro{\hilbert}{\mathcal{H}}		
\newmacro{\fun}{\varphi}		
\newmacro{\vecspace}{\mathcal{V}}		
\newmacro{\vdim}{n}		
\newmacro{\vvec}{x}		
\newmacro{\bvec}{e}		
\newmacro{\unitvec}{u}		
\newmacro{\subspace}{\mathcal{W}}		
\newmacro{\wvec}{w}		
\newmacro{\tanhull}{\mathcal{Z}}		
\newmacro{\tanvec}{z}		
\newmacro{\dspace}{\vecspace^{\ast}}		
\newmacro{\dual}{\psi}		
\newmacro{\dvec}{v}		
\newmacro{\dbvec}{\eps}		
\newmacro{\ones}{\mathbf{1}}		
\newmacro{\mat}{M}		
\newmacro{\eye}{I}		
\newop{\tspace}{T}		
\newop{\tcone}{TC}		
\newop{\dcone}{\tcone^{\ast}}		
\newop{\ncone}{NC}		
\newop{\pcone}{PC}		
\newmacro{\cvx}{\mathcal{C}}		
\newmacro{\subd}{\partial}		
\newmacro{\hmat}{H}		
\newop{\Opt}{Opt}		
\newop{\Sol}{Sol}		
\newmacro{\obj}{f}		
\newmacro{\objalt}{g}		
\newmacro{\sobj}{F}		
\newmacro{\param}{\theta}		
\newmacro{\params}{\Theta}		
\newmacro{\gvec}{g}		
\newmacro{\vecfield}{v}		
\newmacro{\gbound}{G}		
\newmacro{\vbound}{R}		
\newcommand{\sol}[1][\point]{#1^{\ast}}		
\newmacro{\strong}{\ell}		
\newmacro{\lips}{L}		
\newmacro{\hold}{L}		
\newmacro{\hexp}{\alpha}		
\newmacro{\minmax}{\Phi}		
\newmacro{\minvar}{x}		
\newmacro{\minvaralt}{\alt x}		
\newmacro{\minvars}{\mathcal{X}}		
\newmacro{\maxvar}{y}		
\newmacro{\maxvaralt}{\alt y}		
\newmacro{\maxvars}{\mathcal{Y}}		
\newop{\NE}{NE}		
\newop{\CE}{CE}		
\newop{\CCE}{CCE}		
\newop{\brep}{br}		
\newop{\reg}{Reg}		
\newop{\regalt}{\widetilde{Reg}}		
\newop{\preg}{\overline{Reg}}		
\newop{\dynreg}{DynReg}		
\newop{\val}{val}		
\newmacro{\strat}{\pi}		
\newmacro{\stratalt}{\psi}		
\newmacro{\strats}{\simplex}		
\newmacro{\intstrats}{\strats^{\circ}}		
\newmacro{\abscont}{\text{cont}}
\newmacro{\sing}{\perp}
\newmacro{\pdist}{p}		
\newmacro{\dirac}{\delta}		
\newmacro{\simple}{\chi}		
\newmacro{\simples}{\mathcal{X}}		
\newmacro{\play}{i}		
\newmacro{\playalt}{j}		
\newmacro{\nPlayers}{N}		
\newmacro{\players}{\mathcal{\nPlayers}}		
\newmacro{\pure}{a}		
\newmacro{\purealt}{a'}		
\newmacro{\nPures}{n}		
\newmacro{\pures}{\mathcal{A}}		
\newmacro{\cost}{c}		
\newmacro{\loss}{\ell}		
\newmacro{\pay}{u}		
\newmacro{\payv}{v}		
\newmacro{\pot}{\obj}		
\newmacro{\game}{\mathcal{G}}		
\newmacro{\gamefull}{\game(\players,\points,\pay)}		
\newmacro{\fingame}{\Gamma}		
\newmacro{\fingamefull}{\Gamma(\players,\pures,\pay)}		
\newop{\Eucl}{\Pi}		
\newop{\logit}{\Lambda}		
\newmacro{\hreg}{h}		
\newmacro{\hconj}{\hreg^{\ast}}		
\newmacro{\hdec}{\theta}		
\newmacro{\breg}{D}		
\newmacro{\pmap}{P}		
\newmacro{\mirror}{Q}		
\newmacro{\fench}{F}		
\newmacro{\hstr}{K}		
\newmacro{\depth}{H}		
\newmacro{\zone}{\mathbb{D}}		
\newmacro{\subpoints}{\points^{\circ}}		
\newmacro{\proxdom}{\mathcal{Q}}		
\newmacro{\energy}{E}		
\newmacro{\hvol}{\phi}		
\DeclareMathOperator{\ex}{\mathbb{E}}		
\DeclareMathOperator{\prob}{\mathbb{P}}		
\DeclareMathOperator{\simplex}{\Delta}		
\newmacro{\sample}{\omega}		
\newmacro{\samples}{\Omega}		
\newmacro{\filter}{\mathcal{F}}		
\newmacro{\history}{\mathcal{F}}		
\newmacro{\probspace}{(\samples,\filter,\prob)}		
\newmacro{\meas}{\mu}		
\newmacro{\leb}{\lambda}		
\newmacro{\event}{E}       
\newmacro{\eventalt}{H}       
\newmacro{\mean}{\mu}		
\newmacro{\sdev}{\sigma}		
\newmacro{\variance}{\sdev^{2}}		
\newmacro{\dkl}{D_{\mathrm{KL}}}		
\providecommand\given{}		
\DeclarePairedDelimiterXPP{\exof}[1]{\ex}{[}{]}{}{
\renewcommand\given{\nonscript\,\delimsize\vert\nonscript\,\mathopen{}} #1}
\DeclarePairedDelimiterXPP{\probof}[1]{\prob}{(}{)}{}{
\renewcommand\given{\nonscript\:\delimsize\vert\nonscript\:\mathopen{}} #1}
\DeclarePairedDelimiterXPP{\oneof}[1]{\one}{\{}{\}}{}{
\renewcommand\given{\nonscript\,\delimsize\vert\nonscript\,\mathopen{}} #1}
\newcommand{\est}[1]{\hat #1}		
\newmacro{\model}{\hat\pay}		
\newmacro{\error}{e}		
\newmacro{\noise}{z}		
\newmacro{\bias}{b}		
\newmacro{\mbound}{M}		
\newmacro{\bbound}{B}		
\newmacro{\totbound}{S}		
\newmacro{\snoise}{\psi}		
\newmacro{\sbias}{\beta}		
\newmacro{\noisedev}{\sigma}		
\newmacro{\noisevar}{\noisedev^{2}}		
\newmacro{\unitvar}{E}		
\newmacro{\pertvar}{W}		
\newmacro{\radius}{r}		
\newmacro{\flowmap}{\Phi}		
\newmacro{\graph}{\mathcal{G}}
\newmacro{\vertices}{\mathcal{V}}
\newmacro{\edges}{\mathcal{E}}
\newmacro{\gmat}{g}		
\newmacro{\gdist}{\dist_{\gmat}}
\newmacro{\ball}{\mathbb{B}}		
\newmacro{\sphere}{\mathbb{S}}		
\newmacro{\const}{c}
\newmacro{\budget}{V}
\newmacro{\batch}{\Delta}
\newmacro{\iBatch}{k}
\newmacro{\nBatches}{m}
\newmacro{\pexp}{\rho}		
\newmacro{\qexp}{\gamma}		
\newmacro{\bexp}{\beta}		
\newmacro{\mexp}{\mu}		
\newmacro{\vexp}{\nu}		
\newmacro{\wexp}{\mu}		
\newmacro{\dexp}{\kappa}		
\newmacro{\kerfun}{K}		
\newmacro{\width}{\delta}		
\newmacro{\mix}{\eps}		
\newop{\vbudget}{VB}		
\newcommand{\tvar}[1][\nRuns]{\debug{V_{#1}}}		
\begin{document}


\newcommand{\longtitle}{\uppercase{Online Non-Convex Optimization with Imperfect Feedback}}
\newcommand{\runtitle}{Online Non-Convex Optimization with Imperfect Feedback}		

\title{\longtitle}		

\author
[A.~Héliou]
{Amélie Héliou$^{\ast}$}
\address{$\ast$\,Criteo AI Lab, France}
\email{a.heliou@criteo.com}

\author
[M.~Martin]
{Matthieu Martin$^{\ast}$}
\email{mat.martin@criteo.com}

\author
[P.~Mertikopoulos]
{\\Panayotis Mertikopoulos$^{\diamond,\ast}$}
\address{$^{\diamond}$\,%
Univ. Grenoble Alpes, CNRS, Inria, LIG, 38000, Grenoble, France.}
\email{panayotis.mertikopoulos@imag.fr}

\author
[T.~Rahier]
{Thibaud Rahier$^{\ast}$}
\email{t.rahier@criteo.com}

\thanks{P.~Mertikopoulos is grateful for financial support by
the French National Research Agency (ANR) in the framework of
the ``Investissements d'avenir'' program (ANR-15-IDEX-02),
the LabEx PERSYVAL (ANR-11-LABX-0025-01),
and
MIAI@Grenoble Alpes (ANR-19-P3IA-0003).
This research was also supported by the COST Action CA16228 ``European Network for Game Theory'' (GAMENET)\ackperiod
}

\subjclass[2020]{Primary 68Q32; Secondary 90C26, 91A26.}
\keywords{%
Online optimization;
non-convex;
dual averaging;
bandit / imperfect feedback.}

\newacro{LHS}{left-hand side}
\newacro{RHS}{right-hand side}
\newacro{iid}[i.i.d.]{independent and identically distributed}
\newacro{lsc}[l.s.c.]{lower semi-continuous}

\newacro{OGD}{online gradient descent}
\newacro{MD}{mirror descent}
\newacro{OMD}{online mirror descent}
\newacro{MWU}{multiplicative weights update}
\newacro{FTPL}{``follow the perturbed leader''}
\newacro{FTRL}{``follow the regularized leader''}
\newacro{SPSA}{simultaneous perturbation stochastic approximation}
\newacro{RN}{Radon-Nikodym}
\newacro{MCP}{maximum clique problem}
\newacro{DAIM}{dual averaging with inexact models}
\newacro{BDA}{bandit dual averaging}
\newacro{DA}{dual averaging}
\newacro{FTRL}{``follow the regularized leader''}
\newacro{NE}{Nash equilibrium}
\newacroplural{NE}[NE]{Nash equilibria}
\newacro{TA}{test acronym}

\begin{abstract}
%
%
We consider the problem of online learning with non-convex losses.
In terms of feedback, we assume that the learner observes \textendash\ or otherwise constructs \textendash\ an inexact model for the loss function encountered at each stage, and we propose a mixed-strategy learning policy based on dual averaging.
In this general context, we derive a series of tight regret minimization guarantees, both for the learner's static (external) regret, as well as the regret incurred against the best \emph{dynamic} policy in hindsight.
Subsequently, we apply this general template to the case where the learner only has access to the actual loss incurred at each stage of the process.
This is achieved by means of a kernel-based estimator which generates an inexact model for each round's loss function using only the learner's realized losses as input.
\end{abstract}

\maketitle
\allowdisplaybreaks		
\acresetall

\section{Introduction}
\label{sec:introduction}

In this paper, we consider the following online learning framework:
\begin{enumerate}
\item
At each stage $\run=\running$ of a repeated decision process, the learner selects an action $\choice_{\run}$ from a compact convex subset $\points$ of a Euclidean space $\R^{\vdim}$.
\item
The agent's choice of action triggers a loss $\loss_{\run}(\choice_{\run})$ based on an a priori unknown \emph{loss function} $\loss_{\run}\from\points\to\R$;
subsequently, the process repeats.
\end{enumerate}
If the loss functions $\loss_{\run}$ encountered by the agent are \emph{convex}, the above framework is the standard online convex optimization setting of \citef{Zin03} \textendash\ for a survey, see \citep{SS11,BCB12,Haz12} and references therein.
In this case, simple first-order methods like \ac{OGD} allow the learner to achieve $\bigoh(\nRuns^{1/2})$ regret after $\nRuns$ rounds \cite{Zin03}, a bound which is well-known to be min-max optimal in this setting \citep{ABRT08,SS11}.
At the same time, it is also possible to achieve tight regret minimization guarantees against \emph{dynamic comparators} \textendash\ such as the regret incurred against the best \emph{dynamic policy} in hindsight, \cf \citep{CYLM+12,CBGLS12,BGZ15,HazSes09,JRSS15} and references therein.

On the other hand, when the problem's loss functions are not convex, the situation is considerably more difficult.
When the losses are generated from a stationary stochastic distribution, the problem can be seen as a version of a continuous-armed bandit in the spirit of \citef{Agr95};
in this case, there exist efficient algorithms guaranteeing logarithmic regret by discretizing the problem's search domain and using a UCB-type policy \cite{BMSS11,KSU08,Sli19}.
Otherwise, in an adversarial context, an informed adversary can impose \emph{linear} regret to \emph{any deterministic algorithm} employed by the learner \citep{SS11,HSZ17,SN20};
as a result, UCB-type approaches are no longer suitable.

In view of this impossibility result, two distinct threads of literature have emerged for online non-convex optimization.
One possibility is to examine less demanding measures of regret \textendash\ like the learner's \emph{local regret} \citep{HSZ17} \textendash\ and focus on first-order methods that minimize it efficiently \citep{HSZ17,HalMC20}.
Another possibility is to consider \emph{randomized} algorithms, in which case achieving no regret \emph{is} possible:
\citef{KBTB15} showed that adapting the well-known \emph{Hedge} (or multiplicative\,/\,exponential weights) algorithm to a continuum allows the learner to achieve $\bigoh(\nRuns^{1/2})$ regret, as in the convex case.
This result is echoed in more recent works by \citef{AGH19} and \citef{SN20} who analyzed the \acdef{FTPL} algorithm of \citef{KV05} with exponentially distributed perturbations and an offline optimization oracle (exact or approximate);
again, the regret achieved by \ac{FTPL} in this setting is $\bigoh(\nRuns^{1/2})$, \ie order-equivalent to that of Hedge in a continuum.

\para{Our contributions and related work}

A crucial assumption in the above works on randomized algorithms is that, after selecting an action, the learner receives perfect information on the loss function encountered \textendash\ \ie an \emph{exact model} thereof.
This is an important limitation for the applicability of these methods, which led to the following question by \citef[p.~8]{KBTB15}:
\renewenvironment{quote}{%
\list{}{%
	\leftmargin0.5cm   
	\rightmargin\leftmargin
}
\item\relax
}
{\endlist}
\begin{quote}
\itshape
\centering
One question is whether one can generalize the Hedge algorithm to a bandit setting, so that sublinear regret can be achieved without the need to explicitly maintain a cover.
\end{quote}

To address this open question, we begin by considering a general framework for randomized action selection with \emph{imperfect feedback} \textendash\ \ie with an inexact model of the loss functions encountered at each stage.
Our contributions in this regard are as follows:
\begin{enumerate}[leftmargin=2em]
\item
We present a flexible algorithmic template for online non-convex learning based on \acl{DA} with imperfect feedback \citep{Nes09}.

\item
We provide tight regret minimization rates \textendash\ both \emph{static} and \emph{dynamic} \textendash\ under a wide range of different assumptions for the loss models available to the optimizer.

\item
We show how this framework can be extended to learning with \emph{bandit feedback}, \ie when the learner only observes their realized loss and must construct a loss model from scratch.
\end{enumerate}

Viewed abstractly, the \ac{DA} algorithm is an ``umbrella'' scheme that contains Hedge as a special case for problems with a simplex-like domain.
In the context of online convex optimization, the method is closely related to the well-known \ac{FTRL} algorithm of \citef{SSS07},
the \ac{FTPL} method of \citef{KV05},
``lazy'' \ac{MD} \cite{SS11,BCB12,Bub15},
etc.
For an appetizer to the vast literature surrounding these methods, we refer the reader to \cite{SSS07,Nes09,SS11,Xia10,BCB12,BM17,ZMBB+20} and references therein.


\begin{table}[tbp]
\centering
\small
\renewcommand{\arraystretch}{1.3}

\begin{tabular}{r||lr||lr}
	&\textbf{Convex Losses}
	&
	&\textbf{Non-Convex Losses}
	&
	\\
\textbf{Feedback}
	&Static regret
	&Dynamic regret
	&Static regret
	&Dynamic regret
	\\
\hline
Exact
	&$\bigoh\parens[\big]{\nRuns^{1/2}}$ \; \cite{Zin03}
	&$\bigoh\parens[\big]{\nRuns^{2/3}\tvar^{1/3}}$ \; \cite{BGZ15}
	&$\bigoh\parens[\big]{\nRuns^{1/2}}$ \; \cite{KBTB15,SN20}
	&$\boldsymbol{\bigoh\parens[\big]{\nRuns^{2/3}\tvar^{1/3}}}$
	\\
\hline
Unbiased
	&$\bigoh\parens[\big]{\nRuns^{1/2}}$ \; \cite{Zin03}
	&$\bigoh\parens[\big]{\nRuns^{2/3}\tvar^{1/3}}$ \; \cite{BGZ15}
	&$\boldsymbol{\bigoh\parens[\big]{\nRuns^{1/2}}}$
	&$\boldsymbol{\bigoh\parens[\big]{\nRuns^{2/3}\tvar^{1/3}}}$
	\\
\hline
Bandit
	&$\bigoh\parens[\big]{\nRuns^{1/2}}$ \; \cite{BE16,BLE17}
	&$\bigoh\parens[\big]{\nRuns^{4/5}\tvar^{1/5}}$ \; \cite{BGZ15}
	&$\boldsymbol{\bigoh\parens[\big]{\nRuns^{\frac{\vdim+2}{\vdim+3}}}}$
	&$\boldsymbol{\bigoh\parens[\big]{\nRuns^{\frac{\vdim+3}{\vdim+4}}\tvar^{\frac{1^{\vphantom{1}}}{\vdim+4}}}}$
	\\
\hline
\end{tabular}
\medskip
\caption{%
Overview of related work.
In regards to feedback,
an ``exact'' model means that the learner acquires perfect knowledge of the encountered loss functions;
``unbiased'' refers to an inexact model that is only accurate on average;
finally,
``bandit'' means that the learner records their incurred loss and has no other information.
We only report here the best known bounds in the literature;
all bounds derived in this paper are typeset \textbf{in bold}.
}
\label{tab:related}
\vspace{-2ex}
\end{table}


In the \emph{non-convex} setting, our regret minimization guarantees can be summarized as follows (see also \cref{tab:related} above):
if the learner has access to inexact loss models that are unbiased and finite in mean square, the \ac{DA} algorithm achieves in expectation a \emph{static} regret bound of $\bigoh(\nRuns^{1/2})$.
Moreover, in terms of the learner's \emph{dynamic} regret, the algorithm enjoys a bound of $\bigoh(\nRuns^{2/3}\tvar^{1/3})$ where $\tvar \defeq \insum_{\run=\start}^{\nRuns} \supnorm{\loss_{\run+1} - \loss_{\run}}$ denotes the \emph{variation} of the loss functions encountered over the horizon of play (\cf \cref{sec:general} for the details).
Importantly, both bounds are order-optimal, even in the context of online \emph{convex} optimization, \cf \citep{CBL06,BGZ15,ABRT08}.

With these general guarantees in hand, we tackle the bandit setting using a ``kernel smoothing'' technique in the spirit of \citef{BLE17}.
This leads to a new algorithm, which we call \acdef{BDA}, and which can be seen as a version of the \ac{DA} method with \emph{biased} loss models.
The bias of the loss model can be controlled by tuning the ``radius'' of the smoothing kernel;
however, this comes at the cost of increasing the model's variance \textendash\ an incarnation of the well-known ``bias-variance'' trade-off.
By resolving this trade-off, we are finally able to answer the question of \citef{KBTB15} in the positive:
\ac{BDA} enjoys an $\bigoh(\nRuns^{\frac{\vdim+2}{\vdim+3}})$ static regret bound and an $\bigoh(\nRuns^{\frac{\vdim+3}{\vdim+4}}\tvar^{1/(\vdim+4)})$ \emph{dynamic} regret bound, without requiring an explicit discretization of the problem's search space.

This should be contrasted with the case of online \emph{convex} learning, where it is possible to achieve $\bigoh(\nRuns^{3/4})$ regret through the use of \ac{SPSA} techniques \citep{FKM05}, or even $\bigoh(\nRuns^{1/2})$ by means of kernel-based methods \citep{BE16,BLE17}.
This represents a drastic drop from $\bigoh(\nRuns^{1/2})$, but this cannot be avoided:
the worst-case bound for stochastic non-convex optimization is $\Omega(\nRuns^{(\vdim+1)/(\vdim+2)})$ \citep{Kle04,KSU08}, so our static regret bound is nearly optimal in this regard (\ie up to $\bigoh(\nRuns^{-1/(\vdim+2)(\vdim+3)})$, a term which is insignificant for horizons $\nRuns\leq10^{12}$).
Correspondingly, in the case of dynamic regret minimization, the best known upper bound is $\bigoh(\nRuns^{4/5}\tvar^{1/5})$ for online \emph{convex} problems \citep{BGZ15,DMSV18}.
We are likewise not aware of any comparable dynamic regret bounds for online \emph{non-convex} problems;
to the best our knowledge, our paper is the first to derive dynamic regret guarantees for online non-convex learning with bandit feedback.

We should stress here that, as is often the case for methods based on lifting, much of the computational cost is hidden in the sampling step.
This is also the case for the proposed \ac{DA} method which, like \cite{KBTB15}, implicitly assumes access to a sampling oracle.
Estimating (and minimizing) the per-iteration cost of sampling is an important research direction, but one that lies beyond the scope of the current paper, so we do not address it here.

\section{Setup and preliminaries}
\label{sec:setup}

\subsection{The model}

Throughout the sequel, our only blanket assumption will be as follows:
\begin{assumption}
\label{asm:loss}
The stream of loss functions encountered is \emph{uniformly bounded Lipschitz},
\ie there exist constants $\vbound, \lips > 0$ such that:
\begin{enumerate}
\setlength{\itemsep}{0pt}
\item
$\abs{\loss_{\run}(\point)} \leq \vbound$ for all $\point\in\points$; more succinctly, $\supnorm{\loss_{\run}} \leq \vbound$.
\item
$\abs{\loss_{\run}(\pointalt) - \loss_{\run}(\point)} \leq \lips \norm{\pointalt - \point}$
for all $\point,\pointalt\in\points$.
\end{enumerate}
\end{assumption}

Other than this meager regularity requirement, we make no structural assumptions for $\loss_{\run}$ (such as convexity, unimodality, or otherwise).
In this light, the framework under consideration is akin to the online non-convex setting of \citef{KBTB15}, \citef{HSZ17}, and \citef{SN20}.
The main difference with the setting of \citef{KBTB15} is that the problem's domain $\points$ is assumed convex;
this is done for convenience only, to avoid technical subtleties involving ``uniform fatness'' conditions and the like.

In terms of playing the game, we will assume that the learner can employ \emph{mixed strategies} to randomize their choice of action at each stage;
however, because this mixing occurs over a \emph{continuous} domain, defining this randomization requires some care.
To that end, let $\measures \equiv \measures(\points)$ denote the space of all finite signed Radon measures on $\points$.
Then, a \emph{mixed strategy} is defined as an element $\strat$ of the set of Radon probability measures $\strats \equiv \strats(\points) \subseteq \measures(\points)$ on $\points$,
and the player's expected loss under $\strat$ when facing a bounded loss function $\loss\in\linf$ will be denoted as
\begin{equation}
\txs
\braket{\loss}{\strat}
	\defeq \ex_{\strat}[\loss] = \int_{\points} \loss(\point) \dd\strat(\point).
\end{equation}

\begin{remark}
We should note here that $\strats$ contains a vast array of strategies, including atomic and singular distributions that do not admit a density.
For this reason, we will write $\strats_{\abscont}$ for the set of strategies that are \emph{absolutely continuous} relative to the Lebesgue measure $\leb$ on $\points$, and $\strats_{\sing}$ for the set of singular strategies (which are not);
by Lebesgue's decomposition theorem \citep{Fol99}, we have $\strats = \strats_{\abscont} \cup \strats_{\sing}$.
By construction, $\strats_{\sing}$ contains the player's \emph{pure strategies}, \ie Dirac point masses $\dirac_{\point}$ that select $\point\in\points$ with probability $1$;
however, it also contains pathological strategies that admit \emph{neither} a density, \emph{nor} a point mass function \textendash\ such as the Cantor distribution \citep{Fol99}.
By contrast, the \ac{RN} derivative $\pdist \defeq d\strat/d\leb$ of $\strat$ exists for all $\strat\in\strats_{\abscont}$, so we will sometimes refer to elements of $\strats_{\abscont}$ as ``\acl{RN} strategies'';
in particular, if $\strat\in\strats_{\abscont}$, we will not distinguish between $\strat$ and $\pdist$ unless absolutely necessary to avoid confusion.
\end{remark}

Much of our analysis will focus on strategies $\simple$ with a piecewise constant density on $\points$, \ie $\simple = \sum_{i=1}^{k} \coef_{i} \one_{\cvx_{i}}$ for a collection of
weights $\coef_{i}\geq0$
and
measurable subsets $\cvx_{i}\subseteq\points$,
$i=1,\dotsc,k$, such that $\int_{\points} \simple = \sum_{i} \coef_{i} \leb(\cvx_{i}) = 1$.
These strategies will be called \emph{simple} and the space of simple strategies on $\points$ will be denoted by $\simples \equiv \simples(\points)$.
A key fact regarding simple strategies is that
$\simples$ is dense in $\strats$ in the weak topology of $\measures$ \cite[Chap.~3]{Fol99};
as a result, the learner's expected loss under \emph{any} mixed strategy $\strat\in\strats$ can be approximated within arbitrary accuracy $\eps>0$ by a simple strategy $\simple\in\simples$.
In addition, when $k$ (or $\vdim$) is not too large, sampling from simple strategies can be done efficiently;
for all these reasons, simple strategies will play a key role in the sequel.

\subsection{Measures of regret}

With all this in hand,
the \emph{regret} of a learning policy $\policy_{\run}\in\strats$, $\run=\running$, against a benchmark strategy $\bench\in\strats$ is defined as
\begin{equation}
\label{eq:reg-test}
\txs
\reg_{\bench}(\nRuns)
	= \insum_{\run=\start}^{\nRuns} \bracks[\big]{ \ex_{\policy_{\run}}[\loss_{\run}] - \ex_{\bench}[\loss_{\run}]}
	= \insum_{\run=\start}^{\nRuns} \braket{\loss_{\run}}{\policy_{\run} - \bench},
\end{equation}
\ie as the difference between the player's mean cumulative loss under $\policy_{\run}$ and $\bench$ over $\nRuns$ rounds.
In a slight abuse of notation, we write
$\reg_{\benchdist}(\nRuns)$ if $\bench$ admits a density $\benchdist$,
and
$\reg_{\point}(\nRuns)$ for the regret incurred against the pure strategy $\dirac_{\point}$, $\point\in\points$.
Then, the player's \emph{\textpar{static} regret} under $\policy_{\run}$ is given by
\begin{equation}
\label{eq:reg}
\txs
\reg(\nRuns)
	= \max_{\point\in\points} \reg_{\point}(\nRuns)
	= \sup_{\simple\in\simples} \reg_{\simple}(\nRuns)
\end{equation}
where the maximum is justified by the compactness of $\points$ and the continuity of each $\loss_{\run}$.
The lemma below provides a link between pure comparators and their approximants in the spirit of \citef{KBTB15};
to streamline our discussion, we defer the proof to the supplement:

\begin{restatable}{lemma}{pointsimple}
\label{lem:point2simple}
Let $\nhd$ be a convex neighborhood of $\point$ in $\points$ and let $\simple\in\simples$ be a simple strategy supported on $\nhd$.
Then, $\reg_{\point}(\nRuns) \leq \reg_{\simple}(\nRuns) + \lips\diam(\nhd) \nRuns$.
\end{restatable}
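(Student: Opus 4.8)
The plan is to work directly from the definition of regret in \eqref{eq:reg} and \eqref{eq:reg-test}. Since the learning policy $\policy_{\run}$ and the horizon $\nRuns$ are fixed, the two quantities $\reg_{\point}(\nRuns)$ and $\reg_{\simple}(\nRuns)$ differ only in their comparator term, so I would first write
\begin{equation*}
\reg_{\point}(\nRuns) - \reg_{\simple}(\nRuns)
	= \insum_{\run=\start}^{\nRuns} \bracks[\big]{ \braket{\loss_{\run}}{\policy_{\run} - \dirac_{\point}} - \braket{\loss_{\run}}{\policy_{\run} - \simple} }
	= \insum_{\run=\start}^{\nRuns} \braket{\loss_{\run}}{\simple - \dirac_{\point}},
\end{equation*}
and then note that $\braket{\loss_{\run}}{\simple - \dirac_{\point}} = \ex_{\simple}[\loss_{\run}] - \loss_{\run}(\point) = \int_{\points} \bracks[\big]{\loss_{\run}(\pointalt) - \loss_{\run}(\point)} \dd\simple(\pointalt)$, where I have used that $\simple$ is a probability measure (so integrating the constant $\loss_{\run}(\point)$ against $\simple$ returns $\loss_{\run}(\point)$).

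The second step is to bound this integral stagewise. Since $\simple$ is supported on $\nhd$, the integral only sees points $\pointalt\in\nhd$; and since $\nhd$ is a neighborhood of $\point$, we also have $\point\in\nhd$, hence $\norm{\pointalt - \point} \leq \diam(\nhd)$ for every $\pointalt$ in the support of $\simple$. Applying the Lipschitz bound of \cref{asm:loss}(2) pointwise under the integral then gives $\abs{\loss_{\run}(\pointalt) - \loss_{\run}(\point)} \leq \lips\norm{\pointalt - \point} \leq \lips\diam(\nhd)$, and since $\simple$ has total mass $1$ we conclude $\abs{\ex_{\simple}[\loss_{\run}] - \loss_{\run}(\point)} \leq \lips\diam(\nhd)$ for each $\run$.

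Finally, summing this estimate over $\run=\start,\dotsc,\nRuns$ yields $\abs{\reg_{\point}(\nRuns) - \reg_{\simple}(\nRuns)} \leq \lips\diam(\nhd)\nRuns$, and in particular $\reg_{\point}(\nRuns) \leq \reg_{\simple}(\nRuns) + \lips\diam(\nhd)\nRuns$, which is the claim. There is no real obstacle here: the argument is a one-line telescoping of comparators followed by a pointwise Lipschitz estimate, and the only thing to be slightly careful about is that $\point$ itself lies in $\nhd$ (so that the diameter, rather than twice the diameter, is the right constant) and that $\simple$ is genuinely a probability measure supported on $\nhd$ — both of which are guaranteed by the hypotheses.
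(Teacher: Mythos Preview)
Your proof is correct and follows essentially the same approach as the paper's: both arguments reduce to the pointwise Lipschitz estimate $\abs{\loss_{\run}(\pointalt) - \loss_{\run}(\point)} \leq \lips\diam(\nhd)$ for $\pointalt$ in the support of $\simple$, integrate against $\simple$, and sum over $\run$. The only cosmetic difference is that the paper phrases the argument in terms of payoffs $\pay_{\run} = -\loss_{\run}$ and derives the one-sided inequality directly, whereas you work with losses and obtain the (slightly stronger) two-sided bound before specializing.
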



This lemma will be used to bound the agent's static regret using bounds obtained for simple strategies $\simple\in\simples$.
Going beyond static comparisons of this sort, the learner's \emph{dynamic regret} is defined as
\begin{equation}
\label{eq:dynreg}
\txs
\dynreg(\nRuns)
	= \insum_{\run=\start}^{\nRuns} \bracks{ \braket{\loss_{\run}}{\strat_{\run}} - \min_{\strat\in\strats} \braket{\loss_{\run}}{\strat}}
	= \insum_{\run=\start}^{\nRuns} \braket{\loss_{\run}}{\strat_{\run} - \sol[\strat]_{\run}}
\end{equation}
where $\sol[\strat]_{\run} \in \argmin_{\strat\in\strats} \braket{\loss_{\run}}{\strat}$ is a ``best-response'' to $\loss_{\run}$ (that such a strategy exists is a consequence of the compactness of $\points$ and the continuity of each $\loss_{\run}$).
In regard to its static counterpart, the agent's dynamic regret is considerably more ambitious, and achieving sublinear dynamic regret is not always possible;
we examine this issue in detail in \cref{sec:general}.

\subsection{Feedback models}

After choosing an action, the agent is only assumed to observe an \emph{inexact model} $\model_{\run} \in \linf$ of the $\run$-th stage loss function $\loss_{\run}$;
for concreteness, we will write
\begin{equation}
\label{eq:model}
\model_{\run}
	= \loss_{\run}
	+ \error_{\run}
\end{equation}
where
the ``observation error'' $\error_{\run}$ captures all sources of uncertainty in the player's model.
This uncertainty could be both ``random'' (zero-mean) or ``systematic'' (non-zero-mean), so it will be convenient to decompose $\error_{\run}$ as
\begin{equation}
\label{eq:error}
\error_{\run}
	= \noise_{\run} + \bias_{\run}
\end{equation}
where $\noise_{\run}$ is zero-mean and $\bias_{\run}$ denotes the mean of $\error_{\run}$.

To define all this formally, we will write $\filter_{\run} = \history(\policy_{1},\dotsc,\policy_{\run})$ for the history of the player's mixed strategy up to stage $\run$ (inclusive).
The chosen action $\choice_{\run}$ and the observed model $\model_{\run}$ are both generated \emph{after} the player chooses $\policy_{\run}$ so, by default, they are not $\filter_{\run}$-measurable.
Accordingly, we will collect all randomness affecting $\model_{\run}$ in an abstract probability law $\prob$, and we will write
$\bias_{\run} = \exof{\error_{\run} \given \filter_{\run}}$
and
$\noise_{\run} = \error_{\run} - \bias_{\run}$;
in this way, $\exof{\noise_{\run} \given \filter_{\run}} = 0$ by definition.

In view of all this, we will focus on the following descriptors for $\model_{\run}$:
\begin{subequations}
\label{eq:model-stats}
\begin{alignat}{3}
\label{eq:bias}
a)\quad
	&\textit{Bias:}
	&\supnorm{\bias_{\run}}
		&\leq \bbound_{\run}
	\\
b)\quad
\label{eq:variance}
	&\textit{Variance:}
	&\exof{\supnorm{\noise_{\run}}^{2} \given \filter_{\run}}
		&\leq \sdev_{\run}^{2}
	\\
c)\quad
\label{eq:moment}
	&\textit{Mean square:}
	\hspace{2em}
	&\exof{\supnorm{\model_{\run}}^{2} \given \filter_{\run}}
		&\leq \mbound_{\run}^{2}
	\hspace{16em}
\end{alignat}
\end{subequations}
In the above,
$\bbound_{\run}$, $\sdev_{\run}$ and $\mbound_{\run}$ are deterministic constants that are to be construed as bounds on the bias, (conditional) variance, and magnitude of the model $\model_{\run}$ at time $\run$.
In obvious terminology, a model with $\bbound_{\run}=0$ will be called \emph{unbiased},
and an unbiased model with $\sdev_{\run} = 0$  will be called \emph{exact}.

\smallskip
\begin{example}[Parametric models]
An important application of online optimization is the case where the encountered loss functions are of the form $\loss_{\run}(\point) = \loss(\point;\param_{\run})$ for some sequence of parameter vectors $\param_{\run} \in \R^{m}$.
In this case, the learner typically observes an estimate $\est\param_{\run}$ of $\param_{\run}$, leading to the inexact model $\model_{\run} = \loss(\cdot;\est\param_{\run})$.
Importantly, this means that $\model_{\run}$ \emph{does not require infinite-dimensional feedback} to be constructed.
Moreover, the dependence of $\loss$ on $\param$ is often linear, so if $\est\param_{\run}$ is an unbiased estimate of $\param_{\run}$, then so is $\model_{\run}$.
\endenv
\end{example}

\smallskip
\begin{example}[Online clique prediction]
As a specific incarnation of a parametric model, consider the problem of finding the largest complete subgraph \textendash\ a \emph{maximum clique} \textendash\ of an undirected graph $\graph=(\vertices,\edges)$.
This is a key problem in machine learning with applications to social networks \citep{For10}, data mining \citep{BBH11}, gene clustering \citep{SM03}, feature embedding \citep{ZSYZ+18}, and many other fields.
In the online version of the problem, the learner is asked to predict such a clique in a graph $\graph_{\run}$ that evolves over time (\eg a social network), based on partial historical observations of the graph.
Then, by the Motzkin-Straus theorem \citep{MS65,BBPP99}, this boils down to an online quadratic program of the form:
\begin{equation}
\label{eq:MCP}
\tag{MCP}
\txs
\textrm{maximize}
	\;\;
	\pay_{\run}(\point)
		= \sum_{i,j=1}^{\vdim} \point_{i} A_{ij,\run} \point_{j}
	\qquad
\textrm{subject to}
	\;\;
	\point_{i}\geq0,\;
	\sum_{i=1}^{\vdim} \point_{i} = 1,
\end{equation}
where $A_{\run} = (A_{ij,\run})_{i,j=1}^{\vdim}$ denotes the adjacency matrix of $\graph_{\run}$.
Typically, $\est A_{\run}$ is constructed by picking a node $i$ uniformly at random, charting out its neighbors, and letting $\est A_{ij,\run} = \abs{\vertices}/2$ whenever $j$ is connected to $i$.
It is easy to check that $\est A_{\run}$ is an unbiased estimator of $A_{\run}$;
as a result, the function $\est\pay_{\run}(\point) = \point^{\top} \est A_{\run} \point$ is an unbiased model of $\pay_{\run}$.
\endenv
\end{example}

\smallskip
\begin{example}[Online-to-batch]
Consider an empirical risk minimization model of the form
\begin{equation}
\label{eq:risk}
\txs
\obj(\point)
	= \frac{1}{m} \sum_{i=1}^{m} \obj_{i}(\point)
\end{equation}
where each $\obj_{i}\from\R^{\vdim}\to\R$ corresponds to a data point (or ``sample'').
In the ``online-to-batch'' formulation of the problem \cite{SS11}, the optimizer draws uniformly at random a sample $i_{t} \in \{1,\dotsc,m\}$ at each stage $\run=\running$, and observes $\est\loss_{\run} = \obj_{i_{\run}}$. Typically, each $\obj_{i}$ is relatively easy to store in closed form, so $\est\loss_{\run}$ is an easily available unbiased model of the empirical risk function $\obj$.
\endenv
\end{example}


\section{Prox-strategies and dual averaging}
\label{sec:DA}

The class of non-convex online learning policies that we will consider is based on the general template of \acdef{DA} / \acdef{FTRL} methods.
Informally, this scheme can be described as follows:
at each stage $\run=\running$, the learner plays a mixed strategy that minimizes their cumulative loss up to round $\run-1$ (inclusive) plus a ``regularization'' penalty term (hence the ``regularized leader'' terminology).
In the rest of this section, we provide a detailed construction and description of the method.


\subsection{Randomizing over discrete \vs continuous sets}

We begin by describing the \acl{DA} method when the underlying action set is \emph{finite}, \ie of the form $\pures = \{1,\dotsc,\nPures\}$.
In this case, the space of mixed strategies is the $\vdim$-dimensional simplex $\simplex_{\nPures} \defeq \simplex(\pures) = \setdef{\pdist\in\R_{+}^{\nPures}}{\sum_{i=1}^{\nPures} \pdist_{i} = 1}$, and, at each $\run=\running$, the \acl{DA} algorithm
prescribes the mixed strategy
\begin{equation}
\label{eq:choice-disc}
\txs
\state_{\run}
	\gets \argmin_{\pdist\in\simplex_{\nPures}}
		\{\temp \sum_{\runalt=\start}^{\run-1} \braket{\model_{\run}}{\pdist} + \hreg(\pdist)\}.
\end{equation}
In the above, $\temp>0$ is a ``learning rate'' parameter and $\hreg\from\simplex_{\nPures} \to \R$ is the method's ``regularizer'', assumed to be continuous and strongly convex over $\simplex_{\nPures}$.
In this way, the algorithm can be seen as tracking the ``best'' choice up to the present, modulo a ``day 0'' regularization component \textendash\ the \acl{FTRL} interpretation.

In our case however, the method is to be applied to the \emph{infinite-dimensional} set $\strats \equiv \strats(\points)$ of the learner's mixed strategies, so the issue becomes considerably more involved.
To illustrate the problem, consider one of the prototypical regularizer functions, the negentropy $\hreg(\primal) = \sum_{i=1}^{\vdim} \primal_{i} \log\primal_{i}$ on $\simplex_{\vdim}$.
If we naïvely try to extend this definition to the infinite-dimensional space $\strats(\points)$, we immediately run into problems:
First, for pure strategies, any expression of the form $\sum_{\pointalt\in\points} \delta_{\point}(\pointalt) \log\delta_{\point}(\pointalt)$ would be meaningless.
Second, even if we focus on \acl{RN} strategies $\primal\in\strats_{\abscont}$ and use the integral definition $\hreg(\primal) = \int_{\points} \primal \log\primal$, a density like $\primal(\point) \propto 1/(\point(\log\point)^{2})$ on $\points = [0,1/2]$ has \emph{infinite} negentropy, implying that even $\strats_{\abscont}$ is too large to serve as a domain.

\subsection{Formal construction of the algorithm}


To overcome the issues identified above,
our starting point will be that any mixed-strategy incarnation of the \acl{DA} algorithm must contain \emph{at least} the space $\simples \equiv \simples(\points)$ of the player's simple strategies.
To that end, let $\vecspace$ be an ambient Banach space which contains the set of simple strategies $\simples$ as an embedded subset.
For technical reasons, we will also assume that the topology induced on $\simples$ by the reference norm $\norm{\cdot}$ of $\vecspace$ is not weaker than the natural topology on $\simples$ induced by the total variation norm;
formally, $\tvnorm{\cdot} \leq \coef \norm{\cdot}$ for some $\coef>0$.%
\footnote{Since the dual space of $\measures(\points)$ contains $\linf$, we will also view $\linf$ as an embedded subset of $\dspace$.}
For example, $\vecspace$ could be the (Banach) space $\measures(\points)$ of finite signed measures on $\points$,
the (Hilbert) space $\squareints$ of square integrable functions on $\points$ endowed with the $L^{2}$ norm,%
\footnote{In this case, $\coef = \sqrt{\leb(\points)}$:
this is because $\tvnorm{\pdist}^{2} = \bracks*{ \int_{\points}\pdist }^{2}\leq \int_{\points} \pdist^{2} \cdot \int_{\points} 1 = \leb(\points) \twonorm{\pdist}^{2}$ if $\pdist\in\strats_{\abscont}$.}
or an altogether different model for $\simples$.
We then have:

\begin{definition}
\label{def:breg}
A \emph{regularizer} on $\simples$ is a \ac{lsc} convex function $\hreg\from\vecspace\to\R\cup\{\infty\}$ such that:
\vspace{-\smallskipamount}
\begin{enumerate}
\setlength{\itemsep}{0pt}
\item
$\simples$ is a weakly dense subset of the \emph{effective domain} $\dom\hreg \defeq \setdef{\primal}{\hreg(\primal) < \infty}$ of $\hreg$.
\item
The subdifferential $\subd\hreg$ of $\hreg$ admits a \emph{continuous selection}, \ie there exists a continuous mapping $\nabla\hreg$ on $\dom\subd\hreg \defeq \{\subd\hreg \neq \varnothing\}$ such that $\nabla\hreg(\proxal) \in \subd\hreg(\proxal)$ for all $\proxal\in\dom\subd\hreg$.
\item
$\hreg$ is strongly convex, \ie there exists some $\hstr>0$ such that $\hreg(\primal) \geq \hreg(\proxal) + \braket{\nabla\hreg(\proxal)}{\primal - \proxal} + (\hstr/2) \norm{\primal - \proxal}^{2}$ for all $\primal\in\dom\hreg$, $\proxal\in\dom\subd\hreg$.
\end{enumerate}
The set $\proxdom \defeq \dom\subd\hreg$ will be called the \emph{prox-domain} of $\hreg$;
its elements will be called \emph{prox-strategies}.
\end{definition}

\begin{remark*}
For completeness, recall that the subdifferential of $\hreg$ at $\proxal$ is the set $\subd\hreg(\proxal) = \setdef{\dual\in\dspace}{\hreg(\primal) \geq \hreg(\proxal) + \braket{\dual}{\primal - \proxal} \text{ for all } \proxal\in\vecspace}$;
also
lower semicontinuity means that the sublevel sets $\{\hreg \leq c\}$ of $\hreg$ are closed for all $c\in\R$.
For more details, we refer the reader to \citef{Phe93}.
\end{remark*}


Some prototypical examples of this general framework are as follows (with more in the supplement):

\smallskip
\begin{example}[$L^{2}$ regularization]
\label{ex:L2}
Let $\vecspace = \squareints$ and consider the quadratic regularizer $\hreg(\primal) = (1/2) \norm{\primal}_{2}^{2} = (1/2) \int_{\points} \primal^{2}$
if $\primal \in \strats_{\abscont} \cap \squareints$,
and $\hreg(\primal) = \infty$ otherwise.
In this case, $\proxdom = \dom\hreg = \strats_{\abscont} \cap \squareints$ and $\nabla\hreg(\proxal) = \proxal$ is a continuous selection of $\subd\hreg$ on $\proxdom$.
\endenv
\end{example}

\smallskip
\begin{example}[Entropic regularization]
\label{ex:entropy}
Let $\vecspace = \measures(\points)$ and consider the entropic regularizer $\hreg(\primal) = \int_{\points} \primal \log\primal$ whenever $\primal$ is a density with finite entropy, $\hreg(\primal)=\infty$ otherwise.
By Pinsker's inequality, $\hreg$ is $1$-strongly convex relative to the total variation norm $\tvnorm{\cdot}$ on $\vecspace$;
moreover, we have $\proxdom = \setdef{\proxal\in\strats_{\abscont}}{\supp(\proxal) = \points} \subsetneq \dom\hreg$ and $\nabla\hreg(\proxal) = 1 + \proxal \log\proxal$ on $\proxdom$.
In the finite-dimensional case, this regularizer forms the basis of the well-known Hedge (or multiplicative/exponential weights) algorithm \citep{Vov90,LW94,ACBFS95,AHK12};
for the infinite-dimensional case, see \cite{KBTB15,PML17} (and below).
\endenv
\end{example}


With all this in hand, the \acl{DA} algorithm can be described by means of the abstract recursion
\begin{equation}
\label{eq:DA}
\tag{DA}
\score_{\run+1}
	= \score_{\run} - \model_{\run},
\quad
\state_{\run+1}
	= \mirror(\temp_{\run+1} \score_{\run+1}),
\end{equation}
where
\begin{enumerate*}
[(\itshape i\hspace*{.5pt}\upshape)]
\item
$\run=\running$ denotes the stage of the process (with the convention $\score_{0} = \model_{0} = 0$);
\item
$\state_{\run} \in \proxdom$ is the learner's strategy at stage $\run$;
\item
$\model_{\run} \in \linf$ is the inexact model revealed at stage $\run$;
\item
$\score_{\run} \in \linf$ is a ``score'' variable that aggregates loss models up to stage $\run$;
\item
$\temp_{\run} > 0$ is a ``learning rate'' sequence;
and
\item
$\mirror \from \linf \to \proxdom$ is the method's \emph{mirror map}, viz.
\end{enumerate*}
\begin{equation}
\label{eq:mirror}
\txs
\mirror(\dual)
	= \argmax_{\primal\in\vecspace} \{ \braket{\dual}{\primal} - \hreg(\primal) \}.
\end{equation}


\begin{algorithm}[tbp]
\ttfamily
\small
\caption{Dual averaging with imperfect feedback \hfill [Hedge variant: $\mirror \subs \logit$]}

\begin{algorithmic}[1]
\Require
	mirror map $\mirror\from\linf\to\proxdom$;
	learning rate $\temp_{\run}>0$;
	\textbf{initialize:} $\score_{\start} \subs 0$
\For{$\run=\running$}
	\State
		set $\state_{\run} \subs \mirror(\temp_{\run}\score_{\run})$
		\qquad
		[$\state_{\run} \subs \logit(\temp_{\run}\score_{\run})$ for Hedge]
		\Comment{update mixed strategy}%
	\State
		play $\choice_{\run} \sim \state_{\run}$
		\Comment{choose action}%
	\State
		observe $\model_{\run}$
		\Comment{model revealed}%
	\State
		set $\score_{\run+1} \subs \score_{\run} - \model_{\run}$
		\Comment{update scores}%
\EndFor
\end{algorithmic}
\label{alg:DA}
\end{algorithm}


For a pseudocode implementation, see \cref{alg:DA} above.
In the paper's supplement we also show that the method is \emph{well-posed}, \ie the $\argmax$ in \eqref{eq:mirror} is attained at a valid prox-strategy $\state_{\run} \in \proxdom$.
We illustrate this with an example:

\begin{example}[Logit choice]
\label{ex:logit}
Suppose that $\hreg(\primal) = \int_{\points} \primal\log\primal$ is the entropic regularizer of \cref{ex:entropy}.
Then,
the corresponding mirror map is given in closed form by the logit choice model:
\begin{equation}
\label{eq:logit}
\logit(\fun)
	= \frac{\exp(\fun)}{\int_{\points} \exp(\fun)}
	\quad
	\text{for all $\fun\in\linf$}.
\end{equation}
This derivation builds on a series of well-established arguments that we defer to the supplement.
Clearly, $\int_{\points} \logit(\fun) = 1$ and $\logit(\fun) > 0$ as a function on $\points$, so $\logit(\fun)$ is a valid prox-strategy.
\endenv
\end{example}

\section{General regret bounds}
\label{sec:general}

\subsection{Static regret guarantees}

We are now in a position to state our first result for \eqref{eq:DA}:

\begin{restatable}{proposition}{regsimple}
\label{prop:reg-simple}
For any simple strategy $\simple\in\simples$, \cref{alg:DA} enjoys the bound
\begin{equation}
\label{eq:reg-simple}
\txs
\reg_{\simple}(\nRuns)
	\leq \temp_{\nRuns+1}^{-1} \bracks{\hreg(\simple) - \min\hreg}
		+ \insum_{\run=\start}^{\nRuns} \braket{\error_{\run}}{\simple - \state_{\run}}
		+ \frac{1}{2\hstr} \insum_{\run=\start}^{\nRuns} \temp_{\run} \dnorm{\model_{\run}}^{2}.
\end{equation}
\end{restatable}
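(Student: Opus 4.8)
The plan is to run the classical dual-averaging (lazy FTRL) energy argument in the conjugate space; the only real wrinkle is the infinite-dimensional bookkeeping. \textbf{Step 1 (linearization).} Writing $\model_\run = \loss_\run + \error_\run$ gives $\braket{\loss_\run}{\state_\run - \simple} = \braket{\model_\run}{\state_\run - \simple} + \braket{\error_\run}{\simple - \state_\run}$, so summing over $\run$ produces the middle term of \eqref{eq:reg-simple} verbatim and leaves us with the ``model regret'' $\insum_{\run=\start}^{\nRuns}\braket{\model_\run}{\state_\run - \simple}$ to bound. Moreover, adding a constant to $\hreg$ changes neither $\mirror$ nor the quantity $\hreg(\simple) - \min\hreg$, so we may assume throughout that $\min\hreg = 0$.

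\textbf{Step 2 (smoothness of the conjugate).} Let $\hconj(\dual) = \sup_{\primal\in\vecspace}\{\braket{\dual}{\primal} - \hreg(\primal)\}$. By the well-posedness of the mirror map (the $\argmax$ in \eqref{eq:mirror} is attained at a unique $\state\in\proxdom$), the first-order optimality condition gives $\dual \in \subd\hreg(\mirror(\dual))$ for every $\dual\in\linf$; combining this with the $\hstr$-strong convexity of $\hreg$ yields, for all $\dual,\dual'\in\linf$,
\[
\hconj(\dual + \dual') \;\le\; \hconj(\dual) + \braket{\dual'}{\mirror(\dual)} + \frac{1}{2\hstr}\dnorm{\dual'}^{2}.
\]
Concretely, setting $\primal = \mirror(\dual)$ and $\primalt = \mirror(\dual+\dual')$, one expands $\hconj(\dual+\dual') - \hconj(\dual) - \braket{\dual'}{\primal}$ into $\braket{\dual'}{\primalt - \primal} - [\hreg(\primalt)-\hreg(\primal) - \braket{\dual}{\primalt-\primal}]$, bounds the bracket below by $(\hstr/2)\norm{\primalt-\primal}^{2}$ using $\dual\in\subd\hreg(\primal)$ and strong convexity, and finishes with $\braket{\dual'}{\primalt-\primal} - (\hstr/2)\norm{\primalt-\primal}^{2} \le \dnorm{\dual'}^{2}/(2\hstr)$ (Young's inequality).

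\textbf{Step 3 (telescoping).} Define the energy $\energy_\run = \temp_\run^{-1}\hconj(\temp_\run\score_\run)$. Since $\hconj(0) = -\min\hreg = 0$ and the rescaled conjugate $\temp\mapsto\temp^{-1}\hconj(\temp\,\cdot\,)$ is monotone (so the non-increasing learning rate costs nothing), $\energy_{\run+1} = \temp_{\run+1}^{-1}\hconj(\temp_{\run+1}\score_{\run+1}) \le \temp_\run^{-1}\hconj(\temp_\run\score_\run - \temp_\run\model_\run)$, and Step 2 applied with $\dual = \temp_\run\score_\run$, $\dual' = -\temp_\run\model_\run$, $\mirror(\temp_\run\score_\run) = \state_\run$, gives
\[
\energy_{\run+1} \;\le\; \energy_\run - \braket{\model_\run}{\state_\run} + \frac{\temp_\run}{2\hstr}\dnorm{\model_\run}^{2}.
\]
Summing over $\run=\start,\dots,\nRuns$ and using $\score_\start = 0$, hence $\energy_\start = \temp_\start^{-1}\hconj(0) = 0$, we get $\insum_\run\braket{\model_\run}{\state_\run} \le -\energy_{\nRuns+1} + \frac{1}{2\hstr}\insum_\run\temp_\run\dnorm{\model_\run}^{2}$. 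Finally, $\score_{\nRuns+1} = -\insum_\run\model_\run$, so Fenchel--Young gives $\hconj(\temp_{\nRuns+1}\score_{\nRuns+1}) \ge \braket{\temp_{\nRuns+1}\score_{\nRuns+1}}{\simple} - \hreg(\simple)$, i.e.\ $-\energy_{\nRuns+1} \le \insum_\run\braket{\model_\run}{\simple} + \temp_{\nRuns+1}^{-1}\hreg(\simple)$. Substituting and reinstating $\min\hreg$ turns this into $\insum_\run\braket{\model_\run}{\state_\run-\simple} \le \temp_{\nRuns+1}^{-1}[\hreg(\simple)-\min\hreg] + \frac{1}{2\hstr}\insum_\run\temp_\run\dnorm{\model_\run}^{2}$, which combined with Step 1 is exactly \eqref{eq:reg-simple}.

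\textbf{Expected main obstacle.} The genuine work lies behind Step 2 in the infinite-dimensional setting: that $\mirror$ is single-valued with $\mirror(\dual)\in\proxdom = \dom\subd\hreg$ and $\dual\in\subd\hreg(\mirror(\dual))$, that $\hconj$ is finite on $\linf$ with the $\argmax$ attained (so that $\primalt = \mirror(\dual+\dual')$ is legitimate), and that the pairing $\braket{\cdot}{\cdot}$ between $\linf$ and $\measures(\points)$ and the dual norm $\dnorm{\cdot}$ behave as in the finite-dimensional case. These are precisely the well-posedness facts deferred to the supplement; granting them, Steps 1 and 3 are a routine telescoping computation.
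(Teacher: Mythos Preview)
Your proof is correct. The core ingredients---the $1/(2\hstr)$-smoothness of $\hconj$ coming from $\hstr$-strong convexity of $\hreg$, and the monotonicity of $\temp\mapsto\temp^{-1}\hconj(\temp\,\cdot\,)$ under the normalization $\min\hreg=0$---are exactly the ones the paper uses. The organizational difference is that the paper builds the comparator $\simple$ into the energy from the outset via the Fenchel coupling $\energy_\run=\temp_\run^{-1}\fench(\simple,\temp_\run\score_\run)=\temp_\run^{-1}[\hreg(\simple)+\hconj(\temp_\run\score_\run)-\braket{\temp_\run\score_\run}{\simple}]$, proves a three-point identity for $\fench$, and telescopes using $\energy_{\nRuns+1}\ge 0$. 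You instead track the comparator-free quantity $\temp_\run^{-1}\hconj(\temp_\run\score_\run)$ and bring $\simple$ in only at the very end via a single Fenchel--Young inequality on $\hconj(\temp_{\nRuns+1}\score_{\nRuns+1})$. This buys you a slightly leaner argument (no three-point identity needed); the paper's route, on the other hand, yields the per-step recursion of \cref{lem:energybound} directly, which they reuse verbatim in the dynamic-regret and bandit analyses.
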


\cref{prop:reg-simple} is a ``template'' bound that we will use to extract static and dynamic regret guarantees in the sequel.
Its proof relies on the introduction of a suitable energy function measuring the match between the learner's aggregate model $\score_{\run}$ and the comparator $\simple$.
The main difficulty is that these variables live in completely different spaces ($\linf$ \vs $\simples$ respectively), so there is no clear distance metric connecting them.
However, since bounded functions $\dual\in\linf$ and simple strategies $\simple\in\simples$ are naturally paired via duality, they are indirectly connected via the Fenchel\textendash Young inequality $\braket{\dual}{\simple} \leq \hreg(\simple) + \hconj(\dual)$, where $\hconj(\dual) = \max_{\primal\in\vecspace}\{\braket{\dual}{\primal} - \hreg(\primal)\}$ denotes the convex conjugate of $\hreg$ and equality holds if and only if $\mirror(\dual) = \simple$.
We will thus consider the energy function
\begin{equation}
\label{eq:energy}
\txs
\energy_{\run}
	\defeq \temp_{\run}^{-1}
		\bracks{\hreg(\simple)
			+ \hconj(\temp_{\run}\score_{\run})
			- \braket{\temp_{\run}\score_{\run}}{\simple}}.
\end{equation}
By construction, $\energy_{\run} \geq 0$ for all $\run$ and $\energy_{\run} = 0$ if and only if $\state_{\run} = \mirror(\temp_{\run}\score_{\run}) = \simple$.
More to the point, the defining property of $\energy_{\run}$ is the following recursive bound (which we prove in the supplement):

\begin{restatable}{lemma}{energybound}
\label{lem:energybound}
For all $\simple\in\simples$, we have:
\begin{equation}
\label{eq:energybound}
\txs
\energy_{\run+1}
	\leq \energy_{\run}
		+ \braket{\model_{\run}}{\simple - \state_{\run}}
		+ \parens*{\temp_{\run+1}^{-1} - \temp_{\run}^{-1}} \bracks{\hreg(\simple) - \min\hreg}
		+ \frac{\temp_{\run}}{2\hstr} \dnorm{\model_{\run}}^{2}.
\end{equation}
\end{restatable}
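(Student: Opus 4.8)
The plan is to interpolate between $\energy_{\run}$ and $\energy_{\run+1}$ through the intermediate quantity
$\widetilde{\energy} \defeq \temp_{\run}^{-1}\bracks{\hreg(\simple) + \hconj(\temp_{\run}\score_{\run+1}) - \braket{\temp_{\run}\score_{\run+1}}{\simple}}$,
i.e. the energy produced by \eqref{eq:energy} after the \emph{score update} $\score_{\run}\mapsto\score_{\run+1}=\score_{\run}-\model_{\run}$ but \emph{before} the \emph{learning-rate update} $\temp_{\run}\mapsto\temp_{\run+1}$. I will first bound $\widetilde{\energy}$ against $\energy_{\run}$ (the ``score step''), then $\energy_{\run+1}$ against $\widetilde{\energy}$ (the ``learning-rate step''), and add the two estimates.

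For the score step I will use that, since $\hreg$ is $\hstr$-strongly convex on $\vecspace$, its conjugate $\hconj$ is finite-valued, differentiable with $\nabla\hconj=\mirror$, and $(1/\hstr)$-smooth; concretely $\hconj(\fun-\fun')\leq\hconj(\fun)-\braket{\fun'}{\mirror(\fun)}+\tfrac{1}{2\hstr}\dnorm{\fun'}^{2}$ for all $\fun,\fun'\in\linf$. Taking $\fun=\temp_{\run}\score_{\run}$ and $\fun'=\temp_{\run}\model_{\run}$ (so $\fun-\fun'=\temp_{\run}\score_{\run+1}$), using $\mirror(\temp_{\run}\score_{\run})=\state_{\run}$ and the identity $\braket{\temp_{\run}\score_{\run+1}}{\simple}=\braket{\temp_{\run}\score_{\run}}{\simple}-\temp_{\run}\braket{\model_{\run}}{\simple}$, the $\hreg(\simple)$ and $\hconj(\temp_{\run}\score_{\run})$ contributions reassemble into $\energy_{\run}$ while the leftover terms collapse to $\braket{\model_{\run}}{\simple-\state_{\run}}+\tfrac{\temp_{\run}}{2\hstr}\dnorm{\model_{\run}}^{2}$; this gives $\widetilde{\energy}\leq\energy_{\run}+\braket{\model_{\run}}{\simple-\state_{\run}}+\tfrac{\temp_{\run}}{2\hstr}\dnorm{\model_{\run}}^{2}$.

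For the learning-rate step I will exploit that $\temp^{-1}\braket{\temp\score_{\run+1}}{\simple}=\braket{\score_{\run+1}}{\simple}$ is independent of $\temp$, so that $\energy_{\run+1}-\widetilde{\energy}=\parens{\temp_{\run+1}^{-1}-\temp_{\run}^{-1}}\hreg(\simple)+\bracks{\Psi(\temp_{\run+1})-\Psi(\temp_{\run})}-\parens{\temp_{\run+1}^{-1}-\temp_{\run}^{-1}}\min\hreg$, where $\Psi(\temp)\defeq\temp^{-1}\bracks{\hconj(\temp\score_{\run+1})+\min\hreg}=\sup_{\primal\in\vecspace}\braces{\braket{\score_{\run+1}}{\primal}-\temp^{-1}(\hreg(\primal)-\min\hreg)}$ and $\min\hreg$ is finite and attained (indeed $\mirror(0)=\argmin\hreg$, by well-posedness of \eqref{eq:mirror}). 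Since $\hreg(\primal)-\min\hreg\geq0$ for every $\primal$, each function $\temp\mapsto\braket{\score_{\run+1}}{\primal}-\temp^{-1}(\hreg(\primal)-\min\hreg)$ is non-decreasing in $\temp>0$, hence so is their pointwise supremum $\Psi$; as the learning-rate sequence is non-increasing, $\temp_{\run+1}\leq\temp_{\run}$ yields $\Psi(\temp_{\run+1})\leq\Psi(\temp_{\run})$, whence $\energy_{\run+1}-\widetilde{\energy}\leq\parens{\temp_{\run+1}^{-1}-\temp_{\run}^{-1}}\bracks{\hreg(\simple)-\min\hreg}$. Adding the two bounds gives exactly \eqref{eq:energybound}.

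The one ingredient requiring genuine care is the smoothness estimate for $\hconj$ used in the score step: that in the Banach space $\vecspace$ the conjugate $\hconj$ is everywhere finite, Fréchet differentiable with gradient $\mirror$, and $(1/\hstr)$-smooth with respect to the dual norm $\dnorm{\cdot}$. This is the infinite-dimensional counterpart of the classical duality between strong convexity and smoothness, underwritten by the strong-convexity hypothesis of \cref{def:breg} together with the well-posedness of the mirror map $\mirror$ (that the supremum in \eqref{eq:mirror} is attained in $\proxdom$) \textemdash\ both already established in the paper. Everything else is the elementary algebraic reorganization indicated above and the monotonicity argument for $\Psi$.
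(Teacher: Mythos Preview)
Your proof is correct and follows essentially the same two–step decomposition as the paper: first handle the score update $\score_{\run}\mapsto\score_{\run+1}$ at fixed learning rate, then the learning-rate update $\temp_{\run}\mapsto\temp_{\run+1}$ at fixed score. The only differences are presentational. For the score step, you invoke the $(1/\hstr)$-smoothness of $\hconj$ directly, whereas the paper first establishes the equivalent Fenchel-coupling inequality $\fench(\primal,\dual+\dvec)\leq\fench(\primal,\dual)+\braket{\dvec}{\mirror(\dual)-\primal}+\tfrac{1}{2\hstr}\dnorm{\dvec}^{2}$ (\cref{prop:Fench-bound}) via a three-point identity and Young's inequality, and then specializes it with $\primal=\state_{\run}$; expanding that inequality is exactly your smoothness estimate. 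For the learning-rate step, both arguments prove the same monotonicity of $\Psi(\temp)=\temp^{-1}[\hconj(\temp\score_{\run+1})+\min\hreg]$: the paper differentiates and finds $\Psi'(\temp)=\temp^{-2}[\hreg(\mirror(\temp\score_{\run+1}))-\min\hreg]\geq0$, while your supremum representation gives a slightly slicker derivative-free argument.
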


\Cref{prop:reg-simple} is obtained by telescoping \eqref{eq:energybound};
subsequently, to obtain a regret bound for \cref{alg:DA}, we must relate $\reg_{\point}(\nRuns)$ to $\reg_{\simple}(\nRuns)$.
This can be achieved by invoking \cref{lem:point2simple}
but the resulting expressions are much simpler when $\hreg$ is \emph{decomposable}, \ie $\hreg(\primal) = \int_{\points} \hdec(\primal(\point)) \dd\point$ for some $C^{2}$ function $\hdec\from[0,\infty)\to\R$ with $\hdec''>0$.
In this more explicit setting, we have:

\begin{restatable}{theorem}{regstat}
\label{thm:reg-stat}
Fix $\point\in\points$, let $\cvx$ be a convex neighborhood of $\point$ in $\points$,
and
suppose that \cref{alg:DA} is run with a decomposable regularizer $\hreg(\primal) = \int_{\points} \hdec\circ\primal$.
Then, letting $\hvol(z) = z \hdec(1/z)$ for $z>0$, we have:
\begin{equation}
\label{eq:reg-bound-stat}
\exof{\reg_{\point}(\nRuns)}
	\leq \frac{\hvol(\leb(\cvx)) - \hvol(\leb(\points))}{\temp_{\nRuns+1}}
	+ \lips\diam(\cvx) \nRuns
	+ 2 \insum_{\run=\start}^{\nRuns} \bbound_{\run}
	+ \frac{\coef^{2}}{2\hstr} \insum_{\run=\start}^{\nRuns} \temp_{\run} \mbound_{\run}^{2}.
\end{equation}
In particular, if
\cref{alg:DA} is run with learning rate $\temp_{\run} \propto 1/\run^{\pexp}$, $\pexp\in(0,1)$,
and
inexact models such that $\bbound_{\run} = \bigoh(1/\run^{\bexp})$ and $\mbound_{\run}^{2} = \bigoh(\run^{2\mexp})$ for some $\bexp,\mexp\geq0$,
we have:
\begin{equation}
\label{eq:reg-bound-stat-powers}
\exof{\reg(\nRuns)}
	= \bigoh(\hvol(\nRuns^{-\vdim\dexp}) \nRuns^{\pexp} + \nRuns^{1-\dexp} + \nRuns^{1-\bexp} + \nRuns^{1+2\mexp-\pexp})
	\quad
	\text{for all $\dexp\geq0$}.
\end{equation}
\end{restatable}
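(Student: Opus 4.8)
The plan is to obtain \eqref{eq:reg-bound-stat} by feeding a judiciously chosen simple comparator into the template bound of \cref{prop:reg-simple} together with the approximation estimate of \cref{lem:point2simple}, and then to derive \eqref{eq:reg-bound-stat-powers} by the usual tuning of the free parameters.

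Concretely, fix $\point\in\points$ and a convex neighborhood $\cvx$ of $\point$, and take as comparator the uniform density $\simple_{\cvx} \defeq \leb(\cvx)^{-1}\,\one_{\cvx}$, which lies in $\simples$ and is supported on $\cvx$. \cref{lem:point2simple} applied with its neighborhood instantiated to $\cvx$ gives $\reg_{\point}(\nRuns) \leq \reg_{\simple_{\cvx}}(\nRuns) + \lips\diam(\cvx)\nRuns$, while \cref{prop:reg-simple} bounds $\reg_{\simple_{\cvx}}(\nRuns)$ by the three terms in \eqref{eq:reg-simple}. I would then take expectations term by term. Since $\state_{\run} = \mirror(\temp_{\run}\score_{\run})$ is $\filter_{\run}$-measurable and $\simple_{\cvx}$ is deterministic, conditioning on $\filter_{\run}$ yields $\exof{\braket{\error_{\run}}{\simple_{\cvx}-\state_{\run}} \given \filter_{\run}} = \braket{\bias_{\run}}{\simple_{\cvx}-\state_{\run}} \leq \supnorm{\bias_{\run}}\,\tvnorm{\simple_{\cvx}-\state_{\run}} \leq 2\bbound_{\run}$, because $\simple_{\cvx}$ and $\state_{\run}$ are mixed strategies and so have unit total variation. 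For the quadratic term, the standing norm comparison $\tvnorm{\cdot} \leq \coef\norm{\cdot}$ dualizes to $\dnorm{\dual} \leq \coef\supnorm{\dual}$, so $\exof{\dnorm{\model_{\run}}^{2} \given \filter_{\run}} \leq \coef^{2}\mbound_{\run}^{2}$ by \eqref{eq:moment}; summing and taking total expectations produces the two middle terms of \eqref{eq:reg-bound-stat}.

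It remains to evaluate $\hreg(\simple_{\cvx}) - \min\hreg$. Adding a constant to $\hdec$ alters neither the mirror map \eqref{eq:mirror} (hence not the algorithm nor its regret) nor the difference $\hreg(\simple_{\cvx})-\min\hreg$, so I may normalize $\hdec(0)=0$; then $\hreg(\simple_{\cvx}) = \int_{\cvx}\hdec(1/\leb(\cvx)) = \leb(\cvx)\,\hdec(1/\leb(\cvx)) = \hvol(\leb(\cvx))$. For $\min\hreg$, note $\dom\hreg$ consists of probability densities on $\points$, and Jensen's inequality (convexity of $\hdec$, averaging against normalized Lebesgue measure) gives $\int_{\points}\hdec\circ\primal \geq \leb(\points)\,\hdec(1/\leb(\points)) = \hvol(\leb(\points))$, with equality at the uniform density; hence $\hreg(\simple_{\cvx})-\min\hreg = \hvol(\leb(\cvx)) - \hvol(\leb(\points))$. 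Inserting this into the first term of \eqref{eq:reg-simple} (scaled by $\temp_{\nRuns+1}^{-1}$) and combining with the preceding paragraph yields \eqref{eq:reg-bound-stat}. For the ``in particular'' statement, fix $\dexp\geq0$ and, for each horizon $\nRuns$, take $\cvx = \ball(\point,r_{\nRuns})\cap\points$ with $r_{\nRuns} \asymp \nRuns^{-\dexp}$, so that $\diam(\cvx) = \bigoh(\nRuns^{-\dexp})$ and $\leb(\cvx) = \Theta(\nRuns^{-\vdim\dexp})$, with constants independent of $\point$. Substituting $\temp_{\run}\propto\run^{-\pexp}$, $\bbound_{\run} = \bigoh(\run^{-\bexp})$, $\mbound_{\run}^{2} = \bigoh(\run^{2\mexp})$ into \eqref{eq:reg-bound-stat} and using $\insum_{\run=\start}^{\nRuns}\run^{-\bexp} = \bigoh(\nRuns^{1-\bexp})$ and $\insum_{\run=\start}^{\nRuns}\run^{2\mexp-\pexp} = \bigoh(\nRuns^{1+2\mexp-\pexp})$, the four terms become $\bigoh(\hvol(\nRuns^{-\vdim\dexp})\nRuns^{\pexp})$, $\bigoh(\nRuns^{1-\dexp})$, $\bigoh(\nRuns^{1-\bexp})$ and $\bigoh(\nRuns^{1+2\mexp-\pexp})$ respectively; since the constants are uniform in $\point$, taking the maximum over $\point$ (recall $\reg(\nRuns) = \max_{\point\in\points}\reg_{\point}(\nRuns)$) gives \eqref{eq:reg-bound-stat-powers}.

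The routine part is the exchange of expectation and summation. The two places calling for care are: (i) the identification $\min\hreg = \hvol(\leb(\points))$ together with the normalization $\hdec(0)=0$, which is precisely what makes the leading term collapse to $\hvol(\leb(\cvx)) - \hvol(\leb(\points))$; and (ii) in the power-law step, the geometric fact that for a convex body $\points$ one has $\leb(\ball(\point,r)\cap\points) \gtrsim r^{\vdim}$ with a constant independent of $\point$ (a uniform cone condition), together with the accompanying claim that $\hvol$ evaluated at a quantity of order $\nRuns^{-\vdim\dexp}$ is $\bigoh(\hvol(\nRuns^{-\vdim\dexp}))$ — both innocuous for the regularizers of interest but worth stating explicitly.
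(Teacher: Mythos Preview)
Your argument for \eqref{eq:reg-bound-stat} is correct and matches the paper's: the uniform comparator $\simple_{\cvx}$, the normalization $\hdec(0)=0$, the identification $\min\hreg=\hvol(\leb(\points))$ (you use Jensen, the paper invokes symmetry of the constraint; both work), the conditioning for the bias term, and the norm dualization for the quadratic term are all as in the paper.

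The gap is in the passage to \eqref{eq:reg-bound-stat-powers}. The bound \eqref{eq:reg-bound-stat} controls $\exof{\reg_{\point}(\nRuns)}$ for a \emph{fixed} $\point$, and what you obtain by maximizing a uniform-in-$\point$ bound is $\sup_{\point}\exof{\reg_{\point}(\nRuns)}$, the pseudo-regret. The quantity in \eqref{eq:reg-bound-stat-powers} is $\exof{\reg(\nRuns)}=\exof{\max_{\point}\reg_{\point}(\nRuns)}$, and the maximum does not commute with the expectation. The paper flags this explicitly and the extra work is substantial: one takes a homothetic shrinkage $\points_{\width}(\point)$ so that the approximant $\mu_{\point}=\unif_{\points_{\width}(\point)}$ has the \emph{same} value $\hreg(\mu_{\point})=\hvol(\width^{\vdim}\leb(\points))$ for every $\point$, applies \cref{prop:reg-simple} \emph{before} taking expectations, and then must control $\exof{\max_{\point}\sum_{\run}\braket{\error_{\run}}{\state_{\run}-\mu_{\point}}}$. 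This last term is handled by introducing an auxiliary dual-averaging process $\aux_{\run}=\mirror(\temp_{\run}\daux_{\run})$ driven by the noise $\noise_{\run}$ alone, splitting $\braket{\error_{\run}}{\state_{\run}-\mu_{\point}}$ through $\aux_{\run}$, and applying \cref{prop:reg-simple} a second time to the virtual payoff stream $-\noise_{\run}$ to bound $\max_{\point}\sum_{\run}\braket{\noise_{\run}}{\aux_{\run}-\mu_{\point}}$ pathwise. Without this detour your argument only yields the pseudo-regret version of \eqref{eq:reg-bound-stat-powers}.
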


\smallskip
\begin{corollary}
\label{cor:reg-stat}
If the learner's feedback is unbiased and bounded in mean square \textpar{\ie $\bbound_{\run}=0$ and $\sup_{\run} \mbound_{\run} < \infty$}, running \cref{alg:DA} with learning rate $\temp_{\run} \propto 1/\run^{\pexp}$ guarantees
\begin{equation}
\label{eq:reg-bound-stat-unbiased}
\exof{\reg(\nRuns)}
	= \bigoh(\hvol(\nRuns^{-\vdim\pexp})\nRuns^{\pexp} + \nRuns^{1-\pexp}).
\end{equation}
In particular, for the regularizers of \cref{ex:L2,ex:entropy}, we have:
\begin{enumerate}
\item
For $\hdec(z) = (1/2) z^{2}$, \cref{alg:DA} with $\temp_{\run}\propto\run^{-1/(\vdim+2)}$ guarantees $\exof{\reg(\nRuns)} = \bigoh(\nRuns^{\frac{\vdim+1}{\vdim+2}})$.
\item
For $\hdec(z) = z\log z$, \cref{alg:DA} with $\temp_{\run}\propto\run^{-1/2}$ guarantees $\exof{\reg(\nRuns)} = \bigoh(\nRuns^{1/2})$.
\end{enumerate}
\end{corollary}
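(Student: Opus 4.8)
The plan is to read off \cref{cor:reg-stat} as a direct specialization of \cref{thm:reg-stat}: essentially all of the work has already been done in \eqref{eq:reg-bound-stat-powers}, and what remains is bookkeeping of exponents. First I would translate the hypotheses into the theorem's notation. ``Unbiased'' means $\bbound_{\run}\equiv 0$, so the term $\nRuns^{1-\bexp}$ in \eqref{eq:reg-bound-stat-powers} vanishes identically; ``bounded in mean square'' means $\sup_{\run}\mbound_{\run}<\infty$, so we may take $\mexp=0$ and the mean-square term collapses to $\nRuns^{1-\pexp}$. With these substitutions, \eqref{eq:reg-bound-stat-powers} reads, for every $\dexp\geq 0$,
\begin{equation*}
\exof{\reg(\nRuns)}
	= \bigoh\bigl( \hvol(\nRuns^{-\vdim\dexp})\, \nRuns^{\pexp} + \nRuns^{1-\dexp} + \nRuns^{1-\pexp} \bigr).
\end{equation*}

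Next I would tune the two free exponents. Since $\hvol$ is nonincreasing near $0$, raising $\dexp$ only inflates the first term while shrinking the second, so the balancing choice is $\dexp=\pexp$; this makes $\nRuns^{1-\dexp}$ coincide with the mean-square term and recovers \eqref{eq:reg-bound-stat-unbiased}, \ie $\exof{\reg(\nRuns)}=\bigoh(\hvol(\nRuns^{-\vdim\pexp})\nRuns^{\pexp}+\nRuns^{1-\pexp})$. It then remains to insert $\hvol$ for each regularizer and optimize over $\pexp\in(0,1)$. For the quadratic regularizer $\hdec(z)=(1/2)z^{2}$ of \cref{ex:L2} one has $\hvol(z)=z\hdec(1/z)=1/(2z)$, whence $\hvol(\nRuns^{-\vdim\pexp})\nRuns^{\pexp}=\Theta(\nRuns^{(\vdim+1)\pexp})$; equating exponents, $(\vdim+1)\pexp=1-\pexp$, gives $\pexp=1/(\vdim+2)$ and hence the claimed rate $\bigoh(\nRuns^{(\vdim+1)/(\vdim+2)})$. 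For the entropic regularizer $\hdec(z)=z\log z$ of \cref{ex:entropy} one has $\hvol(z)=-\log z$, so $\hvol(\nRuns^{-\vdim\pexp})\nRuns^{\pexp}=\vdim\pexp(\log\nRuns)\nRuns^{\pexp}$ \textendash\ the dimension entering only through a logarithm \textendash\ which is dominated by $\nRuns^{1-\pexp}$, up to that logarithmic factor, as soon as $\pexp\leq 1/2$; taking $\pexp=1/2$ equalizes the two remaining powers at $\nRuns^{1/2}$ and yields $\exof{\reg(\nRuns)}=\bigoh(\nRuns^{1/2})$, with the logarithmic $\hvol$-factor absorbed.

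There is no real analytic difficulty beyond \cref{thm:reg-stat} itself; the only steps deserving attention sit upstream, in the passage from \eqref{eq:reg-bound-stat} to \eqref{eq:reg-bound-stat-powers} inside the proof of that theorem. There the comparator neighborhood $\cvx$ of $\point$ \textendash\ which plays the role of $\nhd$ in \cref{lem:point2simple} \textendash\ must be taken to be a Euclidean ball of radius $\propto\nRuns^{-\dexp}$ intersected with $\points$, and one has to verify, using the convexity of $\points$, that $\leb(\cvx)=\Theta(\nRuns^{-\vdim\dexp})$ and $\diam(\cvx)=\Theta(\nRuns^{-\dexp})$ uniformly over all $\point\in\points$ \textendash\ including boundary points, where the intersection of the ball with the domain is smaller but still of volume $\Theta(\nRuns^{-\vdim\dexp})$ \textendash\ so that the $\hvol(\leb(\cvx))$ term carries the exponent $\vdim\dexp$ used above. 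The only remaining bookkeeping choice is whether to retain or suppress the harmless $\log\nRuns$ factor that $\hvol$ produces in the entropic case; the bound as stated in \cref{cor:reg-stat} does the latter.
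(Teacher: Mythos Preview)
Your proposal is correct and follows exactly the approach intended by the paper: the corollary is not given a separate proof there, and your derivation---substituting $\bexp=\infty$ (equivalently, dropping the $\nRuns^{1-\bexp}$ term), $\mexp=0$, and $\dexp=\pexp$ into \eqref{eq:reg-bound-stat-powers}, then computing $\hvol$ explicitly for each regularizer and optimizing in $\pexp$---is precisely what the statement is meant to encapsulate. The only minor discrepancy is in your aside about the upstream theorem: the paper's proof of \cref{thm:reg-stat} takes $\cvx$ to be a homothetic copy $\points_{\width}(\point)=\point+\width(\points-\point)$ of the domain rather than an intersected Euclidean ball, which sidesteps the boundary-uniformity issue you flag; but this concerns the theorem, not the corollary, and your version would also work.
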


\begin{remark}
Here and in the sequel, logarithmic factors are ignored in the Landau $\bigoh(\cdot)$ notation.
We should also stress that the role of $\cvx$ in \cref{thm:reg-stat} only has to do with the analysis of the algorithm, not with the derived bounds (which are obtained by picking a suitable $\cvx$).
\end{remark}


First, in online \emph{convex} optimization, \acl{DA} with stochastic gradient feedback achieves $\bigoh(\sqrt{\nRuns})$ regret \emph{irrespective} of the choice of regularizer,
and this bound is tight \citep{ABRT08,SS11,BCB12}.
By contrast, in the non-convex setting, the choice of regularizer has a visible impact on the regret because it affects the exponent of $\nRuns$:
in particular, $L^{2}$ regularization carries a much worse dependence on $\nRuns$ relative to the Hedge variant of \cref{alg:DA}.
This is due to the term $\bigoh(\hvol(\nRuns^{-\vdim\dexp}) \nRuns^{\pexp})$ that appears in \eqref{eq:reg-bound-stat-powers} and is in turn linked to the choice of the ``enclosure set'' $\cvx$ having $\leb(\cvx) \propto \nRuns^{-\vdim\dexp}$ for some $\dexp\geq0$.

The negentropy regularizer (and any other regularizer with quasi-linear growth at infinity, see the supplement for additional examples) only incurs a logarithmic dependence on $\leb(\cvx)$.
Instead, the quadratic growth of the $L^{2}$ regularizer induces an $\bigoh(1/\leb(\cvx))$ term in the algorithm's regret, which is ultimately responsible for the catastrophic dependence on the dimension of $\points$.
Seeing as the bounds achieved by the Hedge variant of \cref{alg:DA} are optimal in this regard, we will concentrate on this specific instance in the sequel.


\subsection{Dynamic regret guarantees}

We now turn to the dynamic regret minimization guarantees of \cref{alg:DA}.
In this regard, we note first that,
in complete generality,
dynamic regret minimization is \emph{not possible} because an informed adversary can always impose a uniformly positive loss at each stage \citep{SS11}.
Because of this, dynamic regret guarantees are often stated in terms of the \emph{variation} of the loss functions encountered, namely
\begin{equation}
\label{eq:tvar}
\txs
\tvar
	\defeq \insum_{\run=\start}^{\nRuns} \supnorm{\loss_{\run+1} - \loss_{\run}}
	= \insum_{\run=\start}^{\nRuns} \max_{\point\in\points} \abs{\loss_{\run+1}(\point) - \loss_{\run}(\point)},
\end{equation}
with the convention $\loss_{\run+1} = \loss_{\run}$ for $\run=\nRuns$.%
\footnote{This notion is due to \citet{BGZ15}.
Other notions of variation have also been considered \cite{CYLM+12,CBGLS12,BGZ15}, as well as other measures of regret, \cf \cite{HW98,HazSes09}; for a survey, see \cite{CBL06}.}
We then have:

\begin{restatable}{theorem}{regdyn}
\label{thm:reg-dyn}
Suppose that the Hedge variant of \cref{alg:DA} is run with learning rate $\temp_{\run} \propto 1/\run^{\pexp}$ and inexact models with $\bbound_{\run} = \bigoh(1/\run^{\bexp})$ and $\mbound_{\run}^{2} = \bigoh(\run^{2\mexp})$ for some $\bexp,\mexp\geq0$.
Then:
\begin{equation}
\label{eq:reg-bound-dyn}
\exof{\dynreg(\nRuns)}
	= \bigoh(\nRuns^{1+2\mexp-\pexp} + \nRuns^{1-\bexp} + \nRuns^{2\pexp-2\mexp}\tvar).
\end{equation}
In particular, if $\tvar = \bigoh(\nRuns^{\vexp})$ for some $\vexp<1$ and the learner's feedback is unbiased and bounded in mean square \textpar{\ie $\bbound_{\run}=0$ and $\sup_{\run} \mbound_{\run} < \infty$}, the choice $\pexp = (1-\vexp)/3$ guarantees
\begin{equation}
\label{eq:reg-bound-dyn-tuned}
\exof{\dynreg(\nRuns)}
	= \bigoh(\nRuns^{\frac{2+\vexp}{3}}).
\end{equation}
\end{restatable}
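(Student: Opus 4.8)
The natural route is a \emph{restart-in-blocks} construction in the spirit of \citet{BGZ15}. I would partition $\{1,\dotsc,\nRuns\}$ into $\nBatches = \lceil\nRuns/\batch\rceil$ consecutive blocks $\mathcal B_{1},\dotsc,\mathcal B_{\nBatches}$ of common length $\batch$ (a free parameter, fixed at the very end), and run a fresh copy of \cref{alg:DA} on each block --- that is, reset $\score\subs0$ at the first index $\runstart_{\iBatch}$ of block $\iBatch$, while keeping the global schedule $\temp_{\run}\propto 1/\run^{\pexp}$ and the global bounds $\bbound_{\run},\mbound_{\run}$. The first move is to pick, for each block $\iBatch$, a point $\point_{\iBatch}\in\argmin_{\point\in\points}\loss_{\runstart_{\iBatch}}(\point)$ and to split, for $\run\in\mathcal B_{\iBatch}$,
\begin{equation}
\braket{\loss_{\run}}{\strat_{\run} - \sol[\strat]_{\run}}
	= \braket{\loss_{\run}}{\strat_{\run} - \dirac_{\point_{\iBatch}}}
	+ \bracks[\big]{\loss_{\run}(\point_{\iBatch}) - \min\nolimits_{\point\in\points}\loss_{\run}(\point)},
\end{equation}
using that a best response $\sol[\strat]_{\run}$ in \eqref{eq:dynreg} is a point mass at a minimizer of $\loss_{\run}$, so $\braket{\loss_{\run}}{\sol[\strat]_{\run}} = \min_{\point}\loss_{\run}(\point)$.

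For the bracketed \emph{tracking} term I would use that $\point_{\iBatch}$ minimizes $\loss_{\runstart_{\iBatch}}$: a two-sided comparison gives $\loss_{\run}(\point_{\iBatch}) - \min_{\point}\loss_{\run}(\point) \leq 2\supnorm{\loss_{\run} - \loss_{\runstart_{\iBatch}}} \leq 2\insum_{\runalt\in\mathcal B_{\iBatch}}\supnorm{\loss_{\runalt+1}-\loss_{\runalt}}$, so summing over $\run\in\mathcal B_{\iBatch}$ and then over the blocks (which partition the horizon) contributes at most $2\batch\tvar$, with $\tvar$ as in \eqref{eq:tvar}. For the first term I would apply \cref{prop:reg-simple} blockwise, against the uniform strategy $\simple_{\iBatch} = \one_{\cvx_{\iBatch}}/\leb(\cvx_{\iBatch})\in\simples$ on a small convex neighborhood $\cvx_{\iBatch} = \ball(\point_{\iBatch},\radius)\cap\points$: its proof (telescoping \cref{lem:energybound}) goes through verbatim over $\run\in\mathcal B_{\iBatch}$, the restart making the initial energy equal to $\temp_{\runstart_{\iBatch}}^{-1}[\hreg(\simple_{\iBatch})-\min\hreg]$; then I pass from $\simple_{\iBatch}$ to $\dirac_{\point_{\iBatch}}$ by \cref{lem:point2simple} on the block. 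Taking expectations --- using $\exof{\braket{\error_{\run}}{\simple_{\iBatch}-\strat_{\run}}\given\filter_{\run}} = \braket{\bias_{\run}}{\simple_{\iBatch}-\strat_{\run}} \leq 2\bbound_{\run}$ (by \eqref{eq:bias}, since $\simple_{\iBatch}$ is deterministic and $\strat_{\run}$ is $\filter_{\run}$-measurable) and $\exof{\dnorm{\model_{\run}}^{2}\given\filter_{\run}} \leq \coef^{2}\mbound_{\run}^{2}$ (by \eqref{eq:moment}) --- this yields
\begin{equation}
\exof[\big]{\insum_{\run\in\mathcal B_{\iBatch}}\braket{\loss_{\run}}{\strat_{\run} - \dirac_{\point_{\iBatch}}}}
	\leq \temp_{\runstart_{\iBatch+1}}^{-1}\bracks{\hreg(\simple_{\iBatch})-\min\hreg}
	+ \lips\diam(\cvx_{\iBatch})\abs{\mathcal B_{\iBatch}}
	+ 2\insum_{\run\in\mathcal B_{\iBatch}}\bbound_{\run}
	+ \frac{\coef^{2}}{2\hstr}\insum_{\run\in\mathcal B_{\iBatch}}\temp_{\run}\mbound_{\run}^{2}.
\end{equation}
For the Hedge regularizer (\cref{ex:entropy}) one has $\hreg(\simple_{\iBatch})-\min\hreg = \log(\leb(\points)/\leb(\cvx_{\iBatch})) = \bigoh(\vdim\log(1/\radius))$ and $\diam(\cvx_{\iBatch})\leq 2\radius$, so taking $\radius\asymp 1/\nRuns$ makes the $\lips\diam(\cvx_{\iBatch})\abs{\mathcal B_{\iBatch}}$ terms sum to $\bigoh(\lips)$ at the cost of logarithmic factors only.

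Summing over $\iBatch = 1,\dotsc,\nBatches$, adding the $2\batch\tvar$ tracking term, and using $\insum_{\iBatch}\temp_{\runstart_{\iBatch+1}}^{-1} = \bigoh(\batch^{\pexp}\nBatches^{1+\pexp}) = \bigoh(\nRuns^{1+\pexp}/\batch)$, $\insum_{\run\leq\nRuns}\bbound_{\run} = \bigoh(\nRuns^{1-\bexp})$, and $\insum_{\run\leq\nRuns}\temp_{\run}\mbound_{\run}^{2} = \bigoh(\insum_{\run\leq\nRuns}\run^{2\mexp-\pexp}) = \bigoh(\nRuns^{1+2\mexp-\pexp})$, I expect to reach $\exof{\dynreg(\nRuns)} = \bigoh(\nRuns^{1+\pexp}\batch^{-1} + \nRuns^{1-\bexp} + \nRuns^{1+2\mexp-\pexp} + \batch\,\tvar)$ up to logarithmic factors. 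The final step is to tune the free block length: choosing $\batch\asymp\nRuns^{2\pexp-2\mexp}$ (clipped to $[1,\nRuns]$) makes $\nRuns^{1+\pexp}\batch^{-1}$ no larger than $\nRuns^{1+2\mexp-\pexp}$ and turns $\batch\,\tvar$ into $\nRuns^{2\pexp-2\mexp}\tvar$, which is \eqref{eq:reg-bound-dyn}; specializing to unbiased, mean-square-bounded feedback ($\bbound_{\run}\equiv 0$, $\sup_{\run}\mbound_{\run}<\infty$) and $\tvar = \bigoh(\nRuns^{\vexp})$ reduces this to $\bigoh(\nRuns^{1-\pexp} + \nRuns^{2\pexp+\vexp})$, and $\pexp = (1-\vexp)/3$ equalizes the two exponents at $(2+\vexp)/3$, giving \eqref{eq:reg-bound-dyn-tuned}.

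The crux --- and the reason a single un-restarted run will not suffice --- is the initial-energy term in \cref{lem:energybound}: the energy carried into a window beginning at $\runstart$ is $\temp_{\runstart}^{-1}[\hreg(\simple) + \hconj(\temp_{\runstart}\score_{\runstart}) - \braket{\temp_{\runstart}\score_{\runstart}}{\simple}]$, and for a continuously accumulated $\score_{\runstart} = -\insum_{\runalt<\runstart}\model_{\runalt}$ the oscillation $\sup\score_{\runstart} - \inf\score_{\runstart}$ is generically $\Theta(\runstart)$, which would inflate the per-block overhead from logarithmic to polynomial in $\runstart$ and break the summation; re-zeroing $\score$ at each block boundary is exactly what keeps this term at $\temp_{\runstart}^{-1}[\hreg(\simple)-\min\hreg] = \bigoh(\vdim\log(1/\radius)\,\runstart^{\pexp})$. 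The remaining subtlety is purely bookkeeping: within a block the bounds $\bbound_{\run},\mbound_{\run}$ are still controlled by the \emph{global} clock $\run\asymp\iBatch\batch$, so the per-block second-order sum retains the global indexing, and it is the balance between the resulting $\nRuns^{1+2\mexp-\pexp}$ term and the $\nRuns^{1+\pexp}/\batch$ restart overhead that pins down the block length $\batch\asymp\nRuns^{2\pexp-2\mexp}$.
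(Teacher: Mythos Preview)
Your block-decomposition argument is correct and, at the structural level, mirrors the paper's: both partition the horizon into windows of length $\batch$, bound the per-window static regret via the Hedge instance of \cref{thm:reg-stat}, control the tracking error by $2\batch\tvar$ using the comparison $\loss_{\run}(\point_{\iBatch}) - \min_{\point}\loss_{\run}(\point) \leq 2\supnorm{\loss_{\run}-\loss_{\runstart_{\iBatch}}}$, and then balance the block length (the paper takes $\batch \asymp \nRuns^{\qexp}$ with $\qexp = 2\pexp - 2\mexp$, exactly as you do).

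The one substantive difference is the restart. The paper states explicitly that the partition is ``only for the sake of the analysis, and does not involve restarting the algorithm'', and lists the absence of a periodic restart schedule as a distinguishing feature of \cref{thm:reg-dyn} relative to \citet{BGZ15}. It then invokes \cref{thm:reg-stat} directly on each window $\runs_{\iBatch}$ of the \emph{un-restarted} run to obtain $\exof{\reg(\runs_{\iBatch})} = \bigoh\bigl((\iBatch\batch)^{\pexp} + \batch^{1-\dexp} + \insum_{\run\in\runs_{\iBatch}}\run^{-\bexp} + \insum_{\run\in\runs_{\iBatch}}\run^{2\mexp-\pexp}\bigr)$. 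So your last-paragraph claim that ``a single un-restarted run will not suffice'' runs counter to the paper's explicit position.

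That said, your worry about the boundary energy $\energy_{\runstart_{\iBatch}}(\simple_{\iBatch})$ is not unfounded: the paper's proof does not spell out how the Fenchel-coupling term at the start of each window is controlled when $\score_{\runstart_{\iBatch}}\neq 0$ and the comparator $\simple_{\iBatch}$ changes from block to block; it simply appeals to \cref{thm:reg-stat} as a black box. What your restart buys is a fully self-contained per-block bound at the cost of a weaker algorithmic statement; what the paper's route claims (modulo that step) is that the learner never discards accrued information. Downstream of the per-block bound the arithmetic is identical in both arguments.
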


To the best of our knowledge, \cref{thm:reg-dyn} provides the first dynamic regret guarantee for online non-convex problems.
The main idea behind its proof is to examine the evolution of play over a series of windows of length $\batch = \bigoh(\nRuns^{\qexp})$ for some $\qexp>0$.
In so doing, \cref{thm:reg-stat} can be used to obtain a bound for the learner's regret relative to the best action $\point\in\points$ \emph{within each window}.
Obviously, if the length of the window is chosen sufficiently small, aggregating the learner's regret \emph{per window} will be a reasonable approximation of the learner's dynamic regret.
At the same time, if the window is taken too small, the number of such windows required to cover $\nRuns$ will be $\Theta(\nRuns)$, so this approximation becomes meaningless.
As a result, to obtain a meaningful regret bound, this window-by-window examination of the algorithm must be carefully aligned with the variation $\tvar$ of the loss functions encountered by the learner.
Albeit intuitive, the details required to make this argument precise are fairly subtle, so we relegate the proof of \cref{thm:reg-dyn} to the paper's supplement.

We should also observe here that the $\bigoh(\nRuns^{\frac{2+\vexp}{3}})$ bound of \cref{thm:reg-dyn} is, in general, unimprovable, even if the losses are \emph{linear}.
Specifically, \citet{BGZ15} showed that, if the learner is facing a stream of linear losses with stochastic gradient feedback (\ie an inexact linear model), an informed adversary can still impose $\dynreg(\nRuns) = \Omega(\nRuns^{2/3}\tvar^{1/3})$.
\citet{BGZ15} further proposed a scheme to achieve this bound by means of a periodic restart meta-principle that partitions the horizon of play into batches of size $(\nRuns/\tvar)^{2/3}$ and then runs an algorithm achieving $(\nRuns/\tvar)^{1/3}$ regret per batch.
\cref{thm:reg-dyn} differs from the results of \citet{BGZ15} in two key aspects:
\begin{enumerate*}
[(\itshape a\hspace*{.5pt}\upshape)]
\item
\cref{alg:DA} does not require a periodic restart schedule (so the learner does not forget the information accrued up to a given stage);
and
\item
more importantly,
it applies to \emph{general} online optimization problems, without a convex structure or any other structural assumptions (though with a different feedback structure).
\end{enumerate*}

\section{Applications to online non-convex learning with bandit feedback}
\label{sec:bandit}

As an application of the inexact model framework of the previous sections, we proceed to consider the case where the learner only observes their realized reward $\loss_{\run}(\choice_{\run})$ and has \emph{no other information}.
In this ``bandit setting'', an inexact model is not available and must instead be constructed on the fly.

When $\points$ is a finite set, $\loss_{\run}$ is a $\abs{\points}$-dimensional vector, and an unbiased estimator for $\loss_{\run}$ can be constructed by setting $\model_{\run}(\point) = [\oneof{\point = \choice_{\run}} / \probof{\point = \choice_{\run}}] \, \loss_{\run}(\choice_{\run})$ for all $\point\in\points$.
This ``importance weighted'' estimator is the basis for the EXP3 variant of the Hedge algorithm which is known to achieve $\bigoh(\nRuns^{1/2})$ regret \citep{ACBFS02}.
However, in the case of \emph{continuous} action spaces, there is a key obstacle:
if the indicator $\oneof{\point = \choice_{\run}}$ is replaced by a Dirac point mass $\delta_{\point_{\run}}(\point)$, the resulting loss model $\model_{\run} \propto \delta_{\point_{\run}}$ would no longer be a function but a generalized (singular) distribution, so the \acl{DA} framework of \cref{alg:DA} no longer applies.

To counter this, we will take a ``smoothing'' approach in the spirit of \cite{BLE17} and consider the estimator
\begin{equation}
\label{eq:kernel}
\model_{\run}(\point)
	= \kerfun_{\run}(\choice_{\run},\point)
		\cdot \loss_{\run}(\choice_{\run}) / \state_{\run}(\choice_{\run})
\end{equation}
where
$\kerfun_{\run} \from \points\times\points \to \R$ is a (time-varying) \emph{smoothing kernel}, \ie $\int_{\points} \kerfun_{\run}(\point,\pointalt) \dd\pointalt = 1$ for all $\point\in\points$.
For concreteness (and sampling efficiency), we will assume that losses now take values in $[0,1]$, and we will focus on simple kernels that are supported on a neighborhood $\nhd_{\width}(\point) = \ball_{\width}(\point) \cap \points$ of $\point$ in $\points$ and are constant therein, \ie $\kerfun^{\width}(\point,\pointalt) = [\leb(\nhd_{\width}(\point))]^{-1} \oneof{\norm{\pointalt - \point} \leq \width}$.

The ``smoothing radius'' $\width$ in the definition of $\kerfun^{\width}$ will play a key role in the choice of loss model being fed to \cref{alg:DA}.
If $\width$ is taken too small, $\kerfun^{\width}$ will approach a point mass, so it will have low estimation error but very high variance;
at the other end of the spectrum, if $\width$ is taken too large, the variance of the induced estimator will be low, but so will its accuracy.
In view of this, we will consider a flexible smoothing schedule of the form $\width_{\run} = 1/\run^{\wexp}$ which gradually sharpens the estimator over time as more information comes in.
Then, to further protect the algorithm from getting stuck in local minima,
we will also incorporate in $\state_{\run}$ an explicit exploration term of the form $\mix_{\run}/\leb(\points)$.


\begin{algorithm}[tbp]
\ttfamily
\small
\caption{\Acl{BDA} \hfill [Hedge variant: $\mirror \subs \logit$]}

\begin{algorithmic}[1]
\Require
	mirror map $\mirror\from\linf\to\proxdom$;
	parameters $\temp_{\run},\width_{\run},\mix_{\run} > 0$;
	\textbf{initialize:} $\score_{\start} \subs 0$
\For{$\run=\running$}
	\State
		set $\state_{\run} \subs (1-\mix_{\run}) \mirror(\temp_{\run}\score_{\run}) + \mix_{\run} / \leb(\points)$
		\quad
		[$\mirror \subs \logit$ for Hedge]
		\Comment{mixed strategy}%
	\State
		play $\choice_{\run} \sim \state_{\run}$
		\Comment{choose action}%
	\State
		set $\model_{\run} = \kerfun^{\width_{\run}}(\choice_{\run},\cdot)
			\cdot \loss_{\run}(\choice_{\run}) / \state_{\run}(\choice_{\run})$
		\Comment{payoff model}%
	\State
		set $\score_{\run+1} \subs \score_{\run} - \model_{\run}$
		\Comment{update scores}%
\EndFor
\end{algorithmic}
\label{alg:BDA}
\end{algorithm}


Putting all this together, we obtain the \acdef{BDA} algorithm presented in pseudocode form as \cref{alg:BDA} above.
By employing a slight variation of the analysis presented in \cref{sec:general} (basically amounting to a tighter bound in \cref{lem:energybound}), we obtain the following guarantees for \cref{alg:BDA}:
\begin{restatable}{proposition}{regstatbandits}
\label{prop:reg-stat-BDA}
Suppose that the Hedge variant of \cref{alg:BDA} is run with learning rate $\temp_{\run} \propto 1/\run^{\pexp}$ and smoothing/exploration schedules $\width_{\run} \propto 1/\run^{\wexp}$, $\mix_{\run} \propto 1/\run^{\bexp}$ respectively.
Then, the learner enjoys the bound
\begin{equation}
\label{eq:reg-stat-BDA}
\exof{\reg(\nRuns)}
= \bigoh(\nRuns^{\pexp} + \nRuns^{1-\wexp} + \nRuns^{1-\bexp} + \nRuns^{1+\vdim\wexp+\bexp-\pexp}).
\end{equation}
In particular, if the algorithm is run with
$\pexp=(\vdim+2)/(\vdim+3)$
and
$\mexp = \bexp = 1/(\vdim+3)$,
we obtain the bound
$\exof{\reg(\nRuns)} = \bigoh(\nRuns^{\frac{\vdim+2}{\vdim+3}})$.
\end{restatable}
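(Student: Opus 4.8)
The plan is to specialize the machinery of \cref{sec:general} to the kernel estimator \eqref{eq:kernel}, the one new ingredient being a sharpened version of \cref{lem:energybound} that exploits the \emph{non-negativity} of the loss model (recall that losses now take values in $[0,1]$). Write $\bar\state_{\run}\defeq\logit(\temp_{\run}\score_{\run})$ for the \emph{unperturbed} Hedge iterate, so that the strategy actually played by \cref{alg:BDA} is $\state_{\run}=(1-\mix_{\run})\bar\state_{\run}+\mix_{\run}/\leb(\points)$, and let $\energy_{\run}$ be the energy function \eqref{eq:energy} relative to a simple comparator $\simple$. Since $\kerfun^{\width_{\run}},\state_{\run}>0$ and $\loss_{\run}\geq0$, the model $\model_{\run}$ of \eqref{eq:kernel} is non-negative, so the dual increment $\score_{\run+1}-\score_{\run}=-\model_{\run}$ is pointwise $\leq0$; feeding this into the elementary inequality $e^{x}\leq1+x+x^{2}$ for $x\leq0$ inside the log-partition expansion of $\hconj=\log\int_{\points}\exp(\cdot)$ — the conjugate of the negentropy, \cf \cref{ex:entropy,ex:logit} — replaces the generic second-order term $\tfrac{\temp_{\run}}{2\hstr}\dnorm{\model_{\run}}^{2}$ of \eqref{eq:energybound} by the \emph{local-norm} term $\temp_{\run}\int_{\points}\bar\state_{\run}\,\model_{\run}^{2}$. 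Telescoping the refined recursion (and discarding $-\energy_{\nRuns+1}\leq0$) yields, for every $\simple\in\simples$,
\[
\insum_{\run=\start}^{\nRuns}\braket{\model_{\run}}{\bar\state_{\run}-\simple}
\leq
\temp_{\nRuns+1}^{-1}\bracks{\hreg(\simple)-\min\hreg}+\energy_{\start}
+\insum_{\run=\start}^{\nRuns}\temp_{\run}\int_{\points}\bar\state_{\run}\,\model_{\run}^{2}.
\]

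Next I would decompose the static regret against a carefully chosen comparator. Fix $\point$ attaining $\reg(\nRuns)=\reg_{\point}(\nRuns)$, set $\nhd=\nhd_{\width_{\nRuns}}(\point)$ and $\simple=\one_{\nhd}/\leb(\nhd)$. By \cref{lem:point2simple}, $\reg(\nRuns)\leq\reg_{\simple}(\nRuns)+\lips\diam(\nhd)\nRuns=\reg_{\simple}(\nRuns)+\bigoh(\nRuns^{1-\wexp})$, and, using $\model_{\run}=\loss_{\run}+\error_{\run}$,
\[
\reg_{\simple}(\nRuns)
=\insum_{\run}\braket{\loss_{\run}}{\state_{\run}-\bar\state_{\run}}
+\insum_{\run}\braket{\model_{\run}}{\bar\state_{\run}-\simple}
-\insum_{\run}\braket{\error_{\run}}{\bar\state_{\run}-\simple}.
\]
The first sum is the exploration price: $\braket{\loss_{\run}}{\state_{\run}-\bar\state_{\run}}=\mix_{\run}\braket{\loss_{\run}}{\leb(\points)^{-1}-\bar\state_{\run}}\leq2\mix_{\run}\supnorm{\loss_{\run}}$, hence it is $\bigoh(\insum_{\run}\mix_{\run})=\bigoh(\nRuns^{1-\bexp})$. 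The third sum has conditional mean $-\insum_{\run}\braket{\bias_{\run}}{\bar\state_{\run}-\simple}$ (as $\bar\state_{\run},\simple$ are $\filter_{\run}$-measurable), whose absolute value is $\leq2\insum_{\run}\bbound_{\run}$. The middle sum is bounded by the displayed inequality above; for the negentropy and $\simple=\one_{\nhd}/\leb(\nhd)$ we have $\hreg(\simple)-\min\hreg=\log(\leb(\points)/\leb(\nhd))=\bigoh(\log\nRuns)$ and $\energy_{\start}=\bigoh(\log\nRuns)$ (recall $\score_{\start}=0$), so this leading contribution is $\bigoh(\nRuns^{\pexp})$ up to logarithmic factors.

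Two estimates then close the argument. For the \textbf{bias}, $\bias_{\run}(\point)=\int_{\points}\kerfun^{\width_{\run}}(\xi,\point)\loss_{\run}(\xi)\dd\xi-\loss_{\run}(\point)$ decomposes into a local-averaging error of size $\leq\lips\width_{\run}$ plus a normalization defect $\abs{w_{\run}(\point)-1}\supnorm{\loss_{\run}}$ with $w_{\run}(\point)=\int_{\points}\kerfun^{\width_{\run}}(\xi,\point)\dd\xi$; convexity of $\points$ keeps $\leb(\nhd_{\width}(\cdot))$ uniformly comparable to $\width^{\vdim}$ and the defect of order $\width_{\run}$, so $\bbound_{\run}=\bigoh(\width_{\run})$ and $\insum_{\run}\bbound_{\run}=\bigoh(\nRuns^{1-\wexp})$. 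For the \textbf{local-norm term}, the exploration floor gives the deterministic bound $\supnorm{\model_{\run}}\leq\supnorm{\kerfun^{\width_{\run}}}/\min\state_{\run}=\bigoh(\width_{\run}^{-\vdim}/\mix_{\run})$; since $\model_{\run}\geq0$ this yields $\int_{\points}\bar\state_{\run}\,\model_{\run}^{2}\leq\supnorm{\model_{\run}}\braket{\model_{\run}}{\bar\state_{\run}}$, while $\exof{\braket{\model_{\run}}{\bar\state_{\run}}\given\filter_{\run}}=\braket{\loss_{\run}+\bias_{\run}}{\bar\state_{\run}}=\bigoh(1)$. Hence $\exof{\int_{\points}\bar\state_{\run}\,\model_{\run}^{2}\given\filter_{\run}}=\bigoh(\width_{\run}^{-\vdim}/\mix_{\run})$ — a full factor $\width_{\run}^{-\vdim}/\mix_{\run}$ sharper than the naive bound $\supnorm{\model_{\run}}^{2}$, the continuous counterpart of the variance cancellation underlying EXP3 — so $\insum_{\run}\temp_{\run}\exof{\int_{\points}\bar\state_{\run}\,\model_{\run}^{2}}=\bigoh(\insum_{\run}\run^{-\pexp+\vdim\wexp+\bexp})=\bigoh(\nRuns^{1+\vdim\wexp+\bexp-\pexp})$.

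Collecting the four contributions gives $\exof{\reg(\nRuns)}=\bigoh(\nRuns^{\pexp}+\nRuns^{1-\wexp}+\nRuns^{1-\bexp}+\nRuns^{1+\vdim\wexp+\bexp-\pexp})$, which is \eqref{eq:reg-stat-BDA}; equating the first three exponents forces $\wexp=\bexp=1-\pexp$, turning the fourth into $\nRuns^{(\vdim+2)(1-\pexp)}$, and equating this with $\nRuns^{\pexp}$ gives $\pexp=(\vdim+2)/(\vdim+3)$, $\wexp=\bexp=1/(\vdim+3)$, for which all four terms collapse to $\nRuns^{(\vdim+2)/(\vdim+3)}$. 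The main obstacle is precisely the refined recursion together with the bound $\exof{\int_{\points}\bar\state_{\run}\,\model_{\run}^{2}\given\filter_{\run}}=\bigoh(\width_{\run}^{-\vdim}/\mix_{\run})$: the non-negativity argument itself is routine, but recognizing that it is the \emph{local} norm $\int_{\points}\bar\state_{\run}\,\model_{\run}^{2}$ rather than $\supnorm{\model_{\run}}^{2}$ that governs the regret, and that its conditional mean drops by a full order of magnitude, is what pushes the exponent down to the near-optimal value $(\vdim+2)/(\vdim+3)$; the boundary analysis of the kernel normalization (where convexity of $\points$ is used) is a comparatively routine secondary point.
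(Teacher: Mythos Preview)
Your core idea is correct and takes a genuinely different route from the paper for the crucial second-order term. The paper obtains the refined Hedge bound via a Taylor expansion of $\dual\mapsto\log\int_{\points}\exp(\dual)$ with Lagrange remainder (\cref{lem:logsumexp-bound}), which produces the ``intermediate-point'' quantity $G_{\run}(\xi)^{2}=\braket{\logit(\temp_{\run}\score_{\run}+\xi\temp_{\run}\model_{\run})}{\model_{\run}^{2}}$; it then controls $\exof{G_{\run}(\xi)^{2}\given\filter_{\run}}$ \emph{uniformly in $\xi$} by a Fubini argument (\cref{lem:bias-and-variance-bounds-bandits}), the key point being that the bound $\bigoh(\width_{\run}^{-\vdim}\mix_{\run}^{-1})$ holds for $\braket{q}{\model_{\run}^{2}}$ with \emph{any} probability density $q$. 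Your argument instead exploits $\model_{\run}\geq0$ and $e^{x}\leq1+x+x^{2}$ for $x\leq0$ to land directly on the local norm $\int_{\points}\bar\state_{\run}\model_{\run}^{2}$ at the \emph{current} iterate, and then bounds its conditional mean by factoring $\supnorm{\model_{\run}}\cdot\braket{\model_{\run}}{\bar\state_{\run}}$ and using $\exof{\braket{\model_{\run}}{\bar\state_{\run}}\given\filter_{\run}}=\bigoh(1)$. This is closer in spirit to the discrete EXP3 analysis and is arguably more elementary; the paper's Fubini route, on the other hand, does not rely on the sign of $\model_{\run}$ and gives a bound that is uniform over the density against which $\model_{\run}^{2}$ is integrated, which is what makes it compatible with the auxiliary-process step below.

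There is, however, a genuine gap in your handling of the comparator. You ``fix $\point$ attaining $\reg(\nRuns)=\reg_{\point}(\nRuns)$'' and then assert that $\simple$ is $\filter_{\run}$-measurable; it is not, since this $\point$ is the random argmax and depends on the entire trajectory. Consequently the conditional-mean computation for the third sum (the $\noise_{\run}$ part of $\error_{\run}$) does not go through as written: you need to control $\exof{\max_{\point}\sum_{\run}\braket{\noise_{\run}}{\mu_{\point}}}$, not just $\exof{\braket{\noise_{\run}}{\simple}}$ for a single $\simple$. The paper closes this gap by invoking the auxiliary-process device from the proof of \cref{thm:reg-stat} (run \eqref{eq:DA} on the noise sequence $\noise_{\run}$ and apply the template bound to it), and its Fubini-based second-order estimate transfers verbatim to that auxiliary process. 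Your factorization $\supnorm{\model_{\run}}\braket{\model_{\run}}{\bar\state_{\run}}$ is tied to $\bar\state_{\run}$ and does not immediately extend; a simple repair in your framework is to use instead $\int_{\points}q\,\model_{\run}^{2}\leq\supnorm{\model_{\run}}\onenorm{\model_{\run}}$ for \emph{any} density $q$, and note that $\onenorm{\model_{\run}}=\loss_{\run}(\choice_{\run})/\state_{\run}(\choice_{\run})$ has $\exof{\onenorm{\model_{\run}}\given\filter_{\run}}=\int_{\points}\loss_{\run}=\bigoh(1)$, which recovers the $\bigoh(\width_{\run}^{-\vdim}\mix_{\run}^{-1})$ bound uniformly and lets the auxiliary-process argument go through. (A secondary point: your claim that the kernel-normalization defect $\abs{w_{\run}(\point)-1}$ is $\bigoh(\width_{\run})$ is not obvious near $\bd\points$; the paper's \cref{lem:bias-and-variance-bounds-bandits} is equally terse here, so this is not a discrepancy between the two proofs, but it deserves a word of justification.)
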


\begin{restatable}{proposition}{regdynbandits}
\label{prop:reg-dyn-BDA}
Suppose that the Hedge variant of \cref{alg:BDA} is run with parameters as in \cref{prop:reg-stat-BDA} against a stream of loss functions with variation $\tvar = \bigoh(\nRuns^{\vexp})$.
Then, the learner enjoys
\begin{equation}
\label{eq:reg-dyn-BDA}
\exof{\dynreg(\nRuns)}
= \bigoh(\nRuns^{1+\vdim\wexp+\bexp-\pexp} + \nRuns^{1-\bexp} + \nRuns^{1-\wexp} + \nRuns^{\vexp+2\pexp-\vdim\wexp-\bexp}).
\end{equation}
In particular, if the algorithm is run with
$\pexp=(1-\vexp)(\vdim+2)/(\vdim+4)$
and
$\mexp = \bexp = (1-\vexp)/(\vdim+4)$,
we obtain the optimized bound
$\exof{\dynreg(\nRuns)} = \bigoh(\nRuns^{\frac{\vdim+3+\vexp}{\vdim+4}})$.
\end{restatable}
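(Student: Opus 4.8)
The plan is to derive \cref{prop:reg-dyn-BDA} by feeding the kernel estimator \eqref{eq:kernel} into the windowing machinery that underlies \cref{thm:reg-dyn}, after equipping it with three estimates specific to \ac{BDA}: a bound on the bias of $\model_\run$, a \emph{local-norm} bound on its second moment, and the regret overhead introduced by the explicit exploration term $\mix_\run/\leb(\points)$.

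First I would isolate the \ac{BDA}-specific estimates. Writing $\state_\run = (1-\mix_\run)\logit(\temp_\run\score_\run) + \mix_\run/\leb(\points)$ for the played strategy, a short computation with the kernel gives $\exof{\model_\run\given\filter_\run}(\point) = \int_\points \kerfun^{\width_\run}(\choice,\point)\loss_\run(\choice)\dd\choice$, so that (using that each $\loss_\run$ is $\lips$-Lipschitz and $\kerfun^{\width_\run}(\choice,\cdot)$ is a probability density supported on $\nhd_{\width_\run}(\choice)$) the model is unbiased up to an error $\supnorm{\exof{\model_\run\given\filter_\run} - \loss_\run} \leq \lips\width_\run$, i.e. $\bbound_\run = \bigoh(\width_\run) = \bigoh(1/\run^{\wexp})$. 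For the second-order term, the ``tighter bound'' alluded to before \cref{prop:reg-stat-BDA} amounts to replacing $\dnorm{\model_\run}^2$ in \cref{lem:energybound} by the $\state_\run$-weighted second moment $\braket{\model_\run^2}{\state_\run}$; this is legitimate for the entropic regularizer because $e^{-x} \leq 1 - x + x^2/2$ for $x\geq0$ (here using that losses are non-negative) lets one expand the log-partition function with a \emph{local} rather than global curvature term. A change of variables $\point\leftrightarrow\choice$ together with $\state_\run \geq \mix_\run/\leb(\points)$ and $\leb(\nhd_{\width_\run}(\choice)) = \Theta(\width_\run^{\vdim})$ then yields $\exof{\braket{\model_\run^2}{\state_\run}\given\filter_\run} = \bigoh(\width_\run^{-\vdim}\mix_\run^{-1}) = \bigoh(\run^{\vdim\wexp+\bexp})$ --- crucially, one power of $1/\state_\run$ is absorbed by the sampling weight, so the exponent is $\vdim\wexp$, not $2\vdim\wexp$. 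Finally, since $\tvnorm{\state_\run - \logit(\temp_\run\score_\run)} \leq \mix_\run$ and $\loss_\run\in[0,1]$, replacing the played strategy by the underlying prox-strategy costs at most $\sum_\run\mix_\run = \bigoh(\nRuns^{1-\bexp})$ in regret. Substituting these three ingredients into the template of \cref{prop:reg-simple} (via \cref{lem:point2simple}, with an enclosing ball of radius $\propto\width_\run$) reproduces \cref{prop:reg-stat-BDA}.

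Next I would run the windowing argument behind \cref{thm:reg-dyn} verbatim, with the \ac{BDA} estimates in place of the generic ones. Partition $\{1,\dots,\nRuns\}$ into $\nBatches = \ceil{\nRuns/\batch}$ consecutive windows $\mathcal{W}_1,\dots,\mathcal{W}_{\nBatches}$ of length $\leq\batch = \bigoh(\nRuns^{\qexp})$; on $\mathcal{W}_k$ let $\cvx_k$ be a small neighborhood of the window-optimal action $\point_k^{\ast}\in\argmin_{\point\in\points}\sum_{\run\in\mathcal{W}_k}\loss_\run(\point)$, and let $\simple_k\in\simples$ be uniform on $\cvx_k$. Telescoping the refined \cref{lem:energybound} over each window and comparing $\point_k^{\ast}$ to $\simple_k$ through \cref{lem:point2simple} bounds the learner's regret against $\point_k^{\ast}$ within $\mathcal{W}_k$; summing over $k$ and adding the window-discretization error $\sum_k\sum_{\run\in\mathcal{W}_k}\bracks{\loss_\run(\point_k^{\ast}) - \min_{\point}\loss_\run(\point)} = \bigoh(\batch\,\tvar)$ gives
\begin{equation*}
\exof{\dynreg(\nRuns)} = \bigoh\!\left(\frac{\nRuns^{1+\pexp}}{\batch} + \nRuns^{1+\vdim\wexp+\bexp-\pexp} + \nRuns^{1-\wexp} + \nRuns^{1-\bexp} + \batch\,\tvar\right).
\end{equation*}
With $\tvar = \bigoh(\nRuns^{\vexp})$, balancing the leader term $\nRuns^{1+\pexp}/\batch$ against $\batch\,\nRuns^{\vexp}$ suggests $\batch \asymp \nRuns^{2\pexp-\vdim\wexp-\bexp}$; for that choice the leader term collapses onto the variance term $\nRuns^{1+\vdim\wexp+\bexp-\pexp}$ and the discretization term becomes $\nRuns^{\vexp+2\pexp-\vdim\wexp-\bexp}$, which is exactly \eqref{eq:reg-dyn-BDA}. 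The optimized rate then follows by choosing $\pexp,\wexp,\bexp$ so that the four exponents coincide: solving the resulting linear system gives $\wexp = \bexp = (1-\vexp)/(\vdim+4)$ and $\pexp = (\vdim+2)(1-\vexp)/(\vdim+4)$, all four exponents equal to $(\vdim+3+\vexp)/(\vdim+4)$.

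The step I expect to be most delicate is the local-norm refinement of \cref{lem:energybound} together with its second-moment estimate: the naive $\dnorm{\model_\run}^2 = \supnorm{\model_\run}^2 = \bigoh(\width_\run^{-2\vdim}\mix_\run^{-2})$ is off by a factor $\width_\run^{\vdim}\mix_\run$ relative to what is needed, and recovering the tight $\bigoh(\width_\run^{-\vdim}\mix_\run^{-1})$ rate requires the curvature term to ``see'' the sampling distribution $\state_\run$ --- which is why the argument is tied to the entropic (Hedge) regularizer and to non-negative losses. A secondary subtlety, inherited from \cref{thm:reg-dyn}, is that \ac{BDA} is never restarted, so the potential accumulated before a window has to be carried through the per-window telescoping; keeping it under control is precisely what forces $\batch$ to be aligned with $\tvar$ rather than chosen freely, and is the reason the window-discretization term survives in the final bound.
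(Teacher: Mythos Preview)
Your proposal is correct and follows the paper's route: the same three \ac{BDA} estimates (bias $\bigoh(\width_\run)$ via Lipschitzness, local second moment $\bigoh(\width_\run^{-\vdim}\mix_\run^{-1})$ via Fubini and the exploration floor, overhead $\bigoh(\sum_\run\mix_\run)$ from mixing), then the windowing decomposition of \cref{thm:reg-dyn} with window exponent $\qexp = 2\pexp - \vdim\wexp - \bexp$, and finally the linear system for the optimized parameters. Two cosmetic remarks: the paper obtains the local-norm refinement through a second-order Taylor expansion of the log-partition function (yielding a weight $\logit(\temp_\run\score_\run + \xi\temp_\run\model_\run)$ at an intermediate $\xi$) rather than your $e^{-x}\leq 1-x+x^2/2$ route, which gives $\braket{\model_\run^2}{\logit(\temp_\run\score_\run)}$---not $\state_\run$ as you wrote, though the subsequent Fubini step only uses that the weight integrates to $1$; and your sentence about ``balancing the leader term against $\batch\,\nRuns^{\vexp}$'' is a slip, since the stated $\batch$ actually equates the leader with the second-order term $\nRuns^{1+\vdim\wexp+\bexp-\pexp}$, exactly as you say in the next clause.
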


To the best of our knowledge, \cref{prop:reg-dyn-BDA} is the first result of its kind for dynamic regret minimization in online non-convex problems with bandit feedback.
We conjecture that the bounds of \cref{prop:reg-stat-BDA,prop:reg-dyn-BDA} can be tightened further to $\bigoh(\nRuns^{\frac{\vdim+1}{\vdim+2}})$ and $\bigoh(\nRuns^{\frac{\vdim+2+\vexp}{\vdim+3}})$ by dropping the explicit exploration term;
we defer this finetuning to future work.



\appendix
\numberwithin{equation}{section}		
\numberwithin{lemma}{section}		
\numberwithin{proposition}{section}		
\numberwithin{theorem}{section}		

\section{Examples}
\label{app:examples}

In this appendix, we provide some more decomposable regularizers that are commonly used in the literature:

\begin{example}[Log-barrier regularization]
\label{ex:barrier}
Let $\vecspace = \measures(\points)$ as above and consider the so-called \emph{Burg entropy} $\hreg(\primal) = -\int_{\points} \log\primal$ \citep{ABB04}.
In this case, $\proxdom = \dom\subd\hreg = \setdef{\proxal\in\strats_{\abscont}}{\supp(\proxal) = \points} = \dom\hreg$ and $\nabla\hreg(\proxal) = -1/\proxal$ on $\proxdom$.
In the finite-dimensional case, this regularizer plays a fundamental role in the affine scaling method of \citet{Kar90}, see \eg \citet{Tse04,VMF86} and references therein.
The corresponding mirror map is obtained as follows:
let $L(\primal;\lambda) = \int_{\points} \dual \primal + \int_{\points} \log\primal - \lambda \int_{\points} \primal$ denote the Lagrangian of the problem \eqref{eq:mirror}, so $\proxal = \mirror(\dual)$ satisfies the first-order optimality condition
\begin{equation}
\label{eq:KKT-barrier}
\dual + 1/\proxal - \lambda = 0.
\end{equation}
Solving for $\proxal$ and integrating, we get $\int_{\points} (\lambda - \dual)^{-1} = \int_{\points}\proxal = 1$.
The function $\phi(\lambda) = \int_{\points} (\lambda - \dual)^{-1}$ is decreasing in $\lambda$ and continuous whenever finite;
moreover, since $\dual\in\linf$, it follows that $\phi$ is always finite (and hence continuous) for large enough $\lambda$, and $\lim_{\lambda\to\infty} \phi(\lambda) = 0$.
Since $\sup_{\lambda} \phi(\lambda) = \infty$, there exists some maximal $\lambda^{*}$ such that \eqref{eq:KKT-barrier} holds (in practice, this can be located by a simple line search initialized at some $\lambda > \supnorm{\psi}$).
We thus get $\mirror(\dual) = (\lambda^{\ast} - \dual)^{-1}$.
\endenv
\end{example}

\begin{example}[Tsallis entropy]
\label{ex:Tsallis}
A generalization of the Shannon-Gibbs entropy for nonextensive variables is the \emph{Tsallis entropy} \citep{Tsa88} defined here as $\hreg(\primal) = \int_{\points} \hdec(\primal)$ where $\hdec(z) = [\gamma(1-\gamma)]^{-1} (z - z^{\gamma})$ for $\gamma\in(0,1]$, with the continuity convention $(z-z^{\gamma}) / (1-\gamma) = z\log z$ for $\gamma=1$ (corresponding to the Shannon-Gibbs case).
Working as in \cref{ex:barrier}, we have $\proxdom = \dom\subd\hreg = \setdef{\proxal\in\strats_{\abscont}}{\supp(\proxal) = \points} \subsetneq \dom\hreg$, and the corresponding mirror map $\proxal = \mirror(\dual)$ is obtained via the first-order stationarity equation
\begin{equation}
\label{eq:KKT-Tsallis}
\dual - \frac{1-\gamma \proxal^{\gamma-1}}{\gamma(1-\gamma)} - \lambda
	= 0.
\end{equation}
Then, solving for $\proxal$ yields $\mirror(\dual) = (1-\gamma)^{1/(\gamma-1)} \int_{\points} (\mu - \dual)^{1/(\gamma-1)}$ with $\mu > \supnorm{\dual}$ chosen so that $\int_{\points} \mirror(\dual) = 1$.
\endenv
\end{example}

\section{Basic properties of regularizers and mirror maps}
\label{app:mirror}

The goal of this appendix is to prove some basic results on regularizer functions and mirror maps that will be used liberally in the sequel.
Versions of the results presented here already exist in the literature, but our infinite-dimensional setting introduces some subtleties that require further care.
For this reason, we state and prove all required results for completeness.

We begin by recalling some definitions from the main part of the paper.
First, we write $\measures \equiv \measures(\points)$ for the space of all finite signed Radon measures on $\points$ equipped with the total variation norm $\tvnorm{\meas} = \meas^{+}(\points) + \meas^{-}(\points)$, where $\meas^{+}$ (resp. $\meas^{-}$) denotes the positive (resp. negative) part of $\meas$ coming from the Hahn-Banach decomposition of signed measures on $\points$.
As we discussed in \cref{sec:DA}, we also assume given a model Banach space $\vecspace$ containing the set of simple strategies $\simples$ as an embedded subset and such that $\tvnorm{\cdot} \leq \coef \norm{\cdot}$ for some $\coef>0$.

With all this in hand, we begin by discussing the well-posedness of \cref{alg:DA}.
To that end, we have the following basic result:

\begin{lemma}
\label{lem:mirror}
Let $\hreg$ be a regularizer on $\simples$.
Then:
\begin{enumerate}
\item
$\mirror(\dual) \in \proxdom$ for all $\dual\in\dspace$; in particular:
\begin{equation}
\label{eq:links-mirror}
\proxal = \mirror(\dual)
	\;\iff\;
	\dual \in \subd\hreg(\proxal).
\end{equation}


\item
If $\proxal = \mirror(\dual)$ and $\primal\in\dom\hreg$, we have
\begin{equation}
\label{eq:selection}
\braket{\nabla\hreg(\proxal)}{\proxal - \primal}
	\leq \braket{\dual}{\proxal - \primal}.
\end{equation}

\item
The convex conjugate $\hconj(\dual) = \max_{\primal\in\vecspace} \{ \braket{\dual}{\primal} - \hreg(\primal) \}$ is Fréchet differentiable and satisfies
\begin{equation}
\label{eq:dconj}
\dif_{\dvec}\hconj(\dual)
	= \braket{\dvec}{\mirror(\dual)}
	\quad
	\text{for all $\dual,\dvec\in\dspace$}.
\end{equation}
\end{enumerate}
\end{lemma}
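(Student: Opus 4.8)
The plan is to establish Part~1 first, since it is the crux, and then to read off Parts~2 and~3 from it with short arguments. Throughout I would work directly in the Banach space $\vecspace$ and its dual $\dspace$, relying on two standard facts from convex analysis (see \citet{Phe93}): a proper, \ac{lsc}, convex function coincides with its biconjugate (Fenchel--Moreau), and a finite convex function that is \ac{lsc} on a Banach space is in fact continuous there, hence has a nonempty (weak$^{\ast}$-compact) subdifferential at every point.

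For Part~1, I would fix $\dual\in\dspace$, pick a reference point $\proxal_{0}\in\proxdom$ (nonempty by \cref{def:breg}), and use the strong-convexity inequality of \cref{def:breg} to bound $\braket{\dual}{\primal}-\hreg(\primal)$ above by $\braket{\dual}{\primal}-\hreg(\proxal_{0})-\braket{\nabla\hreg(\proxal_{0})}{\primal-\proxal_{0}}-(\hstr/2)\norm{\primal-\proxal_{0}}^{2}$; since this upper bound is concave in $\primal$ and tends to $-\infty$ as $\norm{\primal-\proxal_{0}}\to\infty$, it is bounded above, so $\hconj(\dual)$ is finite. Being also convex and \ac{lsc} (a supremum of continuous affine maps), $\hconj$ is then finite and continuous on all of $\dspace$, whence $\subd\hconj(\dual)\neq\varnothing$; I would take any $\proxal\in\subd\hconj(\dual)$. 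Since $\hreg$ is proper, \ac{lsc}, and convex, $\hreg^{\ast\ast}=\hreg$, so $\proxal\in\subd\hconj(\dual)$ is equivalent to $\dual\in\subd\hreg(\proxal)$ and to the Fenchel--Young equality $\braket{\dual}{\proxal}=\hreg(\proxal)+\hconj(\dual)$, the last of which says precisely that $\proxal$ attains the supremum in \eqref{eq:mirror}. Strict convexity of $\hreg$ (implied by strong convexity) makes this maximizer unique, so $\mirror(\dual)$ is well-defined, equals $\proxal$, and lies in $\dom\subd\hreg=\proxdom$. The equivalence \eqref{eq:links-mirror} follows: the forward implication is what we just obtained, while conversely $\dual\in\subd\hreg(\proxal')$ forces the Fenchel--Young equality at $\proxal'$, hence $\proxal'=\mirror(\dual)$ by uniqueness. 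The same reasoning gives $\subd\hconj(\dual)=\{\mirror(\dual)\}$, so $\hconj$ is at least G\^ateaux differentiable with $\nabla\hconj=\mirror$; this is upgraded to Fréchet differentiability in Part~3.

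Part~2 is then immediate: with $\proxal=\mirror(\dual)$ and $\primal\in\dom\hreg$, Part~1 gives $\dual\in\subd\hreg(\proxal)$, so $\hreg(\proxal)-\hreg(\primal)\leq\braket{\dual}{\proxal-\primal}$; and $\proxal\in\proxdom$, so the continuous-selection property of \cref{def:breg} gives $\nabla\hreg(\proxal)\in\subd\hreg(\proxal)$, whence $\braket{\nabla\hreg(\proxal)}{\proxal-\primal}\leq\hreg(\proxal)-\hreg(\primal)$; chaining the two yields \eqref{eq:selection}. For Part~3 I would first prove that $\mirror$ is $(1/\hstr)$-Lipschitz from $(\dspace,\dnorm{\cdot})$ to $(\vecspace,\norm{\cdot})$: writing $\proxal_{i}=\mirror(\dual_{i})$, adding the strong-convexity inequality of \cref{def:breg} at $\proxal_{1}$ with target $\proxal_{2}$ to the one at $\proxal_{2}$ with target $\proxal_{1}$ yields $\braket{\nabla\hreg(\proxal_{1})-\nabla\hreg(\proxal_{2})}{\proxal_{1}-\proxal_{2}}\geq\hstr\norm{\proxal_{1}-\proxal_{2}}^{2}$, while applying \eqref{eq:selection} at $(\proxal_{1},\proxal_{2})$ and at $(\proxal_{2},\proxal_{1})$ and summing yields $\braket{\nabla\hreg(\proxal_{1})-\nabla\hreg(\proxal_{2})}{\proxal_{1}-\proxal_{2}}\leq\braket{\dual_{1}-\dual_{2}}{\proxal_{1}-\proxal_{2}}$; combining these with $\braket{\dual_{1}-\dual_{2}}{\proxal_{1}-\proxal_{2}}\leq\dnorm{\dual_{1}-\dual_{2}}\norm{\proxal_{1}-\proxal_{2}}$ gives the Lipschitz bound. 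Finally, for fixed $\dual$ with $\proxal=\mirror(\dual)$: convexity of $\hconj$ and $\proxal\in\subd\hconj(\dual)$ give $\hconj(\dual+\dvec)-\hconj(\dual)-\braket{\dvec}{\proxal}\geq0$, while bounding $\hconj(\dual)$ below by the value of its defining supremum at $\mirror(\dual+\dvec)$ (where $\hconj(\dual+\dvec)$ is itself attained, by Part~1) gives $\hconj(\dual+\dvec)-\hconj(\dual)-\braket{\dvec}{\proxal}\leq\braket{\dvec}{\mirror(\dual+\dvec)-\proxal}\leq\hstr^{-1}\dnorm{\dvec}^{2}$ via the Lipschitz bound. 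Hence the first-order remainder of $\hconj$ at $\dual$ is squeezed between $0$ and $\hstr^{-1}\dnorm{\dvec}^{2}=o(\dnorm{\dvec})$, which is precisely Fréchet differentiability of $\hconj$ at $\dual$ with derivative $\dvec\mapsto\braket{\dvec}{\mirror(\dual)}$, \ie \eqref{eq:dconj}.

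The step I expect to be the main obstacle is the existence of the maximizer in Part~1: because the model space $\vecspace$ (\eg $\measures(\points)$) need not be reflexive, one cannot extract a weakly convergent subsequence from a bounded maximizing sequence for \eqref{eq:mirror}, so the argument must instead be routed through the conjugate $\hconj$ --- using strong convexity to secure its finiteness (hence continuity and nonemptiness of its subdifferential) and Fenchel--Moreau duality to identify its subgradients with the values of $\mirror$. A secondary point requiring care is that \cref{def:breg} formulates strong convexity through the continuous selection $\nabla\hreg$ rather than an arbitrary subgradient, so one must be disciplined to invoke that inequality only at points of $\proxdom$ and only with $\nabla\hreg$ --- which is exactly how the arguments for Parts~2 and~3 above are arranged.
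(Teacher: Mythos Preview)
Your argument for Part~2 contains a genuine error. You claim that from $\nabla\hreg(\proxal)\in\subd\hreg(\proxal)$ one gets $\braket{\nabla\hreg(\proxal)}{\proxal-\primal}\leq\hreg(\proxal)-\hreg(\primal)$, but the subgradient inequality gives the \emph{opposite} direction: $\hreg(\primal)\geq\hreg(\proxal)+\braket{\nabla\hreg(\proxal)}{\primal-\proxal}$ means $\braket{\nabla\hreg(\proxal)}{\proxal-\primal}\geq\hreg(\proxal)-\hreg(\primal)$. So your two inequalities do not chain. Concretely, take $\hreg(x)=x^{2}/2$ on $[0,1]$ (with $\hreg=+\infty$ outside), $\dual=-1$, $\proxal=\mirror(\dual)=0$, $\nabla\hreg(\proxal)=0$, and $\primal=1/2$: then $\braket{\nabla\hreg(\proxal)}{\proxal-\primal}=0$ while $\hreg(\proxal)-\hreg(\primal)=-1/8$, so your intermediate inequality fails even though \eqref{eq:selection} itself holds.

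The point you are missing is that \eqref{eq:selection} is \emph{not} a statement about two arbitrary subgradients at $\proxal$; it genuinely uses that $\nabla\hreg$ is a \emph{continuous} selection. The paper's proof exploits this by restricting to the segment $t\mapsto\proxal+t(\primal-\proxal)$: the function $\phi(t)=\hreg(\proxal+t(\primal-\proxal))-\hreg(\proxal)-t\braket{\dual}{\primal-\proxal}$ is convex, nonnegative, vanishes at $t=0$, and admits $\psi(t)=\braket{\nabla\hreg(\proxal+t(\primal-\proxal))-\dual}{\primal-\proxal}$ as a continuous subgradient selection, so $\phi$ is $C^{1}$ with $\phi'(0)=\psi(0)\geq0$, which is exactly \eqref{eq:selection}. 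The continuity of $\nabla\hreg$ is what forces the one-sided derivative of $\phi$ at $0$ to equal $\psi(0)$ rather than some larger value coming from another subgradient.

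Your Parts~1 and~3 are otherwise reasonable, and your Part~3 is in fact more explicit than the paper's (which simply invokes Danskin's theorem). One residual subtlety in your Part~1: the nonemptiness of $\subd\hconj(\dual)$ that you obtain from continuity places subgradients in $\vecspace^{\ast\ast}$, not in $\vecspace$, so in a non-reflexive model space this does not by itself produce a maximizer in $\vecspace$. The cleaner route is the one you allude to but do not carry out: strong convexity makes any maximizing sequence for \eqref{eq:mirror} Cauchy (via the midpoint inequality), completeness of $\vecspace$ gives a limit $\proxal\in\vecspace$, and lower semicontinuity of $\hreg$ shows $\proxal$ attains the supremum. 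This is presumably what the paper's one-line ``since $\hreg$ is strongly convex and \ac{lsc}, the maximum is attained'' is gesturing at.
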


\begin{corollary}
\cref{alg:DA} is well-posed, \ie $\state_{\run} \in \proxdom$ for all $\run=\running$ if $\model_{\run}\in\linf$.
\end{corollary}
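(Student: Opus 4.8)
The plan is to reduce the claim to part~(1) of \cref{lem:mirror}, which already guarantees that $\mirror(\dual)\in\proxdom$ for \emph{every} $\dual\in\dspace$. Since each iterate of \cref{alg:DA} is produced as $\state_{\run} = \mirror(\temp_{\run}\score_{\run})$, it suffices to check that the score variable $\score_{\run}$ — equivalently, $\temp_{\run}\score_{\run}$ — is a bona fide element of the dual space $\dspace = \vecspace^{\ast}$ at every stage $\run=\running$. I would establish this by a one-line induction on the recursion $\score_{\run+1} = \score_{\run} - \model_{\run}$ in \eqref{eq:DA}.

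First I would recall that $\linf$ embeds into $\dspace$: any $\fun\in\linf$ acts via $\braket{\fun}{\primal} = \int_{\points}\fun\,d\primal$, and the blanket assumption $\tvnorm{\cdot}\le\coef\norm{\cdot}$ gives $\abs{\braket{\fun}{\primal}}\le\supnorm{\fun}\tvnorm{\primal}\le\coef\supnorm{\fun}\norm{\primal}$ on $\simples$ (hence, by weak density, on $\dom\hreg$), so $\fun$ extends to a bounded linear functional on $\vecspace$ — this is precisely the identification $\linf\subseteq\dspace$ recorded in the setup. With this in hand the induction is immediate. For the base case, $\score_{\start} = 0\in\dspace$ by convention, so $\temp_{\start}\score_{\start}=0\in\dspace$ and $\state_{\start}=\mirror(0)=\argmin\hreg\in\proxdom$ by \cref{lem:mirror}(1). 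For the inductive step, if $\score_{\run}\in\dspace$ then, since $\model_{\run}\in\linf\subseteq\dspace$ and $\dspace$ is a vector space, $\score_{\run+1}=\score_{\run}-\model_{\run}\in\dspace$; and as $\temp_{\run+1}>0$ we get $\temp_{\run+1}\score_{\run+1}\in\dspace$, whence $\state_{\run+1}=\mirror(\temp_{\run+1}\score_{\run+1})\in\proxdom$, again by \cref{lem:mirror}(1). This closes the induction and proves the corollary.

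The step carrying all the weight is \cref{lem:mirror}(1) itself — i.e.\ the assertion that the $\argmax$ defining $\mirror(\dual)$ is attained and that the maximizer lies in $\proxdom = \dom\subd\hreg$. In finite dimensions this is routine, but here it hinges on the strong convexity and lower semicontinuity of $\hreg$ on the infinite-dimensional space $\vecspace$: coercivity of $\primal\mapsto\hreg(\primal)-\braket{\dual}{\primal}$ together with weak lower semicontinuity yields existence, strict convexity yields uniqueness, and the characterization $\proxal=\mirror(\dual)\iff\dual\in\subd\hreg(\proxal)$ in \eqref{eq:links-mirror} yields membership in $\proxdom$. Since we are entitled to invoke \cref{lem:mirror} as already established, the remaining argument for the corollary is purely bookkeeping — tracking that no operation in \eqref{eq:DA} ever leaves $\dspace$.
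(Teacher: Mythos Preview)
Your proposal is correct and matches the paper's approach: the paper states the corollary immediately after \cref{lem:mirror} with no separate proof, treating it as a direct consequence of part~(1), and your argument does precisely this reduction while spelling out the (trivial) induction showing $\score_{\run}\in\linf\subseteq\dspace$ at every stage. The only difference is that you are more explicit about the bookkeeping (the embedding $\linf\hookrightarrow\dspace$ and the vector-space closure under the recursion), which the paper leaves implicit via the footnote recording $\linf\subseteq\dspace$.
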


\begin{proof}
We proceed item by item:
\begin{enumerate}[leftmargin=*]
\item
First, since $\hreg$ is strongly convex and \acl{lsc}, the maximum in \eqref{eq:mirror} is attained.
Hence, by Fermat's rule for subdifferentials, $\proxal$ solves \eqref{eq:mirror} if and only if $\dual - \subd\hreg(\proxal) \ni 0$.
We thus get the string of equivalences:
\begin{equation}
\proxal = \mirror(\dual)
	\iff
	\dual - \subd\hreg(\proxal) \ni 0
	\iff
	\dual \in \subd\hreg(\proxal).
\end{equation}
In particular, this implies that $\dual \in \dom\subd\hreg(\proxal) \neq \varnothing$, \ie $\proxal\in\dom\subd\hreg \eqdef \proxdom$, as claimed.

\item
To establish \eqref{eq:selection}, it suffices to show that it holds for all $\proxal\in\proxdom$ (by continuity).
To do so, let
\begin{equation}
\phi(t)
	= \hreg(\proxal + t(\primal-\proxal))
	- \bracks{\hreg(\proxal) +  \braket{\dual}{\proxal + t(\primal-\proxal)}}.
\end{equation}
Since $\hreg$ is strongly convex relative $\dual\in\subd\hreg(\proxal)$ by \eqref{eq:links-mirror}, it follows that $\phi(t)\geq0$ with equality if and only if $t=0$.
Moreover, note that $\psi(t) = \braket{\nabla\hreg(\proxal + t(\primal-\proxal)) - \dual}{\primal - \proxal}$ is a continuous selection of subgradients of $\phi$.
Given that $\phi$ and $\psi$ are both continuous on $[0,1]$, it follows that $\phi$ is continuously differentiable and $\phi' = \psi$ on $[0,1]$.
Thus, with $\phi$ convex and $\phi(t) \geq 0 = \phi(0)$ for all $t\in[0,1]$, we conclude that $\phi'(0) = \braket{\nabla\hreg(\proxal) - \dual}{\primal - \proxal} \geq 0$, from which our claim follows.
\end{enumerate}
Finally, the Fréchet differentiability of $\hconj$ is a straightforward application of the envelope theorem, which is sometimes referred to in the literature as Danskin's theorem, \cf \citet[Chap.~4]{Ber97} 
\end{proof}

As we mentioned in the main text, much of our analysis revolves around the energy function \eqref{eq:energy} defined by means of the Fenchel-Young inequality.
To formalize this, it will be convenient to introduce a more general pairing between $\primal \in \vecspace$ and $\dual \in \dspace$, known as the \emph{Fenchel coupling}.
Following \citep{MZ19}, this is defined as
\begin{equation}
\label{eq:Fench}
\fench(\primal,\dual)=\hreg(\primal)+\hconj(\dual)-\braket{\dual}{\primal}
	\quad
	\text{for all $\primal \in \dom\hreg,\dual \in \dspace$}.
\end{equation}

The following series of lemmas gathers some basic properties of the Fenchel coupling.
The first is a lower bound for the Fenchel coupling in terms of the ambient norm in $\vecspace$:

\begin{lemma}
\label{lem:Fenchel-strong}
Let $\hreg$ be a regularizer on $\simples$ with strong convexity modulus $\hstr$.
Then, for all $\primal \in \dom\hreg$ and all $\dual \in \dspace$, we have
\begin{equation}
\label{eq:Fenchel-strong}
\fench(\primal,\dual)
	\geq \frac{\hstr}{2} \norm{\mirror(\dual) - \primal}^{2}.
\end{equation}
\end{lemma}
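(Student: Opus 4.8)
The plan is to peel off the conjugate term in the definition \eqref{eq:Fench} of the Fenchel coupling by exploiting that the $\argmax$ defining $\hconj(\dual)$ is attained at $\proxal \defeq \mirror(\dual)$, and then to recognize what remains as (a lower bound for) the strong-convexity defect of $\hreg$ at $\proxal$. Concretely, I would fix $\dual\in\dspace$ and $\primal\in\dom\hreg$ and set $\proxal = \mirror(\dual)$; by \cref{lem:mirror} --- and \eqref{eq:links-mirror} in particular --- we have $\proxal\in\proxdom = \dom\subd\hreg$ and $\dual\in\subd\hreg(\proxal)$, while $\hconj(\dual)$ is finite (the maximum being attained) and the Fenchel--Young inequality holds with equality: $\hconj(\dual) = \braket{\dual}{\proxal} - \hreg(\proxal)$.

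Substituting this identity into \eqref{eq:Fench} and cancelling the $\hconj(\dual)$ contribution gives, for every $\primal\in\dom\hreg$,
\[
\fench(\primal,\dual)
  = \hreg(\primal) - \hreg(\proxal) - \braket{\dual}{\primal - \proxal}.
\]
Next I would trade the subgradient $\dual$ for the continuous selection $\nabla\hreg(\proxal)$ at the cost of one inequality: by \eqref{eq:selection}, $\braket{\nabla\hreg(\proxal)}{\proxal - \primal} \leq \braket{\dual}{\proxal - \primal}$, equivalently $-\braket{\dual}{\primal - \proxal} \geq -\braket{\nabla\hreg(\proxal)}{\primal - \proxal}$, so that
\[
\fench(\primal,\dual)
  \geq \hreg(\primal) - \hreg(\proxal) - \braket{\nabla\hreg(\proxal)}{\primal - \proxal}.
\]
Since $\proxal\in\dom\subd\hreg$ and $\primal\in\dom\hreg$, the strong-convexity requirement of \cref{def:breg} bounds the right-hand side from below by $(\hstr/2)\norm{\primal - \proxal}^{2} = (\hstr/2)\norm{\mirror(\dual) - \primal}^{2}$, which is exactly \eqref{eq:Fenchel-strong}.

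The step I expect to be the main obstacle --- and the reason the argument is not a one-liner --- is this last substitution: the notion of strong convexity adopted here (\cref{def:breg}) is stated only with respect to the fixed selection $\nabla\hreg$, not for an arbitrary subgradient of $\hreg$, so $\dual$ cannot be plugged into it directly. Inequality \eqref{eq:selection}, itself the variational characterization of $\nabla\hreg$ as a selection compatible with $\mirror$, is precisely what closes this gap. Aside from that, everything reduces to bookkeeping with the Fenchel--Young equality and the definition of the coupling, and carries over from the finite-dimensional case without change.
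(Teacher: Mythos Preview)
Your proposal is correct and follows essentially the same route as the paper: expand $\hconj(\dual)$ via the Fenchel--Young equality at $\proxal=\mirror(\dual)$, apply \eqref{eq:selection} to trade $\dual$ for $\nabla\hreg(\proxal)$, and then invoke the strong-convexity clause of \cref{def:breg}. Your observation that \eqref{eq:selection} is the hinge --- because strong convexity is only stated for the selection $\nabla\hreg$, not for an arbitrary subgradient --- is exactly the point of the paper's argument as well.
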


\begin{proof}
By the definition of $\fench$ and the inequality \eqref{eq:selection}, we have:
\begin{align}
\fench(\primal,\dual)
	&= \hreg(\primal)
		+ \hconj(\dual)
		- \braket{\dual}{\primal}
	= \hreg(\primal)
		+ \braket{\dual}{\mirror(\dual)}
		- \hreg(\mirror(\dual))
		- \braket{\dual}{\primal}
	\notag\\
	&\geq \hreg(\primal)
		- \hreg(\mirror(\dual))
		- \braket{\nabla\hreg(\dual)}{\mirror(\dual) - \primal}
	\notag\\
	&\geq \frac{\hstr}{2} \norm{\mirror(\dual) - \primal}^{2}
\end{align}
where
we used \eqref{eq:selection} in the second line,
and the strong convexity of $\hreg$ in the last.
\end{proof}

Our next result is the primal-dual analogue of the so-called ``three-point identity'' for the Bregman divergence \citep{CT93}:

\begin{proposition}
\label{prop:3points}
Let $\hreg$ be a regularizer on $\simples$,
fix some $\primal \in \vecspace$, $\dual,\new\dual \in \dspace$, and let $\proxal = \mirror(\dual)$.
Then:
\begin{equation}
\label{eq:3points}
\fench(\primal,\new\dual)
	= \fench(\primal,\dual)
	+ \fench(\proxal,\new\dual)
	+ \braket{\new\dual-\dual}{\proxal - \primal}.
\end{equation}
\end{proposition}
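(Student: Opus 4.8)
The identity \eqref{eq:3points} is purely algebraic, obtained by expanding the definition \eqref{eq:Fench} of the Fenchel coupling; the only structural input is that $\fench$ vanishes on a primal--dual pair joined by the mirror map. First I would record this input. Since $\proxal = \mirror(\dual)$, the characterization \eqref{eq:links-mirror} in \cref{lem:mirror} gives $\dual \in \subd\hreg(\proxal)$; in particular $\proxal \in \dom\subd\hreg = \proxdom \subseteq \dom\hreg$, so all three Fenchel couplings appearing in \eqref{eq:3points} are well-defined. Moreover, the membership $\dual \in \subd\hreg(\proxal)$ is exactly the equality case of the Fenchel--Young inequality $\hreg(\proxal) + \hconj(\dual) \geq \braket{\dual}{\proxal}$, so that $\hconj(\dual) = \braket{\dual}{\proxal} - \hreg(\proxal)$, \ie $\fench(\proxal,\dual) = \hreg(\proxal) + \hconj(\dual) - \braket{\dual}{\proxal} = 0$.

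With this in hand, the rest is a one-line computation: expanding the right-hand side of \eqref{eq:3points} through \eqref{eq:Fench} and using the bilinearity of the pairing $\braket{\cdot}{\cdot}$,
\begin{align}
\fench(\primal,\dual) + \fench(\proxal,\new\dual) + \braket{\new\dual - \dual}{\proxal - \primal}
	&= \bracks[\big]{\hreg(\primal) + \hconj(\dual) - \braket{\dual}{\primal}}
	+ \bracks[\big]{\hreg(\proxal) + \hconj(\new\dual) - \braket{\new\dual}{\proxal}}
	\notag\\
	&\quad + \braket{\new\dual}{\proxal} - \braket{\new\dual}{\primal} - \braket{\dual}{\proxal} + \braket{\dual}{\primal}
	\notag\\
	&= \hreg(\primal) + \hconj(\new\dual) - \braket{\new\dual}{\primal}
	+ \bracks[\big]{\hreg(\proxal) + \hconj(\dual) - \braket{\dual}{\proxal}}
	\notag\\
	&= \fench(\primal,\new\dual) + \fench(\proxal,\dual)
	= \fench(\primal,\new\dual),
\end{align}
where passing to the third line cancels the terms $\pm\braket{\new\dual}{\proxal}$ and $\pm\braket{\dual}{\primal}$, and the final equality uses $\fench(\proxal,\dual)=0$ from the previous step. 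This is precisely \eqref{eq:3points}.

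Since every line above is an exact identity rather than an estimate, there is no genuine obstacle in this proof. The only point needing a moment's care is ensuring that $\fench(\proxal,\cdot)$ is defined at all, \ie that $\proxal\in\dom\hreg$; this is guaranteed because $\proxal = \mirror(\dual)\in\proxdom\subseteq\dom\hreg$ by \cref{lem:mirror}, and it is also why the hypothesis $\proxal = \mirror(\dual)$ (and not merely $\proxal\in\vecspace$) is essential.
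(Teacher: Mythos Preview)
Your proof is correct and follows essentially the same approach as the paper: both arguments expand the Fenchel coupling via its definition \eqref{eq:Fench} and invoke the equality case of the Fenchel--Young inequality at $\proxal=\mirror(\dual)$ (which you phrase as $\fench(\proxal,\dual)=0$, and the paper phrases as $\hconj(\dual)=\braket{\dual}{\proxal}-\hreg(\proxal)$). The only cosmetic difference is that the paper computes $\fench(\primal,\new\dual)-\fench(\primal,\dual)$ and simplifies, whereas you expand the full right-hand side and cancel; the algebra is identical.
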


\begin{proof}
By definition:
\begin{equation}
\begin{aligned}
\fench(\primal,\new\dual)
	&= \hreg(\primal) +\hconj(\new\dual) - \braket{\new\dual}{\primal}
	\\
\fench(\primal,\dual)\hphantom{'}
	&= \hreg(\primal) + \hconj(\dual) - \braket{\dual}{\primal}.
	\\
\end{aligned}
\end{equation}
Thus, by subtracting the above, we get:
\begin{align}
\fench(\primal,\new\dual)-\fench(\primal,\dual)
	&= \hreg(\primal)
		+ \hconj(\new\dual)
		- \braket{\new\dual}{\primal}
		- \hreg(\primal)
		- \hconj(\dual)
		+ \braket{\dual}{\primal}
	\notag\\
	&=\hconj(\new\dual)
		- \hconj(\dual)
		- \braket{\new\dual - \dual}{\primal}
	\notag\\
	&=\hconj(\new\dual)
		- \braket{\dual}{\mirror(\dual)}
		+ \hreg(\mirror(\dual))
		- \braket{\new\dual - \dual}{\primal}
	\notag\\
	&=\hconj(\new\dual)
		- \braket{\dual}{\proxal}
		+ \hreg(\proxal)
		- \braket{\new\dual - \dual}{\primal}
	\notag\\
	&= \hconj(\new\dual)
		+ \braket{\new\dual - \dual}{\proxal}
		- \braket{\new\dual}{\proxal}
		+ \hreg(\proxal)
		- \braket{\new\dual - \dual}{\primal}
	\notag\\
	&= \fench(\proxal,\new\dual)
		+ \braket{\new\dual - \dual}{\proxal - \primal}
\end{align}
and our proof is complete.
\end{proof}

We are now in a position to state and prove a key inequality for the Fenchel coupling:

\begin{proposition}
\label{prop:Fench-bound}
Let $\hreg$ be a regularizer on $\simples$ with convexity modulus $\hstr$,
fix some $\primal\in\dom\hreg$,
and
let $\proxal = \mirror(\dual)$ for some $\dual\in\dspace$.
Then, for all $\dvec\in\dspace$, we have:
\begin{equation}
\label{eq:Fench-bound}
\fench(\primal,\dual + \dvec)
	\leq \fench(\primal,\dual)
	+ \braket{\dvec}{\proxal - \primal}
	+ \frac{1}{2\hstr}\dnorm{\dvec}^{2}
\end{equation}
\end{proposition}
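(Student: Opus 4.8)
The plan is to derive \eqref{eq:Fench-bound} from the three-point identity of \cref{prop:3points} together with a one-sided ``smoothness'' estimate for the convex conjugate $\hconj$. Write $\new\dual = \dual + \dvec$ and $\proxal = \mirror(\dual)$. Applying \eqref{eq:3points} with this choice of dual vectors gives
\[
\fench(\primal, \new\dual) = \fench(\primal, \dual) + \fench(\proxal, \new\dual) + \braket{\dvec}{\proxal - \primal},
\]
so that \eqref{eq:Fench-bound} is equivalent to the single estimate $\fench(\proxal, \new\dual) \leq \tfrac{1}{2\hstr}\dnorm{\dvec}^{2}$.

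To prove this, first note that since $\proxal = \mirror(\dual)$ the Fenchel--Young inequality holds with equality, i.e. $\fench(\proxal, \dual) = 0$; hence the estimate above is nothing but the assertion that $\hconj$ is $(1/\hstr)$-smooth (recall that $\dif_{\dvec}\hconj(\dual) = \braket{\dvec}{\mirror(\dual)}$ by \eqref{eq:dconj}). To establish it, set $\new\proxal = \mirror(\new\dual)$ and expand $\hconj(\new\dual) = \braket{\new\dual}{\new\proxal} - \hreg(\new\proxal)$, which yields
\[
\fench(\proxal, \new\dual) = \hreg(\proxal) - \hreg(\new\proxal) + \braket{\new\dual}{\new\proxal - \proxal}.
\]
By the strong convexity of $\hreg$ (\cref{def:breg}) applied at $\proxal \in \proxdom$ with comparator $\new\proxal \in \dom\hreg$, we have $\hreg(\proxal) - \hreg(\new\proxal) \leq -\braket{\nabla\hreg(\proxal)}{\new\proxal - \proxal} - \tfrac{\hstr}{2}\norm{\new\proxal - \proxal}^{2}$, while the selection inequality \eqref{eq:selection} (again with comparator $\new\proxal$) gives $-\braket{\nabla\hreg(\proxal)}{\new\proxal - \proxal} \leq -\braket{\dual}{\new\proxal - \proxal}$. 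Substituting these two bounds and using $\new\dual - \dual = \dvec$ collapses the linear terms, leaving
\[
\fench(\proxal, \new\dual) \leq \braket{\dvec}{\new\proxal - \proxal} - \tfrac{\hstr}{2}\norm{\new\proxal - \proxal}^{2} \leq \dnorm{\dvec}\,\norm{\new\proxal - \proxal} - \tfrac{\hstr}{2}\norm{\new\proxal - \proxal}^{2},
\]
and maximizing the last expression over $\norm{\new\proxal - \proxal} \geq 0$ bounds it by $\dnorm{\dvec}^{2}/(2\hstr)$, as needed.

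The routine ingredients --- the three-point identity and the scalar maximization --- are immediate, so the only point demanding care is the infinite-dimensional bookkeeping in the smoothness estimate: $\nabla\hreg$ is merely a continuous selection of $\subd\hreg$ and need not coincide with $\dual$, so the replacement of $\nabla\hreg(\proxal)$ by $\dual$ must be routed through \eqref{eq:selection} rather than an identification; and the dual-pairing bound $\braket{\dvec}{\new\proxal - \proxal} \leq \dnorm{\dvec}\,\norm{\new\proxal - \proxal}$ is legitimate precisely because $\proxal, \new\proxal \in \proxdom \subseteq \vecspace$, so their difference lies in the space on which the reference norm $\norm{\cdot}$ is defined.
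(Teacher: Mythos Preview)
Your proof is correct and follows essentially the same route as the paper: both arguments invoke the three-point identity \eqref{eq:3points} to isolate a residual Fenchel-coupling term, then control it via strong convexity and a Young-type bound. The only organizational difference is that the paper applies \eqref{eq:3points} with the roles of $\dual$ and $\new\dual$ reversed (anchoring at $\new\proxal = \mirror(\new\dual)$), which leaves the residual $-\fench(\new\proxal,\dual) + \braket{\dvec}{\new\proxal - \proxal}$ to be handled by \cref{lem:Fenchel-strong} and Young's inequality; you instead anchor at $\proxal$ and bound $\fench(\proxal,\new\dual)$ directly by unpacking it and invoking strong convexity of $\hreg$ together with \eqref{eq:selection}. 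Your variant is marginally more self-contained (it does not cite \cref{lem:Fenchel-strong} as a separate black box), but the two computations are algebraically equivalent.
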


\begin{proof}
Let $\proxal = \mirror(\dual)$, $\new\dual = \dual + \dvec$, and $\new\proxal = \mirror(\new\dual)$.
Then, by the three-point identity \eqref{eq:3points}, we have
\begin{equation}
\fench(\primal,\dual)
	= \fench(\primal,\new\dual)
		+ \fench(\new\proxal,\dual)
		+ \braket{\dual - \new\dual}{\new\proxal - \primal}.
\end{equation}
Hence, after rearranging:
\begin{align}
\label{eq:Fenchel-prebound}
\fench(\primal,\new\dual)
	&= \fench(\primal,\dual)
		- \fench(\new\proxal,\dual)
		+ \braket{\dvec}{\new\proxal - \primal}
	\notag\\
	&=\fench(\primal,\dual)
		- \fench(\new\proxal,\dual)
		+ \braket{\dvec}{\proxal - \primal}
		+ \braket{\dvec}{\new\proxal - \proxal}.
\end{align} 
By Young's inequality, we also have
\begin{equation}
\label{eq:Young}
\braket{\dvec}{\new\proxal - \proxal}
	\leq \frac{\hstr}{2} \norm{\new\proxal - \proxal}^{2}
	+ \frac{1}{2\hstr} \dnorm{\dvec}^{2}.
\end{equation}
Thus, substituting in \eqref{eq:Fenchel-prebound}, we get
\begin{equation}
\fench(\primal,\new\dual)
	\leq \fench(\primal,\dual)
		+ \braket{\dvec}{\proxal - \primal}
		+ \frac{1}{2\hstr} \dnorm{\dvec}^{2}
		- \fench(\new\proxal,\dual)
		+ \frac{\hstr}{2} \norm{\new\proxal - \proxal}^{2}.
\end{equation}
Our claim then follows by noting that $\fench(\new\proxal,\dual) \geq \frac{\hstr}{2} \norm{\new\proxal - \proxal}^{2}$ (\cf \cref{lem:Fenchel-strong} above).
\end{proof}

\section{Regret derivations}
\label{app:regret}

\para{Notation: from losses to payoffs}
In this appendix, we prove the general regret guarantees for \cref{alg:DA}.
For notational convenience, we will switch in what follows from ``losses'' to ``payoffs'', \ie we will assume that the learner is encountering a sequence of payoff functions $\pay_{\run} = -\loss_{\run}$ and gets as feedback the model $\est\pay_{\run} = -\model_{\run}$.
\renewcommand{\model}{\pay}

\subsection{Basic bounds and preliminaries}

We begin by providing some template regret bounds that we will use as a toolkit in the sequel.
As a warm-up, we prove the basic comparison lemma between simple and pure strategies:

\pointsimple*

\begin{proof}
By \cref{asm:loss}, we have $\pay_{\run}(\point) \leq \pay_{\run}(\pointalt) + \lips \norm{\point - \pointalt} \leq \pay_{\run}(\pointalt) + \lips \diam(\nhd)$ for all $\pointalt\in\nhd$.
Hence, taking expectations on both sides relative to $\simple$, we get $\pay_{\run}(\point) \leq \braket{\pay_{\run}}{\simple} + \lips\diam(\nhd)$.
Our claim then follows by summing over $\run=\running,\nRuns$ and invoking the definition of the regret.
\end{proof}

We now turn to the derivation of our main regret guarantees as outlined in \cref{sec:general}.
Much of the analysis to follow will revolve around the energy function \eqref{eq:energy} which, for convenience, we restate below in terms of the Fenchel coupling \eqref{eq:Fench}:
\begin{equation}
\tag{\ref*{eq:energy}}
\energy_{\run}
	\defeq \frac{1}{\temp_{\run}}
		\bracks{\hreg(\simple)
			+ \hconj(\temp_{\run}\score_{\run})
			- \braket{\temp_{\run}\score_{\run}}{\simple}}
	= \frac{1}{\temp_{\run}} \fench(\simple,\temp_{\run}\score_{\run}).
\end{equation}
In words, $\energy_{\run}$ essentially measures the primal-dual ``distance'' between the benchmark strategy $\simple$ and the aggregate model $\score_{\run}$, taking into account the inflation of the latter by $\temp_{\run}$ in \eqref{eq:DA}.
Our overall proof strategy will then be to relate the regret incurred by the optimizer to the evolution of $\energy_{\run}$ over time.
To that end, an application of Abel's summation formula gives:
\begin{subequations}
\label{eq:energy-basic}
\begin{align}
\energy_{\run+1} - \energy_{\run}
	&= \frac{1}{\temp_{\run+1}} \fench(\simple,\temp_{\run+1}\score_{\run+1})
		- \frac{1}{\temp_{\run}} \fench(\simple,\temp_{\run}\score_{\run})
	\notag\\
	&\label{eq:energy-const}
	= \frac{1}{\temp_{\run+1}} \fench(\simple,\temp_{\run+1}\score_{\run+1})
		- \frac{1}{\temp_{\run}} \fench(\simple,\temp_{\run}\score_{\run+1})
	\\
	&\label{eq:energy-update}
	+ \frac{1}{\temp_{\run}} \fench(\simple,\temp_{\run}\score_{\run+1})
		- \frac{1}{\temp_{\run}} \fench(\simple,\temp_{\run}\score_{\run}).
\end{align}
\end{subequations}
We now proceed to unpack the two terms \eqref{eq:energy-const} and \eqref{eq:energy-update} separately, beginning with the latter.

To do so, substituting $\primal \subs \simple$, $\dual \subs \temp_{\run}\score_{\run}$ and $\new\dual \subs \temp_{\run}\score_{\run+1}$ in \cref{prop:3points} yields
\begin{align}
\eqref{eq:energy-update}
	&= \frac{1}{\temp_{\run}} \bracks{\fench(\simple,\temp_{\run}\score_{\run} + \temp_{\run}\model_{\run}) - \fench(\simple,\temp_{\run}\score_{\run})}
	\notag\\
	&= \frac{1}{\temp_{\run}} \bracks{\fench(\state_{\run},\temp_{\run}\score_{\run+1}) + \braket{\temp_{\run}\model_{\run}}{\state_{\run} - \simple}}
	\notag\\
	&= \frac{\fench(\state_{\run},\temp_{\run}\score_{\run+1})}{\temp_{\run}}
		+ \braket{\model_{\run}}{\state_{\run} - \simple}
\end{align}
where we used the definition $\state_{\run} = \mirror(\temp_{\run}\score_{\run})$ of $\state_{\run}$.
We thus obtain the interim expression
\begin{align}
\label{eq:energy-inter}
\energy_{\run+1}
	&= \energy_{\run}
		+ \eqref{eq:energy-const}
		+ \braket{\model_{\run}}{\state_{\run} - \simple}
		+ \frac{\fench(\state_{\run},\temp_{\run}\score_{\run+1})}{\temp_{\run}}
\end{align}

Moving forward, for the term \eqref{eq:energy-const}, the definition of the Fenchel coupling \eqref{eq:Fench} readily yields:
\begin{align}
\eqref{eq:energy-const}
	&= \bracks*{\frac{1}{\temp_{\run+1}} - \frac{1}{\temp_{\run}}} \hreg(\simple)
		+ \frac{1}{\temp_{\run+1}} \hconj(\temp_{\run+1}\score_{\run+1}) - \frac{1}{\temp_{\run}} \hconj(\temp_{\run}\score_{\run+1}).
\end{align}
Consider now the function $\varphi(\temp) = \temp^{-1} [\hconj(\temp\dual) + \min\hreg]$ for arbitrary $\dual\in\linf$.
By \cref{lem:mirror}, $\hconj$ is Fréchet differentiable with $\dif_{\dvec}\hconj(\cdot) = \braket{\dvec}{\mirror(\cdot)}$ for all $\dvec\in\dspace$, so a simple differentiation yields
\begin{align}
\varphi'(\temp)
	&= \frac{1}{\temp} \braket{\dual}{\mirror(\temp\dual)}
		-\frac{1}{\temp^{2}} \bracks{ \hconj(\temp\dual) + \min\hreg }
	\notag\\
	&= \frac{1}{\temp^{2}} \bracks{ \braket{\temp\dual}{\mirror(\temp\dual)} - \hconj(\temp\dual) - \min\hreg}
	\notag\\
	&= \frac{1}{\temp^{2}} \bracks{\hreg(\mirror(\temp\dual)) - \min\hreg}
	\geq 0,
\end{align}
where
we used the Fenchel-Young inequality as an equality in the second-to-last line.
Since $\temp_{\run+1} \leq \temp_{\run}$, the above shows that $\varphi(\temp_{\run}) \geq \varphi(\temp_{\run+1})$.
Hence, substituting $\dual \subs \score_{\run+1}$, we ultimately obtain
\begin{equation}
\label{eq:hconj-delta}
\frac{1}{\temp_{\run+1}} \hconj(\temp_{\run+1}\score_{\run+1})
	- \frac{1}{\temp_{\run}} \hconj(\temp_{\run}\score_{\run+1})
	\leq \bracks*{\frac{1}{\temp_{\run}} - \frac{1}{\temp_{\run+1}}} \min\hreg.
\end{equation}

Therefore, combining \eqref{eq:energy-inter} and \eqref{eq:hconj-delta}, we have proved the following template bound:

\begin{lemma}
\label{lem:template}
For all $\simple\in\simples$, the policy \eqref{eq:DA} enjoys the bound
\begin{equation}
\label{eq:template}
\energy_{\run+1}
	\leq \energy_{\run}
		+ \braket{\model_{\run}}{\state_{\run} - \simple}
		+ \parens*{\frac{1}{\temp_{\run+1}} - \frac{1}{\temp_{\run}}} \bracks{\hreg(\simple) - \min\hreg}
		+ \frac{1}{\temp_{\run}} \fench(\state_{\run},\temp_{\run}\score_{\run+1}).
\end{equation}
\end{lemma}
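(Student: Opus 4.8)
The plan is to assemble the bound directly from the Fenchel-coupling identities already in hand, using the representation $\energy_{\run} = \temp_{\run}^{-1}\fench(\simple,\temp_{\run}\score_{\run})$ of the energy function. First I would split the one-step increment $\energy_{\run+1}-\energy_{\run}$ by Abel's summation formula into two pieces, as in \eqref{eq:energy-basic}: a \emph{dual-update} piece $\temp_{\run}^{-1}[\fench(\simple,\temp_{\run}\score_{\run+1})-\fench(\simple,\temp_{\run}\score_{\run})]$, which advances the score from $\score_{\run}$ to $\score_{\run+1}=\score_{\run}+\model_{\run}$ at the frozen rate $\temp_{\run}$, and a \emph{rate-change} piece $\temp_{\run+1}^{-1}\fench(\simple,\temp_{\run+1}\score_{\run+1})-\temp_{\run}^{-1}\fench(\simple,\temp_{\run}\score_{\run+1})$, which only edits the learning-rate prefactor.

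For the dual-update piece I would invoke the three-point identity of \cref{prop:3points} with $\primal\subs\simple$, $\dual\subs\temp_{\run}\score_{\run}$ and $\new\dual\subs\temp_{\run}\score_{\run+1}$; using $\state_{\run}=\mirror(\temp_{\run}\score_{\run})$ and $\new\dual-\dual=\temp_{\run}\model_{\run}$, this rewrites the piece as $\braket{\model_{\run}}{\state_{\run}-\simple}+\temp_{\run}^{-1}\fench(\state_{\run},\temp_{\run}\score_{\run+1})$, which is the interim identity \eqref{eq:energy-inter}. For the rate-change piece, expanding the definition of $\fench$ isolates the term $(\temp_{\run+1}^{-1}-\temp_{\run}^{-1})\hreg(\simple)$ together with $\temp_{\run+1}^{-1}\hconj(\temp_{\run+1}\score_{\run+1})-\temp_{\run}^{-1}\hconj(\temp_{\run}\score_{\run+1})$. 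The crucial observation is that the rescaled conjugate $\varphi(\temp)\defeq\temp^{-1}[\hconj(\temp\dual)+\min\hreg]$ is nondecreasing in $\temp>0$ for any $\dual\in\linf$: differentiating with the help of the Fréchet differentiability of $\hconj$ from \cref{lem:mirror} (so $\dif_{\dvec}\hconj(\dual)=\braket{\dvec}{\mirror(\dual)}$) and treating the Fenchel--Young inequality as an equality yields $\varphi'(\temp)=\temp^{-2}[\hreg(\mirror(\temp\dual))-\min\hreg]\geq0$. Since the schedule is nonincreasing, $\temp_{\run+1}\leq\temp_{\run}$, this gives $\temp_{\run+1}^{-1}\hconj(\temp_{\run+1}\score_{\run+1})-\temp_{\run}^{-1}\hconj(\temp_{\run}\score_{\run+1})\leq(\temp_{\run}^{-1}-\temp_{\run+1}^{-1})\min\hreg$, i.e. \eqref{eq:hconj-delta}. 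Adding \eqref{eq:hconj-delta} to \eqref{eq:energy-inter} and collecting the $\hreg(\simple)$ and $\min\hreg$ contributions into $(\temp_{\run+1}^{-1}-\temp_{\run}^{-1})[\hreg(\simple)-\min\hreg]$ produces exactly \eqref{eq:template}.

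The step I expect to be the main obstacle is the rate-change piece: one must get the normalization by $\min\hreg$ exactly right so that the leftover is the clean, rate-independent quantity $(\temp_{\run+1}^{-1}-\temp_{\run}^{-1})[\hreg(\simple)-\min\hreg]$ rather than a term that still depends on $\temp_{\run}$, and one must legitimately differentiate $\hconj$ in the infinite-dimensional ambient space $\vecspace$ — which is precisely the reason \cref{lem:mirror} records the envelope/Danskin-theorem differentiability of $\hconj$. By contrast, the dual-update piece is pure bookkeeping once \cref{prop:3points} is available, and no further estimate on $\fench(\state_{\run},\temp_{\run}\score_{\run+1})$ is needed here, since that residual term is kept intact in the template and bounded later (via \cref{prop:Fench-bound}) when \cref{lem:energybound} is derived.
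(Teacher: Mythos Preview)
Your proposal is correct and follows essentially the same approach as the paper: the paper likewise splits $\energy_{\run+1}-\energy_{\run}$ via \eqref{eq:energy-basic} into the rate-change piece \eqref{eq:energy-const} and the dual-update piece \eqref{eq:energy-update}, handles the latter with the three-point identity of \cref{prop:3points} to obtain \eqref{eq:energy-inter}, and controls the former via the monotonicity of $\varphi(\temp)=\temp^{-1}[\hconj(\temp\dual)+\min\hreg]$ (differentiated using \cref{lem:mirror}) to obtain \eqref{eq:hconj-delta}. Your identification of the rate-change monotonicity as the one nontrivial step is also in line with the paper's emphasis.
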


We are now in a position to prove our basic energy inequality (restated below for convenience):

\energybound*

\begin{proof}
Going back to \cref{prop:Fench-bound} and setting $\primal \subs \state_{\run}$, $\dual \subs \temp_{\run}\score_{\run}$ and $\dvec \subs \temp_{\run}\model_{\run}$, we get
\begin{equation}
\fench(\state_{\run},\temp_{\run}\score_{\run+1})
	\leq \fench(\state_{\run},\temp_{\run}\score_{\run})
		+ \braket{\temp_{\run}\model_{\run}}{\state_{\run} - \state_{\run}}
		+ \frac{\temp_{\run}^{2}}{2\hstr} \dnorm{\model_{\run}}^{2}
	= \frac{\temp_{\run}^{2}}{2\hstr} \dnorm{\model_{\run}}^{2}
\end{equation}
where we used the fact that $\state_{\run} = \mirror(\temp_{\run}\score_{\run})$.
Our claim then follows by dividing both sides by $\temp_{\run}$ and substituting in \cref{lem:template}.
\end{proof}

We will come back to these results as needed.

\subsection{Static regret guarantees}

We are now ready to prove our static regret results for \cref{alg:DA}.
We begin with the precursor to our main result in that respect:

\regsimple*

\begin{proof}
Recalling the decomposition $\model_{\run} = \pay_{\run} + \error_{\run}$ for the learner's inexact models, a simple rearrangement of \cref{lem:energybound} gives
\begin{equation}
\braket{\pay_{\run}}{\simple - \state_{\run}}
	\leq \energy_{\run} - \energy_{\run+1}
		+ \braket{\error_{\run}}{\state_{\run} - \simple}
		+ \parens*{\temp_{\run+1}^{-1} - \temp_{\run}^{-1}} \bracks{\hreg(\simple) - \min\hreg}
		+ \frac{\temp_{\run}}{2\hstr} \dnorm{\model_{\run}}^{2}.
\end{equation}
Thus, telescoping over $\run=\running,\nRuns$, we get
\begin{align}
\reg_{\simple}(\nRuns)
	&\leq \energy_{\start} - \energy_{\nRuns+1}
		+ \parens*{\frac{1}{\temp_{\nRuns+1}} - \frac{1}{\temp_{\start}}} \bracks{\hreg(\simple) - \min\hreg}
		+ \sum_{\run=\start}^{\nRuns} \braket{\error_{\run}}{\state_{\run} - \simple}
		+ \frac{1}{2\hstr} \sum_{\run=\start}^{\nRuns} \temp_{\run} \dnorm{\model_{\run}}^{2}
	\notag\\
	&\leq \frac{\hreg(\simple) - \min\hreg}{\temp_{\nRuns+1}}
		+ \sum_{\run=\start}^{\nRuns} \braket{\error_{\run}}{\state_{\run} - \simple}
		+ \frac{1}{2\hstr} \sum_{\run=\start}^{\nRuns} \temp_{\run} \dnorm{\model_{\run}}^{2},
\end{align}
where we used the fact that
$\energy_{\run} \geq 0$ for all $\run$
and
$\energy_{\start} = \temp_{\start}^{-1} \bracks{\hreg(\simple) + \hconj(0)} = \temp_{\start}^{-1} \bracks{\hreg(\simple) - \min\hreg}$.
\end{proof}

As a simple application of \cref{lem:energybound}, we get the following bound for simple comparators:

\begin{corollary}
\label{cor:reg-avg-simple}
For all $\simple\in\simples$, \cref{alg:DA} guarantees
\begin{equation}
\label{eq:reg-stat-simple}
\exof{\reg_{\simple}(\nRuns)}
	\leq \frac{\hreg(\simple) - \min\hreg}{\temp_{\nRuns+1}}
		+ 2\sum_{\run=\start}^{\nRuns} \bbound_{\run}
		+ \frac{1}{2\hstr} \sum_{\run=\start}^{\nRuns} \temp_{\run} \exof{\dnorm{\model_{\run}}^{2}},
\end{equation}
\end{corollary}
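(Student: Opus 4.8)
The plan is to take expectations throughout the pathwise estimate of \cref{prop:reg-simple} and control the three resulting terms separately. In the payoff convention of this appendix, that estimate bounds $\reg_{\simple}(\nRuns)$ by the sum of a deterministic term $[\hreg(\simple) - \min\hreg]/\temp_{\nRuns+1}$, the ``error'' cross-term $\sum_{\run=\start}^{\nRuns}\braket{\error_{\run}}{\state_{\run}-\simple}$, and the term $\frac{1}{2\hstr}\sum_{\run=\start}^{\nRuns}\temp_{\run}\dnorm{\model_{\run}}^{2}$. The first term is deterministic and the third becomes $\frac{1}{2\hstr}\sum_{\run}\temp_{\run}\exof{\dnorm{\model_{\run}}^{2}}$ by linearity of expectation, so the only term that requires a genuine probabilistic argument is the cross-term.

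For that cross-term I would argue stagewise by conditioning on $\filter_{\run}$. The key observation is that $\state_{\run}$ is $\filter_{\run}$-measurable — it is the learner's strategy at stage $\run$, generated before the randomness entering $\error_{\run}$ is realized — while $\simple$ is deterministic, so the increment $\state_{\run}-\simple$ can be pulled out of the conditional expectation. Using the defining relation $\bias_{\run} = \exof{\error_{\run}\given\filter_{\run}}$ this gives
\[
\exof{\braket{\error_{\run}}{\state_{\run}-\simple} \given \filter_{\run}}
	= \braket{\exof{\error_{\run}\given\filter_{\run}}}{\state_{\run}-\simple}
	= \braket{\bias_{\run}}{\state_{\run}-\simple}.
\]
I would then bound the last quantity via the duality pairing between $\linf$ and $\measures(\points)$: since $\abs{\braket{f}{\meas}} \leq \supnorm{f}\,\tvnorm{\meas}$, and since $\state_{\run}$ and $\simple$ are probability measures and hence of unit total variation norm, we get $\abs{\braket{\bias_{\run}}{\state_{\run}-\simple}} \leq \supnorm{\bias_{\run}}\bracks{\tvnorm{\state_{\run}} + \tvnorm{\simple}} \leq 2\bbound_{\run}$ by \eqref{eq:bias}. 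Taking total expectations through the tower property then yields $\exof{\braket{\error_{\run}}{\state_{\run}-\simple}} \leq 2\bbound_{\run}$ for every $\run$.

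Summing this estimate over $\run=\start,\dotsc,\nRuns$ and combining it with the trivial treatment of the remaining two terms inside the expectation of the pathwise bound delivers exactly \eqref{eq:reg-stat-simple}. I do not expect any real obstacle here; the only points that need a little care are (i) stating the $\filter_{\run}$-measurability of $\state_{\run}$ correctly relative to the filtration $\filter_{\run} = \history(\policy_{1},\dotsc,\policy_{\run})$, and (ii) using the total-variation/supremum Hölder inequality on $\measures(\points)\times\linf$ rather than an $L^{2}$ estimate, so that the embedding constant $\coef$ does not enter at this stage — it appears only later, in \cref{thm:reg-stat}, when $\exof{\dnorm{\model_{\run}}^{2}}$ is in turn bounded by $\coef^{2}\mbound_{\run}^{2}$.
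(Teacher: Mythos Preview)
Your proposal is correct and follows essentially the same argument as the paper: take expectations in \cref{prop:reg-simple}, condition the cross-term on $\filter_{\run}$ using the $\filter_{\run}$-measurability of $\state_{\run}$ to reduce $\error_{\run}$ to its bias $\bias_{\run}$, and then bound $\braket{\bias_{\run}}{\state_{\run}-\simple}$ by $\supnorm{\bias_{\run}}\cdot\onenorm{\state_{\run}-\simple}\leq 2\bbound_{\run}$. Your added remark that the embedding constant $\coef$ should not appear at this stage is correct and matches how the paper defers its use to \cref{thm:reg-stat}.
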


\begin{proof}
Simply take expectations over \eqref{eq:reg-simple} and use the fact that
\begin{equation*}
\exof{\braket{\error_{\run}}{\state_{\run} - \simple}}
	= \exof{ \braket{\exof{\error_{\run} \given \filter_{\run}}}{\state_{\run} - \simple} }
	= \exof{ \braket{\bias_{\run}}{\state_{\run} - \simple} \given }
	\leq \exof{ \supnorm{\bias_{\run}} \onenorm{\state_{\run} - \simple}}
	\leq 2 \bbound_{\run}.
\qedhere
\end{equation*}
\end{proof}

We are finally in a position to prove the main static regret guarantee of \cref{alg:DA}:

\regstat*

\begin{proof}
To simplify the proof, we will make the normalizing assumption $\theta(0) = 0$;
if this is not the case, $\theta$ can always be shifted by $\theta(0)$ for this condition to hold.
[Note that \cref{ex:L2,ex:entropy} both satisfy this convention.]

With this in mind, let $\cvx$ be a convex neighborhood of $\point$ in $\points$, and let $\unif_{\cvx} = \leb(\cvx)^{-1} \one_{\cvx}$ denote the (simple) strategy that assigns uniform probability to the elements of $\cvx$ and zero to all other points in $\cvx$.
We then have:
\begin{equation}
\hreg(\unif_{\cvx})
	= \int_{\points} \hdec(\unif_{\cvx})
	= \int_{\points} \hdec(\one_{\cvx} / \leb(\cvx))
	= \int_{\cvx} \hdec(1/\leb(\cvx))
	= \leb(\cvx) \hdec(1/\leb(\cvx))
	= \hvol(\leb(\cvx)).
\end{equation}
Moreover, since $\hreg$ is decomposable and the probability constraint $\int_{\points} \simple = 1$ is symmetric, the minimum of $\hreg$ over $\simples$ will be attained at the uniform strategy $\unif_{\points} = \leb(\points)^{-1} \one_{\points}$.
Thus, with $\simples$ weakly dense in $\dom\hreg$, we obtain
\begin{equation}
\min\hreg
	= \hreg(\unif_{\points})
	= \int_{\points} \hdec(\one_{\points} / \leb(\points))
	= \hvol(\leb(\points)).
\end{equation}
In view of all this, \cref{cor:reg-avg-simple} applied to $\simple=\unif_{\cvx}$ yields
\begin{equation}
\exof{\reg_{\simple}(\nRuns)}
	\leq \frac{\hvol(\leb(\cvx)) - \hvol(\leb(\points))}{\temp_{\nRuns+1}}
		+ 2\sum_{\run=\start}^{\nRuns} \bbound_{\run}
		+ \frac{\coef^{2}}{2\hstr} \sum_{\run=\start}^{\nRuns} \temp_{\run} \mbound_{\run}^{2},
\end{equation}
where we used the fact that $\tvnorm{\cdot} \leq \coef \norm{\cdot}$ so $\dnorm{\cdot} \leq \coef \supnorm{\cdot}$.
The bound \eqref{eq:reg-bound-stat} then follows by combining the above with \cref{lem:point2simple}.

Regarding the bound \eqref{eq:reg-bound-stat-powers}, we first note that this is not a pseudo-regret bound but a bona fide bound for the learner's \emph{expected regret} (so we cannot simply our point-dependent bound over $\point\in\points$).
In light of this, our first step will be to consider a ``uniform'' simple approximant for every $\point\in\points$.
To that end, building on an idea by \citet{BK99a} and \citet{KBTB15}, fix a shrinkage factor $\width>0$ and let $\points_{\width}(\point) = \setdef{\point + \width(\pointalt - \point)}{\pointalt\in\points} \subseteq \points$ denote the homothetic transformation that shrinks $\points$ to a fraction $\width$ of its original size and then transports it to $\point\in\points$.
By construction,
we have $\point \in \points_{\width}(\point) \subseteq \points$
and, moreover,
$\diam(\points_{\width}(\point)) = \width \diam(\points)$ and $\leb(\points_{\width}(\point)) = \width^{\vdim} \leb(\points)$.
Then, letting $\mu_{\point} \defeq \unif_{\points_{\width}(\point)}$ denote the uniform strategy supported on $\points_{\width}(\point)$, we get
\begin{equation}
\label{eq:reg-mean-point2simple}
\exof{\reg(\nRuns)}
	= \exof*{\max_{\point\in\points} \reg_{\point}(\nRuns)}
	\leq \exof*{\max_{\point\in\points} \reg_{\mu_{\point}}(\nRuns)}
		+ \width \lips \diam(\points) \nRuns,
\end{equation}
where, in the last step, we used \cref{lem:point2simple}.

Now, by \cref{prop:reg-simple}, we have
\begin{align}
\reg_{\mu_{\point}}(\nRuns)
	&\leq \frac{\hreg(\mu_{\point}) - \min\hreg}{\temp_{\nRuns+1}}
		+ \sum_{\run=\start}^{\nRuns} \braket{\error_{\run}}{\state_{\run} - \mu_{\point}}
		+ \frac{1}{2\hstr} \sum_{\run=\start}^{\nRuns} \temp_{\run} \dnorm{\model_{\run}}^{2}
	\notag\\
	&\leq \frac{\hvol(\width^{\vdim} \leb(\points)) - \hvol(\leb(\points))}{\temp_{\nRuns+1}}
		+ \sum_{\run=\start}^{\nRuns} \braket{\error_{\run}}{\state_{\run} - \mu_{\point}}
		+ \frac{\coef^{2}}{2\hstr} \sum_{\run=\start}^{\nRuns} \temp_{\run} \supnorm{\model_{\run}}^{2}.
\end{align}
and hence
\begin{equation}
\label{eq:reg-mean-inter}
\exof*{\max_{\point\in\points} \reg_{\mu_{\point}}(\nRuns)}
	\leq \frac{\hvol(\width^{\vdim} \leb(\points)) - \hvol(\leb(\points))}{\temp_{\nRuns+1}}
		+ \exof*{ \max_{\point\in\points} \sum_{\run=\start}^{\nRuns} \braket{\error_{\run}}{\state_{\run} - \mu_{\point}} }
		+ \frac{\coef^{2}}{2\hstr} \sum_{\run=\start}^{\nRuns} \temp_{\run} \mbound_{\run}^{2}.
\end{equation}
Thus, to proceed, it suffices to bound the second term of the above expression.

To do so, introduce the auxiliary process
\begin{equation}
\label{eq:DA-aux}
\daux_{\run+1}
	= \daux_{\run} - \noise_{\run},
\quad
\aux_{\run+1}
	= \mirror(\temp_{\run+1} \daux_{\run+1}),
\end{equation}
with $\aux_{\start} = \state_{\start}$.
We then have
\begin{align}
\label{eq:reg-aux}
\sum_{\run=\start}^{\nRuns} \braket{\error_{\run}}{\state_{\run} - \mu_{\point}}
	&= \sum_{\run=\start}^{\nRuns} \braket{\error_{\run}}{(\state_{\run} - \aux_{\run}) + (\aux_{\run} - \mu_{\point})}
	\notag\\
	&= \sum_{\run=\start}^{\nRuns} \braket{\error_{\run}}{\state_{\run} - \aux_{\run}}
	+ \sum_{\run=\start}^{\nRuns} \braket{\bias_{\run}}{\aux_{\run} - \mu_{\point}}
	+ \sum_{\run=\start}^{\nRuns} \braket{\noise_{\run}}{\aux_{\run} - \mu_{\point}}
\end{align}
so it suffices to derive a bound for each of these terms.
This can be done as follows:
\begin{enumerate}
[leftmargin=2em]
\item
The first term of \eqref{eq:reg-aux} does not depend on $\point$, so we have
\begin{align}
\label{eq:reg-aux1}
\exof*{\max_{\point\in\points} \sum_{\run=\start}^{\nRuns} \braket{\error_{\run}}{\state_{\run} - \aux_{\run}}}
	&= \sum_{\run=\start}^{\nRuns}
		\exof*{\exof{\braket{\error_{\run}}{\state_{\run} - \aux_{\run}} \given \filter_{\run}}}
	\leq 2 \bbound_{\run}
\end{align}
where, in the last step, we used the definition \eqref{eq:bias} of $\bbound_{\run}$ and the bound
\begin{equation}
\braket{\bias_{\run}}{\state_{\run} - \aux_{\run}}
	\leq \onenorm{\state_{\run} - \aux_{\run}} \supnorm{\bias_{\run}}
	\leq 2 \bbound_{\run}.
\end{equation}

\item
The second term of \eqref{eq:reg-aux} can be similarly bounded as
\begin{align}
\label{eq:reg-aux2}
\exof*{\max_{\point\in\points} \sum_{\run=\start}^{\nRuns} \braket{\bias_{\run}}{\aux_{\run} - \mu_{\point}}}
	\leq \exof*{\onenorm{\aux_{\run} - \mu_{\point}} \dnorm{\bias_{\run}}}
	\leq 2 \bbound_{\run}.
\end{align}

\item
The third term is more challenging;
the main idea will be to apply \cref{prop:reg-simple} on the sequnce $-\noise_{\run}$, $\run=\running$, viewed itself as a sequence of virtual payoff functions.
Doing just that, we get:
\begin{align}
\sum_{\run=\start}^{\nRuns} \braket{\noise_{\run}}{\aux_{\run} - \mu_{\point}}
	&\leq \frac{\hreg(\mu_{\point}) - \min\hreg}{\temp_{\nRuns+1}}
		+ \frac{1}{2\hstr} \sum_{\run=\start}^{\nRuns} \temp_{\run} \dnorm{\noise_{\run}}^{2}
	\notag\\
	&\leq \frac{\hvol(\width^{\vdim} \leb(\points)) - \hvol(\leb(\points))}{\temp_{\nRuns+1}}
		+ \frac{\coef^{2}}{2\hstr} \sum_{\run=\start}^{\nRuns} \temp_{\run} \supnorm{\noise_{\run}}^{2}.
\end{align}
Thus, after maximizing and taking expectations, we obtain
\begin{equation}
\label{eq:reg-aux3}
\exof*{\max_{\point\in\points} \braket{\noise_{\run}}{\aux_{\run} - \mu_{\point}}}
	\leq \frac{\hvol(\width^{\vdim} \leb(\points)) - \hvol(\leb(\points))}{\temp_{\nRuns+1}}
		+ \frac{\coef^{2}}{2\hstr} \sum_{\run=\start}^{\nRuns} \temp_{\run} \sdev_{\run}^{2}.
\end{equation}
\end{enumerate}

Therefore, plugging \cref{eq:reg-aux1,eq:reg-aux2,eq:reg-aux3} into \eqref{eq:reg-aux} and substituting the result to \eqref{eq:reg-mean-inter}, we finally get
\begin{equation}
\label{eq:reg-mean-inter2}
\exof*{\max_{\point\in\points} \reg_{\mu_{\point}}(\nRuns)}
	\leq 2\frac{\hvol(\width^{\vdim} \leb(\points)) - \hvol(\leb(\points))}{\temp_{\nRuns+1}}
		+ \frac{\coef^{2}}{2\hstr} \sum_{\run=\start}^{\nRuns} \temp_{\run} (\mbound_{\run}^{2} + \sdev_{\run}^{2}).
\end{equation}
The guarantee \eqref{eq:reg-bound-stat-powers} then follows by taking $\width^{\vdim} \leb(\points) = \nRuns^{-\vdim\dexp}$ for some $\dexp\geq0$
and plugging everything back in \eqref{eq:reg-mean-point2simple}.
\end{proof}

\subsection{Dynamic regret guarantees}

We now turn to the algorithm's dynamic regret guarantees, as encoded by \cref{thm:reg-dyn} (stated below for convenience):

\regdyn*

\begin{proof}[Proof of \cref{thm:reg-dyn}]
As we discussed in the main body of our paper, our proof strategy will be to decompose the horizon of play into $\nBatches$ virtual segments, estimate the learner's regret over each segment, and then compare the learner's regret \emph{per-segment} to the corresponding dynamic regret over said segment.
We stress here again that this partition is only made for the sake of the analysis, and does not involve restarting the algorithm \textendash\ \eg as in \citet{BGZ15}.

To make this precise, we first partition the interval $\runs = \window{1}{\nRuns}$ into $\nBatches$ contiguous segments $\runs_{\iBatch}$, $\iBatch=1,\dotsc,\nBatches$, each of length $\batch$ (except possibly the $\nBatches$-th one, which might be smaller).
More explicitly, take the window length to be of the form $\batch = \ceil{\nRuns^{\qexp}}$ for some constant $\qexp\in[0,1]$ to be determined later.
In this way, the number of windows is $\nBatches = \ceil{\nRuns/\batch} = \Theta(\nRuns^{1-\qexp})$ and the $\iBatch$-th window will be of the form $\runs_{\iBatch} = \window{(\iBatch-1)\batch+1}{\iBatch\batch}$ for all $\iBatch = 1,\dotsc,\nBatches-1$ (the value $\iBatch = \nBatches$ is excluded as the $\nBatches$-th window might be smaller).
For concision, we will denote the learner's static regret over the $\iBatch$-th window as $\reg(\runs_{\iBatch}) = \max_{\point\in\points} \sum_{\run\in\runs_{\iBatch}} \braket{\pay_{\run}}{\dirac_{\point} - \state_{\run}}$ (and likewise for its dynamic counterpart).

To proceed, let $\subruns \subseteq \runs$ be a sub-interval of $\runs$ and write $\sol_{\subruns} \in \argmax_{\point\in\points} \sum_{\runalt\in\subruns} \pay_{\runalt}(\point)$ for any action that is optimal on average over the interval $\subruns$.
To ease notation, we also write $\sol_{\run} \equiv \sol_{\{\run\}} \in \argmax_{\point\in\points} \pay_{\run}(\point)$ for any action that is optimal at time $\run$, and $\sol_{\iBatch} \equiv \sol_{\runs_{\iBatch}}$ for any action that is optimal on average over the $\iBatch$-th window.
Then, for all $\run\in\batch_{\iBatch}$, $\iBatch = \running,\nBatches$, we have
\begin{equation}
\braket{\pay_{\run}}{\dirac_{\sol_{\run}} - \state_{\run}}
	= \braket{\pay_{\run}}{\dirac_{\sol_{\iBatch}} - \state_{\run}}
		+ \bracks{\pay_{\run}(\sol_{\run}) - \pay_{\run}(\sol_{\iBatch})}
\end{equation}
so the learner's dynamic regret over $\runs_{\iBatch}$ can be bounded as
\begin{align}
\label{eq:dyn=reg+var}
\dynreg(\runs_{\iBatch})
	= \sum_{\run\in\runs_{\iBatch}} \braket{\pay_{\run}}{\dirac_{\sol_{\iBatch}} - \state_{\run}}
		+ \sum_{\run\in\runs_{\iBatch}} \bracks{\pay_{\run}(\sol_{\run}) - \pay_{\run}(\sol_{\iBatch})}
	= \reg(\runs_{\iBatch})
		+ \sum_{\run\in\runs_{\iBatch}} \bracks{\pay_{\run}(\sol_{\run}) - \pay_{\run}(\sol_{\iBatch})}.
\end{align}

Following a batch-comparison technique originally due to \citet{BGZ15}, let $\runstart_{\iBatch} = \min\runs_{\iBatch}$ denote the beginning of the $\iBatch$-th window, and let $\sol_{\runstart_{\iBatch}}$ denote a maximizer of the first payoff function encountered in the window $\runs_{\iBatch}$ (this choice could of course be arbitrary).
Thus, given that $\sol_{\iBatch}$ maximizes the per-window aggregate $\sum_{\run\in\runs_{\iBatch}} \pay_{\run}(\point)$, we obtain:
\begin{align}
\label{eq:varbound}
\sum_{\run\in\runs_{\iBatch}} \bracks{\pay_{\run}(\sol_{\run}) - \pay_{\run}(\sol_{\iBatch})}
	&\leq \sum_{\run\in\runs_{\iBatch}} \bracks{\pay_{\run}(\sol_{\run}) - \pay_{\run}(\sol_{\runstart_{\iBatch}})}
	\notag\\
	&\leq \abs{\runs_{\iBatch}} \max_{\run\in\runs_{\iBatch}} \bracks{\pay_{\run}(\sol_{\run}) - \pay_{\run}(\sol_{\runstart_{\iBatch}})}
	\leq 2\batch \tvar[\iBatch],
\end{align}
where we let $\tvar[\iBatch] = \sum_{\run\in\runs_{\iBatch}} \supnorm{\pay_{\run+1} - \pay_{\run}}$.
In turn, combining \eqref{eq:varbound} with \eqref{eq:dyn=reg+var}, we get:
\begin{align}
\dynreg(\runs_{\iBatch})
	\leq \reg(\runs_{\iBatch})
		+ 2 \batch \tvar[\iBatch],
\end{align}
and hence, after summing over all windows:
\begin{align}
\label{eq:dyn=reg+var2}
\dynreg(\nRuns)
	\leq \sum_{\iBatch=\start}^{\nBatches} \reg(\runs_{\iBatch})
		+ 2 \batch \tvar.
\end{align}

Now \cref{thm:reg-stat} applied to the Hedge variant of \cref{alg:DA} readily yields
\begin{equation}
\exof{\reg(\runs_{\iBatch})}
	= \bigoh\parens*{
		(\iBatch\batch)^{\pexp}
		+ \batch^{1-\dexp}
		+ \sum_{\run\in\runs_{\iBatch}} \run^{-\bexp}
		+ \sum_{\run\in\runs_{\iBatch}} \run^{1+2\mexp-\pexp}}
\end{equation}
so, after summing over all windows, we have
\begin{align}
\label{eq:reg-sum}
\sum_{\iBatch=1}^{\nBatches} \exof{\reg(\runs_{\iBatch})}
	&= \bigoh\parens*{
		\batch^{\pexp} \sum_{\iBatch=1}^{\nBatches} \iBatch^{\pexp}
		+ \nBatches\batch^{1-\dexp}
		+ \sum_{\run=\start}^{\nRuns} \run^{-\bexp}
		+ \sum_{\run=\start}^{\nRuns} \run^{2\wexp-\pexp}
		} 
	\notag\\
	&= \bigoh\parens*{
		\batch^{\pexp} \nBatches^{1+\pexp}
		+ \nBatches\batch^{1-\dexp}
		+ \nRuns^{1-\bexp}
		+ \nRuns^{1+2\wexp-\pexp}
		}. 
\end{align}
Since $\batch = \bigoh(\nRuns^{\qexp})$ and $\nBatches = \bigoh(\nRuns/\batch) = \bigoh(\nRuns^{1-\qexp})$, we get
\begin{equation}
\batch^{\pexp} \nBatches^{1+\pexp}
	= \bigoh((\nBatches\batch)^{\pexp} \, \nBatches)
	= \bigoh(\nRuns^{\qexp\pexp} \nRuns^{(1-\qexp)(1+\pexp)})
	= \bigoh(\nRuns^{1+\pexp-\qexp})
\end{equation}
and, likewise
\begin{equation}
\nBatches\batch^{1-\dexp}
	= \bigoh(\nRuns \batch^{-\dexp})
	= \bigoh(\nRuns \nRuns^{-\qexp\dexp})
	= \bigoh(\nRuns^{1-\qexp\dexp}).
\end{equation}
Then, substituting in \eqref{eq:reg-sum} and \eqref{eq:dyn=reg+var2}, we finally get the dynamic regret bound
\begin{equation}
\exof{\dynreg(\nRuns)}
	= \bigoh\parens*{
		\nRuns^{1+\pexp-\qexp}
		+ \nRuns^{1-\qexp\dexp}
		+ \nRuns^{1-\bexp}
		+ \nRuns^{1+2\wexp-\pexp}
		+ \nRuns^{\qexp}\tvar}. 
\end{equation}
To balance the above expression, we take
$\qexp = 2\pexp - 2\wexp$ for the window size exponent (which calibrates the first and fourth terms in the sum above)
and
$\dexp = \bexp/\qexp = \bexp/(2\pexp-2\wexp)$ (for the second and the third).
In this way, we finally obtain
\begin{equation}
\exof{\dynreg(\nRuns)}
	= \bigoh\parens*{
		\nRuns^{1-\bexp}
		+ \nRuns^{1+2\wexp-\pexp}
		+ \nRuns^{2\pexp-2\wexp}\tvar} 
\end{equation}
and our proof is complete.
\end{proof}

\section{Derivations for the bandit framework}
\label{app:bandit}

In this appendix, we aim at deriving guarantees for the Hedge variant of \cref{alg:BDA} using template bounds from \cref{app:regret}. We start by stating preliminary results that are used in the sequel.

\subsection{Preliminary results}
We first present a technical bound for the convex conjugate of the entropic regularizer (more on this below):

\begin{lemma}
\label{lem:logsumexp-bound}
For all $\dual, \dvec \in \dspace$, there exists $\xi \in [0,1]$ such that:
\begin{equation}
\log \left(\int_{\points} \exp(\dual + \dvec)\right) \leq
\log \left(\int_{\points} \exp(\dual)\right) +
\braket{\dvec}{\logit(\dual)} +
\frac{1}{2}\braket{\dvec^2}{\logit(\dual + \xi \dvec)}.
\end{equation}
\end{lemma}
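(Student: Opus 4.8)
The plan is to recognize both sides of the claimed inequality as the Taylor data, at $t=0$ and $t=1$, of the log-partition function along the segment joining $\dual$ to $\dual+\dvec$. Concretely, I would set $\Psi\from[0,1]\to\R$ by $\Psi(t) = \log\int_{\points}\exp(\dual + t\dvec)$. Since $\dual,\dvec\in\linf$ and $\points$ is compact (so $\leb(\points)<\infty$), the integrand $\exp(\dual + t\dvec)$ is bounded above and below by strictly positive constants, uniformly in $t\in[0,1]$; hence $\Psi$ is well-defined and finite, and by dominated convergence it is in fact $C^{2}$ on $[0,1]$, with all derivatives obtained by differentiating under the integral sign.

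Next I would compute these derivatives explicitly. Writing $\logit(\fun) = \exp(\fun)/\int_{\points}\exp(\fun)$ as in \eqref{eq:logit}, a direct calculation gives $\Psi'(t) = \braket{\dvec}{\logit(\dual + t\dvec)}$ and $\Psi''(t) = \braket{\dvec^{2}}{\logit(\dual + t\dvec)} - \braket{\dvec}{\logit(\dual + t\dvec)}^{2}$, where $\dvec^{2}$ denotes the pointwise square of the function $\dvec$. Thus $\Psi''(t)$ is precisely the variance of $\dvec$ under the probability density $\logit(\dual + t\dvec)$; in particular $0 \le \Psi''(t) \le \braket{\dvec^{2}}{\logit(\dual + t\dvec)}$ for every $t\in[0,1]$ (the lower bound being the classical convexity of log-sum-exp, the upper bound being the elementary fact that a variance is at most the second moment).

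Finally I would apply Taylor's theorem with Lagrange remainder on $[0,1]$: there exists $\xi\in[0,1]$ such that $\Psi(1) = \Psi(0) + \Psi'(0) + \tfrac12\Psi''(\xi)$. Substituting $\Psi(1) = \log\int_{\points}\exp(\dual+\dvec)$, $\Psi(0) = \log\int_{\points}\exp(\dual)$, $\Psi'(0) = \braket{\dvec}{\logit(\dual)}$, and then bounding $\Psi''(\xi) \le \braket{\dvec^{2}}{\logit(\dual + \xi\dvec)}$, yields exactly the asserted inequality.

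As for difficulty, there is no deep obstacle here; the proof is a routine second-order Taylor expansion. The only points requiring a little care are (a) justifying differentiation under the integral sign, which follows immediately from the uniform positive lower and upper bounds on $\exp(\dual+t\dvec)$ afforded by $\dual,\dvec\in\linf$ together with $\leb(\points)<\infty$, and (b) keeping the remainder in the stated (slightly loose) form $\tfrac12\braket{\dvec^{2}}{\logit(\dual+\xi\dvec)}$ rather than the exact variance $\tfrac12\Psi''(\xi)$ — this is what makes the bound convenient for the subsequent tightening of \cref{lem:energybound} in the bandit analysis. It is worth remarking explicitly that $\braket{\dvec^{2}}{\logit(\dual+\xi\dvec)} = \int_{\points}\dvec(\point)^{2}\,\logit(\dual+\xi\dvec)(\point)\dd\point \ge 0$, so the correction term is genuinely nonnegative and the inequality is consistent.
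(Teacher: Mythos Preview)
Your proposal is correct and follows essentially the same approach as the paper: define the one-variable function $t\mapsto\log\int_{\points}\exp(\dual+t\dvec)$, apply Taylor's theorem with Lagrange remainder, compute the first derivative as $\braket{\dvec}{\logit(\dual)}$, and bound the second derivative above by $\braket{\dvec^{2}}{\logit(\dual+t\dvec)}$ after discarding the squared first-moment term. Your write-up is in fact a bit more careful than the paper's in justifying differentiation under the integral sign and in making the variance interpretation explicit.
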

\begin{proof}
Consider the function $\phi\from [0,1] \to \R$ with
\(
\phi(t) = \log \left(\int_{\points} \exp(\dual + t\dvec)\right).
\)
By construction, $\phi(0) = \log \left(\int_{\points} \exp(\dual)\right)$ and $\phi(1) = \log \left(\int_{\points} \exp(\dual + \dvec)\right)$.
Thus, by a second-order Taylor expansion with Lagrange remainder, we have:
\begin{equation}
\label{eq:phi-taylor-cauchy}
\phi(1) = \phi(0) + \phi'(0) + \frac{1}{2}\phi''(\xi)
\end{equation}
for some $\xi\in[0,1]$.

Now, for all $t \in [0,1]$, $\phi'(t) = \frac{\int_{\points} \dvec \exp(\dual + t\dvec)}{\int_{\points} \exp(\dual + t\dvec)}$, which in turns gives 
\begin{equation}
\label{eq:first-derivative-phi}
	\phi'(0) = \frac{\int_{\points} \dvec \exp(\dual)}{\int_{\points} \exp(\dual)} = \braket{\dvec}{\logit(\dual)}.
\end{equation}
As for the second order derivative of $\phi$, we have for all $t \in [0,1]$:
\begin{align}
	\phi''(t) &= \frac{\partial}{\partial t} \left[\frac{\int_{\points} \dvec \exp(\dual + t\dvec)}{\int_{\points} \exp(\dual + t\dvec)}\right]
	\notag\\
	&= \frac{\int_\points \dvec^2 \exp(\dual + t\dvec)\int_\points \exp(\dual + t\dvec) - \left(\int_\points \dvec \exp(\dual + t\dvec)\right)^2}{\left(\int_\points \exp(\dual + t\dvec)\right)^2}
	\notag\\
	&\leq \frac{\int_\points \dvec^2 \exp(\dual + t\dvec)\int_\points \exp(\dual + t\dvec)}{\left(\int_\points \exp(\dual + t\dvec)\right)^2}
	= \frac{\int_\points \dvec^2 \exp(\dual + t\dvec)}{\int_\points \exp(\dual + t\dvec)}
\end{align}
Thus, for all $t \in [0,1]$, we get
\begin{equation}
\label{eq:second-derivative-phi}
		\phi''(t) \leq \braket{\dvec^2}{\logit(\dual + t \dvec)}.
\end{equation}
Our claim then follows by injecting \eqref{eq:first-derivative-phi} and \eqref{eq:second-derivative-phi} into \eqref{eq:phi-taylor-cauchy}.
\end{proof}

In the next lemma, we now present an expression of the Fenchel coupling in the specific case of the negentropy regularizer $\hreg(\primal) = \int_{\points}\primal \log\primal$.

\begin{lemma}
\label{lem:fenchel-hedge}
In the case of the negentropy regularizer $\hreg(\primal) = \int_{\points}\primal \log\primal$, the Fenchel coupling for all $\dual \in \dspace$ and $\primal \in \dom\hreg$ is given by
\begin{equation}
\label{eq:fenchel-hedge}
\fench(\primal, \dual) = 
	\int_{\points}\primal \log\primal + 
	\log \left(\int_{\points} \exp(\dual)\right) - 
	\braket{\dual}{\primal}.
\end{equation}
\end{lemma}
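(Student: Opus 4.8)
The statement to be proved expresses the Fenchel coupling for the negentropy regularizer $\hreg(\primal)=\int_{\points}\primal\log\primal$, and by the definition \eqref{eq:Fench} of $\fench$ we have $\fench(\primal,\dual)=\hreg(\primal)+\hconj(\dual)-\braket{\dual}{\primal}$ in complete generality. So the whole statement reduces to the single identity $\hconj(\dual)=\log\bigl(\int_{\points}\exp(\dual)\bigr)$, \ie to the classical log-partition formula for the convex conjugate of negative entropy. The plan is therefore to compute this conjugate explicitly and then substitute.

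First I would recall, via \cref{ex:entropy}, that $\hreg$ is finite exactly on the set of probability densities on $\points$ with finite entropy and is $+\infty$ otherwise, so the supremum defining $\hconj(\dual)=\sup_{\primal\in\vecspace}\{\braket{\dual}{\primal}-\hreg(\primal)\}$ may be restricted to that set. Next, setting $Z=\int_{\points}\exp(\dual)$ — which is finite because $\dual$ is bounded and $\points$ is compact, hence $\exp(\dual)$ is integrable — I would rewrite the objective as a Kullback–Leibler divergence:
\begin{equation}
\braket{\dual}{\primal}-\hreg(\primal)
	= -\int_{\points}\primal\log\frac{\primal}{\exp(\dual)}
	= \log Z - \int_{\points}\primal\log\frac{\primal}{\logit(\dual)}
	= \log Z - \dkl\bigl(\primal\,\|\,\logit(\dual)\bigr),
\end{equation}
using that $\logit(\dual)=\exp(\dual)/Z$ integrates to $1$. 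Since $\dkl\geq0$ with equality iff $\primal=\logit(\dual)$, and since $\logit(\dual)$ is itself a bona fide density with finite entropy (being bounded above and bounded away from $0$ as $\dual\in\linf$), the supremum equals $\log Z$ and is attained. This is of course consistent with the fact, established in \cref{ex:logit} together with \cref{lem:mirror}, that $\mirror(\dual)=\logit(\dual)$ is the maximizer in \eqref{eq:mirror}; indeed, plugging $\primal=\logit(\dual)$ directly into $\braket{\dual}{\primal}-\hreg(\primal)$ and cancelling $\dual$ against $\log\exp(\dual)$ yields $\log Z$ in one line.

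Finally, substituting $\hconj(\dual)=\log\bigl(\int_{\points}\exp(\dual)\bigr)$ back into \eqref{eq:Fench} gives \eqref{eq:fenchel-hedge}. The only genuinely delicate point — and the one I would be most careful about — is the domain bookkeeping: one must verify that $Z$ is finite, that $\logit(\dual)$ lies in $\dom\hreg$ (so the supremum is actually attained rather than merely approached), and that the integral $\int_{\points}\logit(\dual)\log\logit(\dual)$ makes sense. All of this hinges on the standing assumption that the dual variables are bounded (the loss models live in $\linf$) together with the compactness of $\points$; beyond that, the argument is just the Gibbs variational principle.
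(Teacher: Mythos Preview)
Your proposal is correct and follows essentially the same route as the paper: both reduce the claim to computing $\hconj(\dual)=\log\int_{\points}\exp(\dual)$ and then substitute into \eqref{eq:Fench}. The paper simply invokes $\mirror(\dual)=\logit(\dual)$ from \cref{ex:logit} and plugs it in to evaluate $\hconj$, which is exactly your ``one line'' remark; your additional rewriting via $\dkl(\primal\,\|\,\logit(\dual))$ is a self-contained variant of the same computation that also re-proves attainment rather than citing it.
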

\begin{proof}
We remind the general expression of the Fenchel coupling given in \eqref{eq:Fench}:
\begin{equation}
\fench(\primal,\dual)=\hreg(\primal)+\hconj(\dual)-\braket{\dual}{\primal}
	\quad
	\text{for all $\primal \in \dom\hreg,\dual \in \dspace$},
\end{equation}
where $\hconj(\dual) = \max_{\primal\in\vecspace} \{ \braket{\dual}{\primal} - \hreg(\primal) \}$.
In the case of the negentropy regularizer $\hreg(\primal) = \int_\points \primal \log \primal$, we have that
\(
\argmax_{\primal\in\vecspace} \{ \braket{\dual}{\primal} - \hreg(\primal) \}
	= \logit(\dual)
\)
and
\begin{equation}
\label{eq:hconj-for-proof}
	\hconj(\dual) = \braket{\dual}{\logit(\dual)} - \hreg(\logit(\dual)).
\end{equation}
Combining the above, we then get:
\begin{align*}
	\hconj(\dual) &= \int_\points \dual \logit(\dual) - \int_\points \logit(\dual) \log \logit(\dual) \\
	&= \int_\points \dual \logit(\dual) - \int_\points \logit(\dual)\dual + \int_\points \log\left(\int_\points \exp(\dual)\right)\logit(\dual)\\
	&= \log\left(\int_\points \exp(\dual)\right).
\end{align*}
which, combined with \eqref{eq:Fench}, delivers \eqref{eq:fenchel-hedge}.
\end{proof}

Finally we state a result enabling to control the difference between the regret $\reg(\nRuns)$ and $\regalt(\nRuns)$ induced respectively by two policies $\state_\run$ and $\tilde{\state}_\run$ against the same rewards and models. 

\begin{lemma}
\label{lem:regret-alternative-policy}
For $\run=1, \dots, \nRuns$, let $\state_\run$, $\tilde{\state}_\run$ be two policies with respective regret $\reg(\nRuns)$ and $\regalt(\nRuns)$ against a given sequence of models $(\model_\run)_\run$ for the rewards $(\pay_\run)_\run$.
Then:
\begin{equation}
\reg(\nRuns) \leq \regalt(\nRuns) + \sum\limits_{\run = 1}^\nRuns \supnorm{\state_\run - \tilde{\state}_\run}.
\end{equation}
\end{lemma}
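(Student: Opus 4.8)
The plan is to exploit the fact that static regret is an affine functional of the played policy sequence: changing the policy from $\tilde{\state}_\run$ to $\state_\run$ perturbs the regret only through the per‑round pairings $\braket{\pay_\run}{\state_\run - \tilde{\state}_\run}$, and each of these is controlled by how far the two densities lie apart. No property of the update rule, and nothing about the particular models $\model_\run$ beyond boundedness of the payoffs, will be used, so the argument is purely pathwise.

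First I would fix, using the compactness of $\points$ and the continuity of each $\pay_\run$ (so the maximum defining static regret is attained), an action $\point^{\ast} \in \argmax_{\point\in\points}\sum_{\run=1}^{\nRuns}\braket{\pay_\run}{\dirac_{\point} - \state_\run}$ realizing $\reg(\nRuns)$. Decomposing $\dirac_{\point^{\ast}} - \state_\run = (\dirac_{\point^{\ast}} - \tilde{\state}_\run) + (\tilde{\state}_\run - \state_\run)$ and summing over $\run$ gives
\begin{equation*}
\reg(\nRuns)
	= \sum_{\run=1}^{\nRuns} \braket{\pay_\run}{\dirac_{\point^{\ast}} - \tilde{\state}_\run}
		+ \sum_{\run=1}^{\nRuns} \braket{\pay_\run}{\tilde{\state}_\run - \state_\run},
\end{equation*}
and the first sum on the right is at most $\max_{\point\in\points}\sum_{\run=1}^{\nRuns}\braket{\pay_\run}{\dirac_{\point} - \tilde{\state}_\run} = \regalt(\nRuns)$, simply by the definition of $\regalt$ applied to the (generally suboptimal, for $\regalt$) comparator $\point^{\ast}$.

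Second I would bound each cross‑term by a norm of $\state_\run - \tilde{\state}_\run$. In the bandit setting payoffs take values in $[0,1]$, so $\supnorm{\pay_\run} \leq 1$; since $\state_\run$ and $\tilde{\state}_\run$ are probability densities, $\tilde{\state}_\run - \state_\run$ is a signed measure of zero total mass, and the duality pairing between $\linf$ and $\measures(\points)$ — the same Hölder‑type estimate already used in the proof of \cref{cor:reg-avg-simple} — yields $\braket{\pay_\run}{\tilde{\state}_\run - \state_\run} \leq \supnorm{\pay_\run}\,\onenorm{\tilde{\state}_\run - \state_\run} \leq \onenorm{\state_\run - \tilde{\state}_\run}$. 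Substituting into the previous display gives $\reg(\nRuns) \leq \regalt(\nRuns) + \sum_{\run=1}^{\nRuns}\onenorm{\state_\run - \tilde{\state}_\run}$, which is the asserted inequality (if the sup‑norm form is preferred on the right, bound further by $\onenorm{\cdot} \leq \leb(\points)\,\supnorm{\cdot}$; the $L^{1}$ version is what the downstream use of this lemma — bounding the effect of the explicit exploration term $\mix_\run/\leb(\points)$ — actually needs).

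The statement is essentially a triangle inequality, so there is no real obstacle here; the only point deserving a moment's care is the norm bookkeeping in the pairing $\braket{\pay_\run}{\cdot}$ — that $\tilde{\state}_\run - \state_\run$ is a genuine zero‑mass signed measure, against which the bounded payoff $\pay_\run$ integrates with the expected $\supnorm{\pay_\run}$‑times‑$\onenorm{\cdot}$ bound — together with keeping straight which of the two regrets the chosen comparator $\point^{\ast}$ is optimal for.
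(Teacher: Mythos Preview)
Your argument is correct. The paper does not actually prove this lemma; it simply defers to \citet[Chap.~6]{AS19}, so there is no in-paper proof to compare against. Your decomposition via a fixed maximizer $\point^{\ast}$ and the duality estimate $\braket{\pay_\run}{\tilde\state_\run - \state_\run} \leq \supnorm{\pay_\run}\onenorm{\tilde\state_\run - \state_\run}$ is exactly the standard route.

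You are also right to flag the norm bookkeeping: the natural bound yields $\onenorm{\state_\run - \tilde\state_\run}$, not $\supnorm{\state_\run - \tilde\state_\run}$, and the two differ by a factor of $\leb(\points)$ in the direction needed. In the paper's downstream application the pointwise bound $\abs{\tilde\state_\run(\point) - \state_\run(\point)} \leq \mix_\run(1 + 1/\leb(\points))$ is established, which controls the $L^{1}$ norm just as well (and is what is really used), so the discrepancy is cosmetic rather than substantive.
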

\begin{proof}
See \citet[Chap.~6]{AS19}.
\end{proof}

\subsection{Hedge-specific bounds}
We are now ready to adapt the template bound of \cref{lem:template} to the Hedge case. 
\begin{lemma}
\label{lem:template-bandits}
Assuming the regularizer $\hreg$ is the negentropy $\hreg(\primal) = \int_{\points}\primal \log\primal$, and that the mirror map $\mirror$ corresponds to the logit operator $\logit$, there exists $\xi \in [0,1]$ such that, for all $\simple \in \simples$ the policy \eqref{eq:DA} enjoys the bound:
\begin{equation}
	\label{eq:lem-template-bandits}
	\energy_{\run+1}
	\leq \energy_{\run}
		+ \braket{\model_{\run}}{\state_{\run} - \simple}
		+ \parens*{\frac{1}{\temp_{\run+1}} - \frac{1}{\temp_{\run}}} \bracks{\hreg(\simple) - \min\hreg}
		+ \frac{\temp_{\run}}{2} G_\run(\xi)^2.
\end{equation}
where for all $\xi \in [0,1]$, $G_\run(\xi)^2 = \braket{\logit(\eta_\run y_\run + \xi \eta_\run \model_\run)}{\model_\run^2}$.
\end{lemma}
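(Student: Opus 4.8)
The plan is to specialize the general template bound of \cref{lem:template} to the negentropy regularizer and to replace its remainder term $\frac{1}{\temp_{\run}}\fench(\state_{\run},\temp_{\run}\score_{\run+1})$ by the announced quadratic term $\frac{\temp_{\run}}{2}G_{\run}(\xi)^{2}$, leaving everything else in \cref{lem:template} untouched. Since that remainder does not involve $\simple$, the modified inequality still holds for every $\simple\in\simples$, which is exactly \eqref{eq:lem-template-bandits}.

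First I would record the two special features of the negentropy that are already in hand: $\mirror=\logit$ and $\hconj(\dual)=\log\!\int_{\points}\exp(\dual)$ (\cref{ex:logit} and the proof of \cref{lem:fenchel-hedge}). Writing $\dual\defeq\temp_{\run}\score_{\run}$ and $\dvec\defeq\temp_{\run}\model_{\run}$, the ascent recursion in force in this appendix gives $\temp_{\run}\score_{\run+1}=\dual+\dvec$; combining $\state_{\run}=\logit(\dual)$ with the Fenchel--Young equality $\hreg(\state_{\run})-\braket{\dual}{\state_{\run}}=-\hconj(\dual)$ in the definition \eqref{eq:Fench} of the Fenchel coupling then collapses it to
\begin{equation}
\fench(\state_{\run},\temp_{\run}\score_{\run+1})
= \hconj(\dual+\dvec)-\hconj(\dual)-\braket{\dvec}{\logit(\dual)}.
\end{equation}

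The decisive step is then a single application of \cref{lem:logsumexp-bound} to this pair $(\dual,\dvec)$: it furnishes $\xi\in[0,1]$ with $\hconj(\dual+\dvec)\le\hconj(\dual)+\braket{\dvec}{\logit(\dual)}+\frac{1}{2}\braket{\dvec^{2}}{\logit(\dual+\xi\dvec)}$, so that $\fench(\state_{\run},\temp_{\run}\score_{\run+1})\le\frac{1}{2}\braket{\dvec^{2}}{\logit(\dual+\xi\dvec)}$. Re-expanding $\dvec=\temp_{\run}\model_{\run}$ (so that $\dvec^{2}=\temp_{\run}^{2}\model_{\run}^{2}$ with the squaring taken pointwise, and $\dual+\xi\dvec=\temp_{\run}\score_{\run}+\xi\temp_{\run}\model_{\run}$) turns the right-hand side into $\frac{\temp_{\run}^{2}}{2}\braket{\logit(\temp_{\run}\score_{\run}+\xi\temp_{\run}\model_{\run})}{\model_{\run}^{2}}=\frac{\temp_{\run}^{2}}{2}G_{\run}(\xi)^{2}$; dividing by $\temp_{\run}$ and inserting into \cref{lem:template} gives \eqref{eq:lem-template-bandits}.

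The proof has essentially no analytic content of its own --- all of the work is carried by \cref{lem:logsumexp-bound} (the second-order log-sum-exp expansion with Lagrange remainder) and by the closed form of $\hconj$ for the negentropy. The only points that require a little care are bookkeeping: aligning $\temp_{\run}\score_{\run+1}$ with $\dual+\dvec$ under the payoff/ascent sign convention of this appendix, and observing that $\model_{\run}^{2}\in\linf$, so that the pairing $\braket{\logit(\cdot)}{\model_{\run}^{2}}$ against the probability density $\logit(\cdot)$ is well defined and finite (hence $G_{\run}(\xi)^{2}<\infty$ and the bound is meaningful).
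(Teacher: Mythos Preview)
Your proposal is correct and follows essentially the same route as the paper: start from the general template bound of \cref{lem:template}, use the negentropy identities $\mirror=\logit$ and $\hconj=\log\!\int_{\points}\exp(\cdot)$ to rewrite $\fench(\state_{\run},\temp_{\run}\score_{\run+1})$, and then apply \cref{lem:logsumexp-bound} to obtain the $\frac{\temp_{\run}^{2}}{2}G_{\run}(\xi)^{2}$ bound. The paper packages the Fenchel-coupling step as a separate sublemma (\cref{lem:fenchel-bound-hedge}) and invokes \cref{lem:fenchel-hedge} explicitly, whereas you collapse the coupling in one line via the Fenchel--Young equality; the logical content is identical.
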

\begin{proof}
We know from \cref{lem:template-bandits} that the policy \eqref{eq:DA} enjoys the bound:
\begin{equation}
\label{eq:general-template-bound}
	\energy_{\run+1}
	\leq \energy_{\run}
		+ \braket{\model_{\run}}{\state_{\run} - \simple}
		+ \parens*{\frac{1}{\temp_{\run+1}} - \frac{1}{\temp_{\run}}} \bracks{\hreg(\simple) - \min\hreg}
		+ \frac{1}{\temp_{\run}} \fench(\state_{\run},\temp_{\run}\score_{\run+1}).
\end{equation}
The following lemma will help us handle the Fenchel coupling term in \eqref{eq:general-template-bound}
\begin{lemma}
\label{lem:fenchel-bound-hedge}
For a given $\run$ in the policy \eqref{eq:DA}, there exists $\xi \in [0,1]$ such that the following bounds holds:
\begin{equation}
	\fench(\state_\run, \eta_\run y_{\run+1}) \leq \frac{\eta_\run^2}{2}G_\run(\xi)^2.
\end{equation}
\end{lemma}
Injecting the result given in \cref{lem:fenchel-bound-hedge} in \cref{eq:general-template-bound} yields the stated claim.
\end{proof}

Moving forward, we are only left to prove \cref{lem:fenchel-bound-hedge}.
\begin{proof}
Since we are in the case of the negentropy regularizer, \cref{lem:fenchel-hedge} enables to rewrite the Fenchel coupling term of \eqref{eq:general-template-bound} as:
\begin{equation}
	\label{eq:fenchel-hedge-for-proof}
	\fench(\state_{\run},\temp_{\run}\score_{\run+1}) = 
	\int_{\points}\state_{\run} \log\state_{\run} + 
	\log \left(\int_{\points} \exp(\temp_{\run}\score_{\run+1})\right) - 
	\braket{\temp_{\run}\score_{\run+1}}{\state_{\run}}.
\end{equation}
Injecting $\score_{\run + 1} = \score_{\run} + \model_\run$ in \eqref{eq:fenchel-hedge-for-proof} yields:
\begin{align}
	\fench(\state_{\run},\temp_{\run}\score_{\run+1}) &= \int_{\points}\state_{\run} \log\state_{\run} + 
	\log \left(\int_{\points} \exp(\temp_{\run}\score_{\run} + \temp_\run\model_\run)\right) - \braket{\temp_{\run}\score_{\run}}{\state_{\run}} 
	- \braket{\temp_{\run}\model_{\run}}{\state_{\run}}
	\notag\\
	&= \fench(\state_{\run},\temp_{\run}\score_{\run}) + \left(\log \int_{\points} \exp(\temp_{\run}\score_{\run} + \temp_\run\model_\run) - \log \int_{\points} \exp(\temp_{\run}\score_{\run}) \right) - \braket{\temp_{\run}\model_{\run}}{\state_{\run}}
	\notag\\
	&= \log \left(\int_{\points} \exp(\temp_{\run}\score_{\run} + \temp_\run\model_\run)\right) - \log \left(\int_{\points} \exp(\temp_{\run}\score_{\run})\right) - \braket{\temp_{\run}\model_{\run}}{\state_{\run}}
\end{align}
where we used the fact that $\fench(\state_\run, \eta_\run y_\run) = 0$.

Now, by \cref{lem:logsumexp-bound} applied to $\dual \gets \eta_\run \score_\run$ and $\dvec \gets \eta_\run \model_\run$, there exists $\xi \in [0,1]$ such that
\begin{equation}
\label{eq:logsumexp-for-proof}
	\log \left(\int_{\points} \exp(\eta_\run \score_\run + \eta_\run \model_\run)\right) \leq
\log \left(\int_{\points} \exp(\eta_\run \score_\run)\right) +
\eta_\run \braket{\model_\run}{\logit(\eta_\run \score_\run)} +
\frac{\eta_\run^2}{2}\braket{\model_\run^2}{\logit(\eta_\run \score_\run + \xi \eta_\run \model_\run)},
\end{equation}
where we used the fact that $\state_\run = \logit(\eta_\run \score_\run)$.
Our claim then follows by injecting \eqref{eq:logsumexp-for-proof} into our prior expression for the Fenchel coupling $\fench(\state_{\run},\temp_{\run}\score_{\run+1})$ in the case of the Hedge variant.
\end{proof}

\begin{proposition}
\label{prop:regstat-bandits-hedge}
If we run the Hedge variant of \cref{alg:DA}, there exists a sequence $\xi_\run \in [0,1]$ such that:
\begin{equation}
\label{eq:reg-bound-stat-bandits}
\exof{\reg_{\point}(\nRuns)}
	\leq \frac{\log(\leb(\points) / \leb(\cvx))}{\temp_{\nRuns+1}}
	+ \lips\diam(\cvx) \nRuns
	+ 2 \insum_{\run=\start}^{\nRuns} \bbound_{\run}
	+ \frac{1}{2} \insum_{\run=\start}^{\nRuns} \temp_{\run} \exof{G_{\run}(\xi_t)^{2} \given \filter_{\run}},
\end{equation}
where $\cvx$ is a convex neighborhood of $\point$ in $\points$.
\end{proposition}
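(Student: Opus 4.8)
The plan is to mimic the proof of \cref{thm:reg-stat} almost line for line, the only change being that the crude per‑iteration estimate of \cref{lem:energybound} is replaced by the Hedge‑specific refinement \cref{lem:template-bandits}, whose ``noise'' term $\tfrac{\temp_\run}{2\hstr}\dnorm{\model_\run}^2$ is sharpened to $\tfrac{\temp_\run}{2}G_\run(\xi_\run)^2$ for some realization‑dependent $\xi_\run\in[0,1]$ (recall that $\hstr=1$ for the negentropy by Pinsker's inequality). So: first I would fix $\point\in\points$ and a convex neighborhood $\cvx\ni\point$, introduce the uniform simple strategy $\unif_\cvx\defeq\leb(\cvx)^{-1}\one_\cvx$ supported on $\cvx$, and apply \cref{lem:template-bandits} with $\simple\subs\unif_\cvx$ at every stage $\run$, obtaining a sequence $\xi_\run\in[0,1]$ together with the associated energy recursion. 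Substituting the decomposition $\model_\run=\pay_\run+\error_\run$, rearranging exactly as in the proof of \cref{prop:reg-simple}, and telescoping over $\run=\running,\nRuns$ — using $\energy_\run\ge0$ and $\energy_\start=\temp_\start^{-1}\bracks{\hreg(\unif_\cvx)+\hconj(0)}=\temp_\start^{-1}\bracks{\hreg(\unif_\cvx)-\min\hreg}$ — collapses the recursion to $\reg_{\unif_\cvx}(\nRuns)\le\temp_{\nRuns+1}^{-1}\bracks{\hreg(\unif_\cvx)-\min\hreg}+\insum_{\run=\start}^\nRuns\braket{\error_\run}{\state_\run-\unif_\cvx}+\tfrac12\insum_{\run=\start}^\nRuns\temp_\run G_\run(\xi_\run)^2$.

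Next I would take expectations and condition on $\filter_\run$ exactly as in \cref{cor:reg-avg-simple}: since $\state_\run$ is $\filter_\run$‑measurable and $\exof{\error_\run\given\filter_\run}=\bias_\run$, one gets $\exof{\braket{\error_\run}{\state_\run-\unif_\cvx}}=\exof{\braket{\bias_\run}{\state_\run-\unif_\cvx}}\le\exof{\supnorm{\bias_\run}\onenorm{\state_\run-\unif_\cvx}}\le 2\bbound_\run$ (both $\state_\run$ and $\unif_\cvx$ being probability densities, so $\onenorm{\state_\run-\unif_\cvx}\le2$), while the noise term is kept in the form $\tfrac12\insum_{\run=\start}^\nRuns\temp_\run\exof{G_\run(\xi_\run)^2\given\filter_\run}$ — the conditional expectation being the quantity actually controlled in the subsequent bandit analysis, and its outer expectation being absorbed since $\exof{G_\run(\xi_\run)^2}=\exof{\exof{G_\run(\xi_\run)^2\given\filter_\run}}$. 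It then remains to evaluate $\hreg(\unif_\cvx)-\min\hreg$ for the negentropy: with $\hdec(z)=z\log z$ one has $\hvol(z)=z\hdec(1/z)=-\log z$, hence $\hreg(\unif_\cvx)=\leb(\cvx)\hdec(1/\leb(\cvx))=-\log\leb(\cvx)$ and $\min\hreg=\hreg(\unif_\points)=-\log\leb(\points)$ (the symmetric decomposable regularizer being minimized over $\simples$ by the uniform strategy on $\points$), so $\hreg(\unif_\cvx)-\min\hreg=\log\parens{\leb(\points)/\leb(\cvx)}$. Finally, since $\unif_\cvx$ is supported on the convex neighborhood $\cvx$ of $\point$, \cref{lem:point2simple} gives $\reg_\point(\nRuns)\le\reg_{\unif_\cvx}(\nRuns)+\lips\diam(\cvx)\nRuns$; assembling everything yields \eqref{eq:reg-bound-stat-bandits}.

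I do not anticipate any real obstacle — this is a specialization rather than a genuinely new argument. In particular, because the left‑hand side is $\exof{\reg_\point(\nRuns)}$ for a \emph{fixed} $\point$, I avoid entirely the ``uniform simple approximant'' machinery (the homothetic shrinkage and the auxiliary process $\aux_\run$) used in the second half of the proof of \cref{thm:reg-stat} to pass from $\exof{\reg_\point}$ to $\exof{\max_\point\reg_\point}$. The only genuine subtleties are bookkeeping ones: threading the realization‑dependent $\xi_\run\in[0,1]$ through the telescoping and the expectation, and recording that $G_\run(\xi_\run)^2\ge0$ (because $\logit(\cdot)$ is a probability density), so that the per‑step error term is always a bona fide upper bound and never needs to be bounded below.
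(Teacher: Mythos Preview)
Your proposal is correct and follows essentially the same route as the paper's own proof, which simply says to invoke \cref{lem:template-bandits} and then proceed ``exactly as in the proofs of \cref{prop:reg-simple} and \cref{thm:reg-stat}.'' You have spelled out precisely those steps, including the correct evaluation $\hreg(\unif_\cvx)-\min\hreg=\log(\leb(\points)/\leb(\cvx))$ for the negentropy and the correct observation that, since the claim concerns a fixed $\point$, the auxiliary‑process machinery from the second half of \cref{thm:reg-stat} is unnecessary.
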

\begin{proof}
This result is obtained by using the template bound given in \cref{lem:template-bandits}, then by proceeding exactly as in the proofs of \cref{prop:reg-simple} and \cref{thm:reg-stat}.
\end{proof}

We stress here that \cref{prop:regstat-bandits-hedge} \emph{does not correspond} to the Hedge instantiation \cref{thm:reg-stat}.
Indeed, the second order term $\frac{1}{2} \insum_{\run=\start}^{\nRuns} \temp_{\run} \exof{G_{\run}(\xi_t)^{2} \given \filter_{\run}}$ builds on results that are specific to Hedge, and is a priori considerably sharper than $\frac{\coef^{2}}{2\hstr} \insum_{\run=\start}^{\nRuns} \temp_{\run} \mbound_{\run}^{2}$, the second order term of \cref{thm:reg-stat}.

\subsection{Guarantees for \cref{alg:BDA}}
For clarity, we begin by reminding the specific assumptions relative to \cref{alg:BDA}.
In particular, we are still considering throughout a dual averaging policy \eqref{eq:DA} with a negentropy regularizer. We additionally assume that at each round $\run$, we receive a model $\model_\run$ built according to the ``smoothing'' approach described in \cref{sec:bandit} where for all $\run$:
\begin{equation}
\model_{\run}(\point)
	= \kerfun_{\run}(\choice_{\run},\point)
		\cdot \pay_{\run}(\choice_{\run}) / \state_{\run}(\choice_{\run})
\end{equation}
where
$\kerfun_{\run} \from \points\times\points \to \R$ is a (time-varying) \emph{smoothing kernel}, \ie $\int_{\points} \kerfun_{\run}(\point,\pointalt) \dd\pointalt = 1$ for all $\point\in\points$.
For concreteness (and sampling efficiency), we will assume that payoffs now take values in $[0,1]$, and we will focus on simple kernels that are supported on a neighborhood $\nhd_{\width}(\point) = \ball_{\width}(\point) \cap \points$ of $\point$ in $\points$ and are constant therein, \ie $\kerfun^{\width}(\point,\pointalt) = [\leb(\nhd_{\width}(\point))]^{-1} \oneof{\norm{\pointalt - \point} \leq \width}$.\\
we incorporate in $\state_{\run}$ an explicit exploration term of the form $\mix_{\run}/\leb(\points)$.\\

Under these assumptions, we may now bound both the bias and variance terms in \eqref{eq:reg-bound-stat-bandits}.

\begin{lemma}
\label{lem:bias-and-variance-bounds-bandits}
The following inequality holds, where $\lips$ is a uniform Lipschitz coefficient for the reward functions $\pay_\run$ \textpar{as described in \cref{asm:loss}}
\begin{equation}
	\label{eq:bias-bound-bandits}
	\bbound_\run \leq \lips \delta_\run.
\end{equation}
Moreover, there exists a constant $C_\points$ (depending only on the set $\points$) such that:
\begin{align}
	\label{eq:variance-bound-bandits}
	\underset{\xi \in [0,1]}{\sup}\exof{G_{\run}(\xi)^{2} \given \filter_{\run}} &\leq C_\points \delta_\run^{-\vdim} \epsilon_\run^{-1}.
\end{align}
\end{lemma}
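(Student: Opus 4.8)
The plan is to prove the two estimates separately: the bias bound is essentially a one-line computation, while the variance-type bound is the substantive part.

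\emph{Bias.} Since the played action is drawn as $\choice_\run\sim\state_\run$, with $\state_\run=(1-\mix_\run)\logit(\temp_\run\score_\run)+\mix_\run/\leb(\points)$ an everywhere-positive probability density (in particular $\state_\run\ge\mix_\run/\leb(\points)$), conditioning on $\filter_\run$ and integrating out $\choice_\run$ makes the factor $1/\state_\run(\choice_\run)$ in \eqref{eq:kernel} cancel against the sampling density, so that $\exof{\model_\run(\point)\given\filter_\run}=\int_\points\kerfun^{\width_\run}(\choice,\point)\,\pay_\run(\choice)\dd\choice$. Because $\kerfun^{\width_\run}$ is normalized and localized within distance $\width_\run$, the right-hand side is a $\width_\run$-local average of $\pay_\run$ around $\point$; using that $\pay_\run$ is $\filter_\run$-measurable and $\lips$-Lipschitz (\cref{asm:loss}), this gives $\abs{\bias_\run(\point)}=\abs{\exof{\model_\run(\point)\given\filter_\run}-\pay_\run(\point)}\le\lips\width_\run$, and the supremum over $\point\in\points$ yields $\bbound_\run\le\lips\width_\run$. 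The only bookkeeping is the normalization of $\kerfun^{\width_\run}$ near $\bd\points$, which is handled by the convexity of $\points$.

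\emph{Variance.} On the event $\choice_\run=\choice$, $\model_\run$ is a nonnegative function supported on $\nhd_{\width_\run}(\choice)$ and bounded there by $\bar M(\choice)=\bigoh\bigl(1/(\width_\run^{\vdim}\state_\run(\choice))\bigr)$ — the $\width_\run^{-\vdim}$ comes from the kernel normalization, using $\pay_\run\le1$ and the convexity lower bound $\leb(\nhd_{\width}(\cdot))\ge c_\points\width^{\vdim}$. Hence $G_\run(\xi)^2\big|_{\choice}\le\bar M(\choice)^2\,\logit\bigl(\temp_\run\score_\run+\xi\temp_\run\model_\run\bigr)(\nhd_{\width_\run}(\choice))$, and I would control the perturbed Gibbs mass by the elementary domination inequality $\logit(f+g)\le e^{\supnorm{g}}\,\logit(f)$, valid for any $g\ge0$ (a nonnegative extra exponent inflates the numerator by at most $e^{\supnorm{g}}$ and never shrinks the normalizing integral), followed by $\state_\run\ge(1-\mix_\run)\,\logit(\temp_\run\score_\run)$; this yields $G_\run(\xi)^2\big|_{\choice}\le\frac{e^{\xi\temp_\run\bar M(\choice)}}{1-\mix_\run}\,\bar M(\choice)^2\,\state_\run(\nhd_{\width_\run}(\choice))$, where $\state_\run(\nhd_{\width_\run}(\choice))\defeq\int_{\nhd_{\width_\run}(\choice)}\state_\run$. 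The exponent $\xi\temp_\run\bar M(\choice)=\bigoh\bigl(\temp_\run/(\width_\run^{\vdim}\mix_\run)\bigr)$ stays bounded along the schedules of \cref{alg:BDA}, so that factor is harmless.

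\emph{Closing the bound.} Taking expectations over $\choice_\run\sim\state_\run$ cancels one power of $\state_\run(\choice)$, leaving $\exof{G_\run(\xi)^2\given\filter_\run}\lesssim\int_\points\frac{\state_\run(\nhd_{\width_\run}(\choice))}{\width_\run^{2\vdim}\,\state_\run(\choice)}\dd\choice$ up to constants. Writing $\state_\run(\nhd_{\width_\run}(\choice))=\int_\points\one_{\nhd_{\width_\run}(\choice)}(\point)\,\state_\run(\point)\dd\point$ and applying Tonelli together with the symmetry $\{\choice:\point\in\nhd_{\width_\run}(\choice)\}=\nhd_{\width_\run}(\point)$, the double integral collapses to $\int_\points\state_\run(\point)\bigl[\int_{\nhd_{\width_\run}(\point)}\frac{\dd\choice}{\state_\run(\choice)}\bigr]\dd\point$; bounding $\state_\run\ge\mix_\run/\leb(\points)$ on this single remaining power, $\leb(\nhd_{\width_\run}(\point))\le\beta_\vdim\width_\run^{\vdim}$ (volume of a radius-$\width_\run$ ball), and $\int_\points\state_\run=1$, everything collapses to $C_\points\,\width_\run^{-\vdim}\mix_\run^{-1}$, uniformly in $\xi\in[0,1]$. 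I expect the Gibbs-mass step to be the main obstacle: the perturbation $\xi\temp_\run\model_\run$ sits \emph{inside} the softmax and cannot simply be pulled out, and crude estimates (discarding the Gibbs mass and keeping $\model_\run^2$ directly, or bounding $\logit$ by $1/\state_\run$) lose an extra factor $\width_\run^{-\vdim}$; it is precisely the domination inequality combined with the Fubini collapse — and, within it, the convexity of $\points$ through $\leb(\nhd_{\width})\gtrsim\width^{\vdim}$ and the mutual compatibility of the three schedules $(\temp_\run,\width_\run,\mix_\run)$ — that recovers the sharp exponent.
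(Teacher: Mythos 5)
Your proof is correct and, for the most part, follows the same route as the paper: the bias bound via cancellation of $1/\state_\run(\choice_\run)$ against the sampling density plus the Lipschitz property of $\pay_\run$ over the $\width_\run$-ball, and the variance bound via one $\state_\run$-cancellation, the exploration floor $\state_\run \geq \mix_\run/\leb(\points)$, the two-sided estimate $\leb(\nhd_{\width}(\cdot)) \asymp \width^{\vdim}$ from convexity, and a Tonelli swap that trades one factor $\width_\run^{-\vdim}$ for the volume of the ball. The one genuine divergence is your treatment of the perturbed softmax factor $\logit(\temp_\run\score_\run + \xi\temp_\run\model_\run)$. The paper never compares it to $\state_\run$ at all: it simply carries this density through the Fubini swap and uses $\int_{\points}\logit(\cdot) = 1$ as the very last step, which yields a constant depending only on $\points$ with \emph{no} condition on the schedules. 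You instead dominate $\logit(f+g) \leq e^{\supnorm{g}}\logit(f)$ (a correct inequality) and then $\logit(\temp_\run\score_\run) \leq \state_\run/(1-\mix_\run)$, which injects the extra factor $e^{\xi\temp_\run\supnorm{\model_\run}} = e^{\bigoh(\temp_\run\width_\run^{-\vdim}\mix_\run^{-1})}$. As you note, this is bounded for the optimized schedules of \cref{prop:reg-stat-BDA}; but it is \emph{not} bounded by a constant depending only on $\points$ for arbitrary $(\temp_\run,\width_\run,\mix_\run)$, so as written you prove a conditional version of \eqref{eq:variance-bound-bandits} whose constant is schedule-dependent (and the lemma is invoked in \cref{prop:reg-stat-BDA,prop:reg-dyn-BDA} for general exponents before the final tuning). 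Your own diagnosis that the domination inequality is "the main obstacle" is therefore slightly off: the cleanest argument avoids that step entirely, since the Gibbs density can be integrated out to $1$ after Fubini without ever being compared to $\state_\run$. Everything else, including the exponent bookkeeping $\width_\run^{-2\vdim}\cdot\width_\run^{\vdim}\cdot\mix_\run^{-1} = \width_\run^{-\vdim}\mix_\run^{-1}$, matches the paper.
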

Note that bounding the second order term of \cref{thm:reg-stat} under the same assumptions would have yielded a $\delta_\run^{-2\vdim}$ factor instead of $\delta_\run^{-\vdim}$, which is a strictly weaker result!
\begin{proof}
We first prove \eqref{eq:bias-bound-bandits}.
Using the fact that
\(
\pay_\run(\point) = \int_\points \pay_\run(\point)\kerfun_{\run}(\choice_{\run},\point)d\choice_\run,
\)
we obtain:
\begin{align}
\abs{\exof{\model_\run(\point) - \pay_\run(\point) \given \filter_{\run}}}
	&= \abs*{
		\int_\points \kerfun_{\run}(\choice_{\run},\point)
		\frac{\pay_{\run}(\choice_{\run})}{\state_{\run}(\choice_{\run})} \state_{\run}(\choice_{\run}) d\choice_\run - \int_\points \pay_\run(\point)\kerfun_{\run}(\choice_{\run},\point) d\choice_\run
		}
		\notag\\
		&= \abs*{
		\int_\points(\pay_{\run}(\choice_{\run}) - \pay_\run(\point) )\kerfun_{\run}(\choice_{\run},\point)d\choice_\run
		}
		\notag\\
		&= [\leb(\nhd_{\width_\run}(\point))]^{-1}
			\abs*{
			\int_\points \oneof{\norm{\pointalt - \point} \leq \width_\run}(\pay_{\run}(\choice_{\run}) - \pay_\run(\point) ) d\choice_\run
			}
		\notag\\
		&\leq \lips [\leb(\nhd_{\width_\run}(\point))]^{-1}
			\underbrace{\int_\points \oneof{\norm{\pointalt - \point} \leq \width_\run}(\pay_{\run}\norm{\choice_{\run}) - \pay_\run(\point)}}_{\leq \leb(\nhd_{\width_\run}(\point)) \width_\run}
		\leq \lips \width_\run.
\end{align}
This bound is uniform (does not depend on the point $\point$), and thus implies the stated inequality for $\bbound_\run$.

We now turn to \eqref{eq:variance-bound-bandits}.
To that end, let $\xi \in [0,1]$. We will prove a uniform bound on $\exof{G_{\run}(\xi)^{2} \given \filter_{\run}}$. As a preliminary it is capital to note that, $\points$ being convex compact, there exists constants $C_\points^M$ and $C_\points^m$ such that for all $\point \in \points$,
$$
C_\points^m \delta_\run^\vdim \leq [\leb(\nhd_{\width_\run}(\point))] \leq C^M_\points \delta_\run^\vdim.
$$
Now, using $G_\run(\xi)^2 = \braket{\logit(\eta_\run y_\run + \xi \eta_\run \model_\run)}{\model_\run^2}$ and $\model_{\run}(\point) = \kerfun_{\run}(\choice_{\run},\point) \cdot \pay_{\run}(\choice_{\run}) / \state_{\run}(\choice_{\run})$, we may write:
\begin{align}
\exof*{G_{\run}(\xi)^{2} \given \filter_{\run}}
	&= \exof*{\int_\points \logit(\eta_\run y_\run + \xi \eta_\run \model_\run)(\point) \kerfun_{\run}(\choice_{\run},\point)^2
		 \frac{\pay_{\run}(\choice_{\run})^2}{\state_{\run}(\choice_{\run})^2} d\point \given \filter_{\run}}
	\notag\\
	&\leq \int_\points 	\frac{\overbrace{\pay_{\run}(\pointalt)^2}^{\leq 1}}{\state_{\run}(\pointalt)^2} \state_\run(\pointalt) \left(\int_\points \logit(\eta_\run y_\run + \xi \eta_\run \model_\run)(\point) \kerfun_{\run}(\pointalt,\point)^2 d\point
	\right)d\pointalt
	\notag\\
	&\leq \int_\points \frac{1}{\underbrace{\state_{\run}(\pointalt)}_{\geq \mix_{\run}/\leb(\points)}} \underbrace{[\leb(\nhd_{\width_\run}(\pointalt))]^{-2}}_{\leq (C_\points^m)^2 \width^{-2\vdim}}\left(\int_\points \logit(\eta_\run y_\run + \xi \eta_\run \model_\run)(\point) \oneof{\norm{\pointalt - \point} \leq \width_\run} d\point
	\right)d\pointalt
	\notag\\
	&\leq \frac{\leb(\points)}{(C_\points^m)^2 \width_\run^{2\vdim}\mix_\run} \int_\points \left(\int_\points \logit(\eta_\run y_\run + \xi \eta_\run \model_\run)(\point) \oneof{\norm{\pointalt - \point} \leq \width_\run} d\point
	\right)d\pointalt
	\notag\\
	&=\frac{\leb(\points)}{(C_\points^m)^2 \width_\run^{2\vdim}\mix_\run} \int_\points \left(\int_\points \logit(\eta_\run y_\run + \xi \eta_\run \model_\run)(\point) \oneof{\norm{\pointalt - \point} \leq \width_\run} d\pointalt
	\right)d\point \ \ \text{(Fubini)}
	\notag\\
	&= \frac{\leb(\points)}{(C_\points^m)^2 \width_\run^{2\vdim}\mix_\run} \int_\points \logit(\eta_\run y_\run + \xi \eta_\run \model_\run)(\point) \underbrace{\left(\int_\points\oneof{\norm{\pointalt - \point} \leq \width_\run} d\pointalt
	\right)}_{= \leb(\nhd_{\width_\run}(\point)) \leq C_\points^M \width_t^\vdim \leb(\points)} d\point
	\notag\\
	&\leq \frac{\leb(\points)}{(C_\points^m)^2 \width_\run^{2\vdim}\mix_\run} C_\points^M \width_t^\vdim \underbrace{\left(\int_\points \logit(\eta_\run y_\run + \xi \eta_\run \model_\run)(\point) d\point \right)}_{=1}
	\notag\\
	&= \left(\frac{\leb(\points) C_\points^M}{(C_\points^m)^2}\right) \width_\run^{-\vdim}\mix_\run^{-1}.
\end{align}
This bound depends only on $\points$, and is notably independent on $\xi \in [0,1]$. The result \eqref{eq:variance-bound-bandits} follows directly.
\end{proof}

We are now ready to prove \cref{prop:reg-stat-BDA} and \cref{eq:reg-dyn-BDA}.\\

\regstatbandits*

\begin{proof}
Let us consider a slight modification of \cref{alg:BDA} in which
\begin{itemize}
	\item
	The models $(\model_\run)$ received by the learner are the same models than those generated by running \cref{alg:BDA},
	\item
	At each round $\run$, the action $\choice_\run$ is sampled according to $\tilde{\state}_\run = \logit(\eta_\run y_\run)$ (without taking into account the explicit exploration term).
\end{itemize}
The regret of this algorithm may be bounded using the Hedge template bound stated in \cref{prop:regstat-bandits-hedge}, since we are indeed considering the regret induced by Hedge against the sequence of reward models $\model_\run$\footnote{Even though these models were generated by \cref{alg:BDA}, which does not exactly corresponds to Hedge}.
Then, writing $\widetilde{\reg}(\nRuns)$ for the regret induced by the policy $(\tilde{\state}_\run)_\run$, we get
\begin{equation}
	\label{eq:alg2-hedge-bound-1}
	\exof{\widetilde{\reg}_{\point}(\nRuns)}
	\leq \frac{\log(\leb(\points) / \leb(\cvx))}{\temp_{\nRuns+1}}
	+ \lips\diam(\cvx) \nRuns
	+ 2 \insum_{\run=\start}^{\nRuns} \bbound_{\run}
	+ \frac{1}{2} \insum_{\run=\start}^{\nRuns} \temp_{\run} \exof{G_{\run}(\xi_t)^{2} \given \filter_{\run}}.
\end{equation}
Using the bounds presented in \cref{lem:bias-and-variance-bounds-bandits} we then get:
\begin{equation}
	\label{eq:alg2-hedge-bound-2}
	\exof{\widetilde{\reg}_{\point}(\nRuns)}
	\leq \frac{\log(\leb(\points) / \leb(\cvx))}{\temp_{\nRuns+1}}
	+ \lips\diam(\cvx) \nRuns
	+ 2 \lips \insum_{\run=\start}^{\nRuns} \width_\run
	+ \frac{1}{2} C_\points \insum_{\run=\start}^{\nRuns} \temp_{\run} \delta_\run^{-\vdim} \epsilon_\run^{-1}.
\end{equation}

We are however interested in guarantees for \cref{alg:BDA}, in which we play with the policy $(\state_\run)_\run$, which slightly differs from the Hedge policy $(\tilde{\state}_\run)_\run$.
To that end, \cref{lem:regret-alternative-policy} enables us to bound the difference between the regrets $\reg{\nRuns}$ and $\widetilde{\reg}(\nRuns)$, induced by $(\state_\run)_\run$ and $(\tilde{\state}_\run)_\run$ respectively. Namely we can write:
\begin{equation}
	\label{eq:alg2-and-hedge-regret-control-1}
	\reg{\nRuns} \leq \widetilde{\reg}(\nRuns) + \sum_{\run = 1}^T \supnorm{\tilde{\state}_\run - \state_\run}.
\end{equation}

For any $\run \geq 1$, $\point \in \points$, we have
\begin{align}
\abs*{\tilde{\state}_\run(\point) - \state_\run(\point)}
	= \abs*{\tilde{\state}_\run(\point) - (1 - \mix_\run)\tilde{\state}_\run(\point) - \frac{\mix_\run}{\leb(\points)}}
	= \mix_\run \cdot \abs*{\tilde{\state}_\run - 1 / \leb(\points)}
	\leq \mix_\run \left(1 + \frac{1}{\leb(\points)}\right).
\end{align}
Injecting this in \eqref{eq:alg2-and-hedge-regret-control-1} we get
\begin{equation}
	\label{eq:alg2-and-hedge-regret-control-2}
	\reg{\nRuns} \leq \widetilde{\reg}(\nRuns) + \left(1 + \frac{1}{\leb(\points)}\right)\sum_{\run = 1}^T \epsilon_\run.
\end{equation}
Finally, combining \eqref{eq:alg2-and-hedge-regret-control-2} with \eqref{eq:alg2-hedge-bound-2} yields:
\begin{align}
	\label{eq:alg2-regret-bound-1}
\exof{\widetilde{\reg}_{\point}(\nRuns)}
	&\leq \frac{\log(\leb(\points) / \leb(\cvx))}{\temp_{\nRuns+1}}
	+ \lips\diam(\cvx) \nRuns
	\notag\\
	&+ 2 \lips \insum_{\run=\start}^{\nRuns} \width_\run
	+ \frac{C_\points}{2} \insum_{\run=\start}^{\nRuns} \temp_{\run} \delta_\run^{-\vdim} \epsilon_\run^{-1} + \left(1 + \frac{1}{\leb(\points)}\right)\insum_{\run = 1}^\nRuns \epsilon_\run.
\end{align}

Now, using the same reasoning as in the proof of \cref{thm:reg-stat} with regards to the set $\cvx$, and using $\temp_{\run} \propto 1/\run^{\pexp}$, $\width_{\run} \propto 1/\run^{\wexp}$ and $\mix_{\run} \propto 1/\run^{\bexp}$ straightforwardly gives:
$$
\exof{\reg(\nRuns)}
= \bigoh(\nRuns^{\pexp} + \nRuns^{1-\wexp} + \nRuns^{1-\bexp} + \nRuns^{1+\vdim\wexp+\bexp-\pexp}).
$$
Finally, $\pexp=(\vdim+2)/(\vdim+3)$ and $\mexp = \bexp = 1/(\vdim+3)$ gives the optimal bound:
$$
\exof{\reg(\nRuns)} = \bigoh(\nRuns^{\frac{\vdim+2}{\vdim+3}}).
$$
\end{proof}

\regdynbandits*

\begin{proof}
We use the same virtual segmentation as in the proof of \cref{thm:reg-dyn}.
As a reminder, this means that we partition the interval $\runs = \window{1}{\nRuns}$ into $\nBatches$ contiguous segments $\runs_{\iBatch}$, $\iBatch=1,\dotsc,\nBatches$, each of length $\batch$ (except possibly the $\nBatches$-th one, which might be smaller).
More explicitly, take the window length to be of the form $\batch = \ceil{\nRuns^{\qexp}}$ for some constant $\qexp\in[0,1]$ to be determined later.
In this way, the number of windows is $\nBatches = \ceil{\nRuns/\batch} = \Theta(\nRuns^{1-\qexp})$ and the $\iBatch$-th window will be of the form $\runs_{\iBatch} = \window{(\iBatch-1)\batch+1}{\iBatch\batch}$ for all $\iBatch = 1,\dotsc,\nBatches-1$ (the value $\iBatch = \nBatches$ is excluded as the $\nBatches$-th window might be smaller).
For concision, we will denote the learner's static regret over the $\iBatch$-th window as $\reg(\runs_{\iBatch}) = \max_{\point\in\points} \sum_{\run\in\runs_{\iBatch}} \braket{\pay_{\run}}{\dirac_{\point} - \state_{\run}}$ (and likewise for its dynamic counterpart).

Following the proof of \cref{thm:reg-dyn} up to \eqref{eq:dyn=reg+var2}, we can still write in our bandit setting:
\begin{align}
\label{eq:dyn=reg+var2-bandits}
\dynreg(\nRuns)
	\leq \sum_{\iBatch=\start}^{\nBatches} \reg(\runs_{\iBatch})
		+ 2 \batch \tvar.
\end{align}

Now \cref{prop:reg-stat-BDA} applied to the Hedge variant of \cref{alg:BDA} readily yields
\begin{equation}
\exof{\reg(\runs_{\iBatch})}
	= \bigoh\parens*{
		(\iBatch\batch)^{\pexp}
		+ \sum_{\run\in\runs_{\iBatch}} \run^{-\bexp}
		+ \sum_{\run\in\runs_{\iBatch}} \run^{-\mexp}
		+ \sum_{\run\in\runs_{\iBatch}} \run^{\bexp+\vdim\mexp-\pexp}}
\end{equation}
so, after summing over all windows, we have
\begin{align}
\label{eq:reg-sum-bandits}
\sum_{\iBatch=1}^{\nBatches} \exof{\reg(\runs_{\iBatch})}
	&= \bigoh\parens*{
		\batch^{\pexp} \sum_{\iBatch=1}^{\nBatches} \iBatch^{\pexp}
		+ \sum_{\run=\start}^{\nRuns} \run^{-\bexp}
		+ \sum_{\run=\start}^{\nRuns} \run^{-\mexp}
		+ \sum_{\run=\start}^{\nRuns} \run^{\bexp +\vdim \mexp-\pexp}
		} 
	\notag\\
	&= \bigoh\parens*{
		\batch^{\pexp} \nBatches^{1+\pexp}
		+ \nRuns^{1-\bexp}
		+ \nRuns^{1-\mexp}
		+ \nRuns^{1+\bexp+\vdim\mexp-\pexp}
		}. 
\end{align}
Since $\batch = \bigoh(\nRuns^{\qexp})$ and $\nBatches = \bigoh(\nRuns/\batch) = \bigoh(\nRuns^{1-\qexp})$, we get
\begin{equation}
\batch^{\pexp} \nBatches^{1+\pexp}
	= \bigoh((\nBatches\batch)^{\pexp} \, \nBatches)
	= \bigoh(\nRuns^{\qexp\pexp} \nRuns^{(1-\qexp)(1+\pexp)})
	= \bigoh(\nRuns^{1+\pexp-\qexp}).
\end{equation}
Then, substituting in \eqref{eq:reg-sum-bandits} and \eqref{eq:dyn=reg+var2-bandits}, we finally get the dynamic regret bound
\begin{equation}
\exof{\dynreg(\nRuns)}
	= \bigoh\parens*{
		\nRuns^{1+\pexp-\qexp}
		+ \nRuns^{1-\bexp}
		+ \nRuns^{1-\mexp}
		+ \nRuns^{1+\bexp+\vdim\mexp-\pexp}
		+ \nRuns^{\qexp}\tvar}. 
\end{equation}
To balance the above expression, we take
$\qexp = 2\pexp - \bexp - \vdim \mexp$ for the window size exponent (which calibrates the first and fourth terms in the sum above).
In this way, we finally obtain
\begin{equation}
\exof{\dynreg(\nRuns)}
	= \bigoh\parens*{
		\nRuns^{1+\vdim\wexp+\bexp-\pexp}
		+ \nRuns^{1-\bexp}
		+ \nRuns^{1-\wexp}
		+ \nRuns^{2\pexp-\vdim\wexp-\bexp}\tvar} 
\end{equation}
and our proof is complete.
\end{proof}

\section{Numerical experiments}
\label{app:numerics}


Our aim in this appendix is to provide some numerical illustrations of the theory presented in the rest of our paper.
All numerical experiments were run on a machine with 48 CPUs (Intel(R) Xeon(R) Gold 6146 CPU @ 3.20GHz), with 2 Threads per core, and 500Go of RAM.
For a simulation horizon of $\nRuns=2\times 10^{5}$, we choose a reward function $[0,1]$ that is a linear combination of trigonometric terms with different frequencies and amplitudes, arbitrarily drawn.
Because of this analytic expression, we are able to calculte the learner's best action in hindsight (or instantaneously) and plot the relevant regret curves.

For illustration purposes, we compared $2$ strategies, called ``Grid'' and ``Kernel''.
The ``Kernel'' method is as outlined in \cref{sec:bandit} (\cf \cref{alg:BDA}) with parameters described below.
The ``Grid'' method involves partitioning the search space into a grid of a given mesh-size (a hyperparameter of the algorithm), and then treating the problem as a finite-armed bandit;
in particular, the ``Grid'' strategy employs the EXP3 algorithm \citep{ACBF02} with rewards sampled at the grid points.

In \cref{fig:regret}, we plot the mean regret for both algorithms, with different hyperparameters, over $\nRuns$ iterations.
The confidence intervals are represented by the shaded areas, which corresponds to the mean value of the regret modulated by the standard deviation of our sample runs of each algorithm (computed on 92 initialization seeds for sampling, kept constant across different runs for control validation).

\begin{figure}[tbp]
    \centering
    \includegraphics[width=0.7\textwidth]{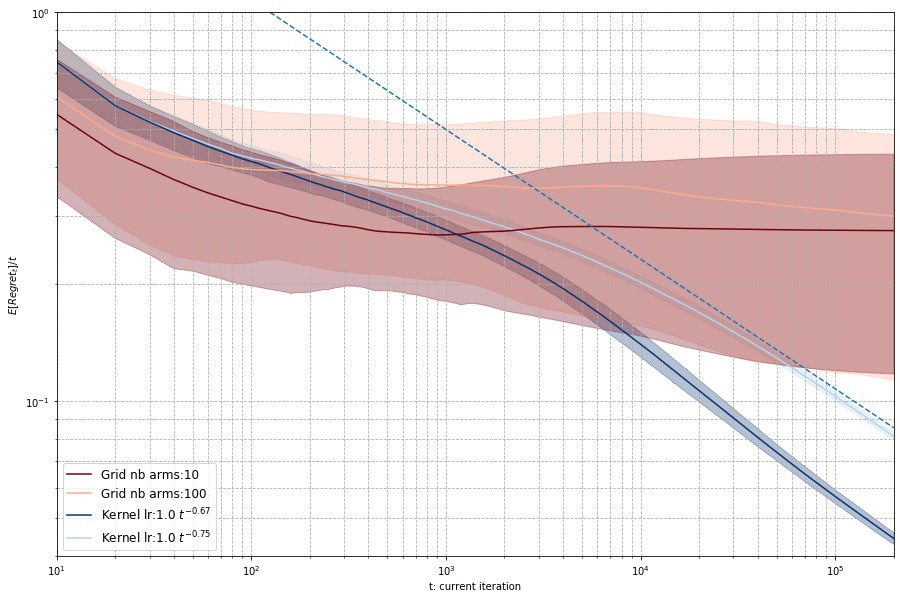}
    \caption{\textbf{Expected average regret}, averaged on 92 realizations for each algorithm (solid line). The variance is presented (shaded area) where we add and remove the standard deviation (computed on the 92 seeds) from the mean. Finally, the theoretical regret bound is displayed (dashed line).}
\label{fig:regret}
\end{figure}

\begin{figure}[tbp]
    \centering
    \includegraphics[width=0.9\textwidth]{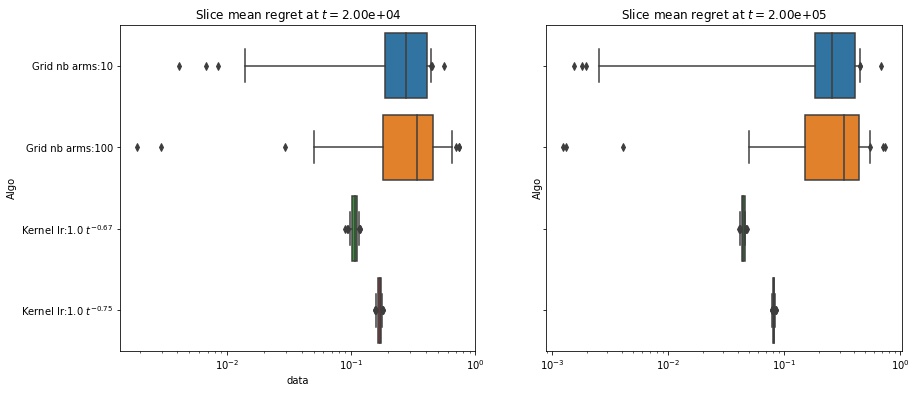}
    \caption{\textbf{Two slices of the mean regret}, averaged on 92 realizations for each algorithm (solid line). Whisker at 5-95\% CI , boxes at 25-75\% CI and median displayed with vertical bars.}
\label{fig:slices}
\end{figure}

The dashed line represent in the figure corresponds to the theoretical regret bound of $\nRuns^{-\frac{1}{3}}$, which is the expected regret bound of the Kernel algorithm mean regret (without explicit exploration in our case).
For performance evaluation purposes, we ``slice'' different snapshots of the regret in \cref{fig:slices} at iteration counts $2\times 10^{5}$ and $2\times 10^{5}$.
In both cases, we observe a dramatic drop in variance for the Kernel algorithm relative to the Grid strategy, with a fixed number of arms uniformly cut beforehand;
we also note that the performance of the Kernel method approaches the theoretical slope of $\nRuns^{-1/3}$ that characterizes the Kernel method.

By contrast, the mean regret for the Grid approach seems to converge to a finite value which indicates a much slower regret minimization rate;
on the other hand, the mean regret of the Kernel method converges to $0$ at the anticipated rate.

\bibliographystyle{plainfull}
\bibliography{IEEEabrv,bibtex/Bibliography-PM}

\end{document}